\documentclass{article}
\pdfpagewidth=8.5in
\pdfpageheight=11in
% Use the postscript times font!
\usepackage{times}

\usepackage{soul}
\usepackage{url}
\usepackage[hidelinks]{hyperref}
\usepackage[utf8]{inputenc}
\usepackage[small]{caption}
\usepackage{graphicx}
\usepackage{amsmath}
\usepackage{booktabs}
\urlstyle{same}

\usepackage{amsthm}

% the following package is optional:
%\usepackage{latexsym}

\usepackage{cite}
\usepackage{natbib}
\usepackage{misc0}
 \usepackage{algorithm}
\usepackage{algorithmic}
 
\usepackage{tikz}
 \definecolor{Black}  {RGB}{0,0,0}
\usepackage{thm-restate} 
%\usepackage{tcolorbox}

% Use the postscript times font!
%\usepackage{times}
%\usepackage{soul}
\usepackage{url}
%~ \usepackage[hidelinks]{hyperref}
\usepackage[utf8]{inputenc} % usually not needed (loaded by default)
\usepackage[T1]{fontenc}
\theoremstyle{definition}
\newtheorem{theorem}{Theorem}
\newtheorem{example}[theorem]{Example}
\newtheorem{definition}[theorem]{Definition}

\newtheorem{claim}[theorem]{Claim}

\newcommand{\safe}{safe\xspace} 
\newcommand{\atom}{\ensuremath{p}\xspace} 
\newcommand{\dependencygraph}{dependency graph\xspace}

%\newcommand{\e}{\ensuremath{\mathsf{e}}\xspace}

%History

% Hypothesis

%Background knowledge
\newcommand{\BK}{\Bmc}
%antecedent of a horn formula
\newcommand{\ant}[1]{{\sf ant}(#1)}
%consequent of a horn formula
\newcommand{\cons}[1]{{\sf con}(#1)}
%turns text blue

%
% tree
%

%formulas
\newcommand{\formula}[1]{\ensuremath{\Fmc_{#1}}\xspace}

\newcommand{\literal}{\ensuremath{l}\xspace}

%\declaretheorem[name=Definition,numberwithin=section]{definition}

%\newcommand{\nb}[1]{\textcolor{red}{$|$}\mbox{}\marginpar{\scriptsize\raggedright\textcolor{red}{#1}}}

\usepackage{marvosym}
\usepackage{authblk}

\title{Finding Common Ground   for Incoherent Horn Expressions}
%Your title must be in mixed case, not sentence case. 
% That means all verbs (including short verbs like be, is, using,and go), 
% nouns, adverbs, adjectives should be capitalized, including both words in hyphenated terms, while
% articles, conjunctions, and prepositions are lower case unless they
% directly follow a colon or long dash
%\author{%
%Ana Ozaki \and Marija Slavkovik\\
%\affiliations
%University of Bergen \\
%\emails%
%	\{ana.ozaki, marija.slavkovik\}@uib.no  
%}

\author[1]{Ana Ozaki}
\author[1]{Anum Rehman} 
\author[1]{Philip Turk}
\author[1]{Marija Slavkovik} 
 \affil[1]{University of Bergen \\
\{ana.ozaki, marija.slavkovik\}@uib.no }

 \begin{document}

\maketitle

 \begin{abstract}
% \textcolor{red}{**I think we need a more technical abstract, risk of random people reviewing here**}
 Autonomous systems that operate in a shared environment with people 
 need to be able to follow the rules  of the  society they occupy.  
 While laws are unique for one society, different  people and institutions may use different  
 rules to guide their conduct.  We study the problem of 
 reaching a common ground among possibly incoherent rules of conduct.
%   an autonomous system should obey. 
% Our  
% algorithm  combines   incoherent behaviour rules represented in propositional Horn.  
 We formally define a notion of common ground and discuss the main properties of this notion.
Then, we identify three sufficient conditions on the class of Horn expressions for which common grounds are
guaranteed to exist. We
  provide a polynomial time 
algorithm that computes  common grounds, under these conditions. % for Horn expressions in this class.
We also show that if any of the three conditions is removed then 
common grounds for the resulting (larger) class may not exist.
%there is no algorithm that is guaranteed to 
%compute common grounds. % for the resulting (larger) class.
% of 
%Horn expressions. 
 %for 
%there is 
%removing the conditions for this class of Horn expressions

\end{abstract}

\section{Introduction} %\nb{maybe we should think about a catchy title} 
Systems   capable of some level of autonomous operation  should be built 
 to respect the moral norms and values of the society in which they operate~\citep{Dignum2019ch,WinfieldMPE2019,wallachBook,Moor:2006,bremner2018}. % or that at least considerations 
%should be taken when decision-making is automated\footnote{See \url{https://futureoflife.org/national-international-ai-strategies/?cn-reloaded=1} 
%for a comprehensive list of national strategies on AI.}. 
%Whose %removed what, not sure if it should be which or what
%values should an autonomous system 
%follow? 
If there are multiple stakeholders supplying different rules, how should possible inconsistencies among them be resolved? 

The question of whether or not moral conflicts do really exist  has been long argued in moral philosophy \citep{Donagan84}. 
It has also been argued that people do not tend to follow  moral theories, but rules of thumb when choosing what to 
do in a morally sensitive situation \citep{Horty94}. Numerous conflicts do arise among rules. A normative conflict is a situation in which an agent ought to perform two actions that cannot both be performed \citep{Horty2003}.

%An autonomous system that serves many masters, must be equipped with a method for resolving possible incoherent and conflicting requirements in their input. What method should be used?

There is an entire field that studies the resolution of normative conflicts \citep{Santos2018}.   \cite{Baum20} and \cite{Rahwan2018} argue that some form of a social 
choice approach is needed to decide and \cite{Botan2021} consider judgment aggregation as a method.  \cite{NoothigattuGADR18} consider learning 
the moral preferences of people and using them to vote on what is the right 
thing for an autonomous system to do. \cite{Liao2018} and \cite{LiaoST19} propose 
an argumentation based approach.  

In this paper we do not aim to solve the problem of moral normative conflicts, 
moral social choice or deontic conflict resolution. We are interested in 
exploring a very practical, admittedly limited,  way of reaching a common ground on 
sets of rules.
We assume that each 
stakeholder with an interest in governing the behavior of an autonomous system contributes  
a set of rules which the system should implement. However, these rules are considered to be under-specified. 
Namely, we assume that  the stakeholder does not exclude that exceptions to the rules exist. We design an 
algorithm that ``corrects'' the rules supplied by one stakeholder with exceptions raised by another stakeholder.  

We only consider rules expressed in Horn logic. This is motivated primarily by the availability of tools that handle this logic, 
when compared with deontic logics and formalisms based on nonmonotonic reasoning,  which are  two other natural choices to represent behavior rules.  We study a notion of common ground and under which conditions it  exists. 

To illustrate the main ideas and approaches we follow in building our  algorithm, consider as an example a police robot\footnote{Similar examples can be found in \citep{BjorgenMBHHLLDS18}} that   
 detects a potentially illegal activity in a supermarket. 
\begin{example}\label{ex:consensus}\upshape
Assume that the police robot has detected %marijuana 
 smoke  and a child %\nb{what about using minor instead?}
%teen by child? 
who is smoking. %small change ok? 
The robot has made the following deduction: %\footnote{Formulated in propositional logic, for simplicity. }:
$$\text{A child is smoking in a forbidden to smoke area.}$$
%In symbols, ${\sf Child(Mike)}, {\sf illegalDrugUse(Mike,drugX)}$.
Now assume we have  stakeholders, which we may also refer to as ``agents'', % throughout the paper, 
with the following recommendations. 
The first agent puts forward the rule:  
 ``if there is an illegal activity, the police should be informed''.  
 In symbols: \begin{equation}\label{callPo}{\sf illegalActivity}\rightarrow {\sf policeCall}\end{equation}
%$$\forall xy ({\sf illegalActivity}(x,y)\rightarrow {\sf policeCall})$$
The second agent points out that it is a child who is smoking, 
and so,  the parents are the ones who should decide if the police should be informed. 
Police officers do exercise an independent judgement as part of best serving the public 
and we would not want to have a more oppressive society with robots that completely eliminate this practice. 
More specifically, we have the rule 
 ``if there is an illegal activity done by a child then their parents should be called''. In symbols: 
\begin{equation}\label{callPa}({\sf illegalActivity}\wedge {\sf child})\rightarrow {\sf parentsAlert}\end{equation}
%Both of 
The agents agree that smoking in %a supermarket 
a forbidden to smoke area (e.g., bus stop)
is an illegal activity and that, if this happens,
%in such situation 
either the police or the parents 
should be called.
They agree that not both should be called, that is, \[({\sf parentsAlert}\wedge {\sf policeCall})\rightarrow \bot.\] 
Calling both parents and the police is somewhat pointless since the police is obliged to call the parents of the minor. \hfill {\mbox{$\triangleleft$}}
\end{example}

In this example, there is a clear incoherence but not necessarily a conflict  between the 
two agents. %: one thinks that the police should be called and the other thinks that the parents should be alerted.
We consider this an incoherence because the second agent reasons using a more specific rule than the first. %\blu{ % I like this connection ! 
Rules could be even further specialized and take into account 
the case in which the child is not under parental supervision  (e.g., alone in the supermarket). %, for example when it is home alone. 
 
 Consider the rule:
 ``if there is an illegal activity done by a child who is unsupervised (by an adult) then the police should be called''.
 In symbols: 
 \begin{equation}\label{callUn}({\sf illegalActivity}\wedge {\sf child}\wedge {\sf unsupervised})\rightarrow {\sf policeCall}\end{equation}

 A common ground between (\ref{callPo}) and  (\ref{callPa}) can be reached by transforming (\ref{callPo}) into 
  \begin{equation}\label{callPol}
  ({\sf illegalActivity} \wedge {\sf adult}) \rightarrow {\sf policeCall}
  \end{equation}
 
 What is interesting about   
 (\ref{callPol})  is that it is  
 coherent with  both (\ref{callPa}) and (\ref{callUn}). %\hfill {\mbox{$\triangleleft$}}
When there are two applicable rules %that apply %avoiding the term conflict, used in this paper with another meaning
but one is more specialized,  legal theory prescribes the use of the  {\em lex specialis derogat legi generali}  principle meaning ``special law repeals general laws''.  
We explore the possibility of applying this principle as a basis for reaching common grounds.  

What makes one set of rules a common ground? We consider the scenario in which all stakeholders are equally important. A set of rules, represented as a Horn expression, can only be a common ground for their behaviour rules, given as input, 
if it is coherent with them. We formally define  the notion of a common ground for a given set of Horn expressions and discuss the main properties of it.
Common grounds do not always exist.
We identify a large class of Horn expressions for which common grounds are
guaranteed to exist and
  provide a polynomial time 
%We build an 
algorithm that computes  common grounds for Horn expressions in this class.
%under certain restrictions in 
%the input rules. removing the restrictions we impose
%
%identifies pairs of rules that can be made coherent by making them as specific as possible using 
%the available information.  
We assume 
%does not repeat an example that has already been 
%given and that we 
 a common background knowledge for  the  stakeholders that expresses mutually exclusive conditions, for example, being an adult or a child. % which is a set of formulas $X \wedge \bar{X} \models \bot$ declaring $\bar{X}$ a complement of $X$ and vice versa.   
%\end{itemize}
Our main contributions are  
\begin{itemize}
\item the formalisation and discussion of a notion of common ground for incoherent rules (Section~\ref{sec:definitions});
\item the proposal of a polynomial time algorithm for finding common grounds based on the \emph{lex specialis derogat legi generali} legal
principle (Section~\ref{sec:building});
and 
\item an analysis 
of when  a common ground for incoherent rules is guaranteed to exist (Section~\ref{sec:building}).
\end{itemize}
%The main contribution of the paper is..

%\todo[inline]{TODO:contributions.}
%\todo[inline]{Paper structure}

%~ The paper is structured as follows. In Section~\ref{sec:definitions} we define our framework. More specifically, 
%~ we formally express the difference between conflict and (in)coherence. 
%~ We also define a set of postulates that a  Horn expression should satisfy to be considered a 
%~ common ground. % or a stronger notion which we call a full compromise. 
%~ In Section~\ref{sec:building} we present our algorithm and consider the conditions under which 
%~ the algorithm is guaranteed to output  a common ground.  
%In Section~\ref{sec:conflict} we offer a discussion on what having conflicts, rather than incoherence, 
%among rules means and how to handle this situation. 
In Section~\ref{sec:relwork} we discuss related work. In Section~\ref{sec:conclusion} we outline our conclusions and discuss directions for future work.

\section{Rules, Incoherences, and Common Ground}\label{sec:definitions} 
We represent  rule recommendations   using propositional Horn logic. In the following, we briefly introduce the syntax and semantics of %function-free 
%first-order 
propositional Horn expressions and provide basic notions used in this paper. 
Then, we formally introduce our notions of \emph{coherence}, \emph{conflict} and \emph{common ground}. 

%A term $t$ is either a variable, a constant, or an expression of the form $f(t_1,\ldots,t_a)$, where 
%$t_1,\ldots,t_a$ are terms. For example, $f(a,x,g(y,b))$ and $\textit{recommend}(a, \textit{choice}(a,b))$ are terms. 

\subsection{Rules}

\paragraph{Syntax}
An \emph{atom} is a boolean variable. %an expression of the form 
%$P(\vec{t})$ with $P$ a predicate and $\vec{t}$ a list of terms $t_1,\ldots,t_a$,   
%where $a$ is the arity of $P$. An atom is \emph{ground} if all terms 
%occurring in it are constants. 
A \emph{literal} is an atom $p$ or its negation $\neg p$. %We use the notion of a set of formulas and the conjunction of its elements interchangeably. 
A \emph{Horn clause} is a 
%universally quantified 
disjunction of literals %. It is called \emph{Horn} if it 
where  at most one is positive. % literal. 
%A Horn clause 
It is \emph{definite} if it has exactly one positive literal. 
For a given definite Horn clause $\phi$, we define $\ant{\phi} $ to be the set of 
all atoms   such that their negation occurs in
%all  negative literals in 
$\phi$, while $\cons{\phi}$ is the positive literal in $\phi$. 
A definite Horn clause is \emph{non-trivial} if $\cons{\phi}\not\in\ant{\phi}$. 
%\nb{A: changed here because the def says that a Horn clause has at most one positive literals. 
%So we may not have any and thus cannot say "the positive" (it may not be there...)} \
We may treat a set of formulas and the conjunction of its elements interchangeably.
Also, we often write Horn clauses as \emph{rules} of the form 
$$(p_1\wedge\ldots\wedge p_n)\rightarrow q\text{  
or }(p_1\wedge\ldots\wedge p_n)\rightarrow \bot$$ 
the latter are for non-definite clauses. 
%When $\ant{\phi}$ is used as a clause (e.g. in Definition~\ref{def:conflict}), it stands for all clauses $\top\rightarrow\beta$ with $\beta\in\ant{\phi}$.
%
A \emph{Horn expression} is a (finite) set of 
%(first-order) 
Horn clauses. It is called  \emph{definite} if all clauses in it are definite.  
%
%
%\nb{may be moved to preliminaries}
%\begin{definition}[Acyclic Horn]
%\end{definition} 

\paragraph{Semantics}
The semantics is given by interpretations. We denote by 
$\true(\Imc)$ the set of variables assigned to true in an interpretation \Imc. 
We say that \Imc \emph{satisfies} 
a  Horn clause $\phi$
if 
\begin{itemize}
\item $\ant{\phi}\not\subseteq{\sf true}(\Imc)$ or, 
\item in the case $\phi$ is definite, 
$\cons{\phi}\in\true(\Imc)$.
\end{itemize}
It satisfies 
a Horn expression $\formula{}$ if it satisfies all clauses in $\formula{}$. 
A Horn expression $\formula{}$ \emph{entails} a clause $\phi$, written
$\formula{}\models\phi$, if every interpretation that satisfies $\formula{}$ also 
satisfies $\phi$.
%If $\{\psi\}\models\phi$ but $\{\phi\}\not\models\psi$ then we say that
%$\psi$ is \emph{more general} than $\phi$ (or, equivalently, $\phi$ is \emph{more specific} than $\psi$). 
%\nb{... add more for the semantics}
%
%We say that 
%~ A Horn expression  \formula{} is  {\em non-redundant} 
%~ if for all $\phi\in\formula{}$ it is not the case that  $\formula{}\setminus\{\phi\}\models \phi$. 
%~ %We say that 
%~ %A Horn expression $\formula{}$ 
%~ It is \emph{acyclic}
%~ if there is no sequence of clauses $\phi_1, \ldots, \phi_n \in \formula{}$ 
%~ such that $\cons{\phi_i}\in \ant{\phi_{i+1}}$, for all $1\leq i < n$,
%~ and $\phi_1=\phi_n$. 

%We assume that there exist a set of stakeholders, or agents, which we simply enumerate as $i\in\{1,\ldots,n\}$. 
\paragraph{Scenario}
We consider a scenario with multiple stakeholders, or agents. %, which we  enumerate as $i\in\{1,\ldots,n\}$. 
Each  stakeholder $i$
is associated with its own set of   {\em behaviour  rules}, e.g., %such as 
``if there is an illegal activity then call the police''. The stakeholders also share {\em background knowledge}.
Their background knowledge %of the stakeholders
  contains basic  constraints about the world, such as ``a person cannot be an adult and a child at the same time''. 
  Since stakeholders may diverge in their behaviour, 
the goal is to find a Horn expression $\formula{}$ that is a {\em representative} 
of such behaviours that also respects the constraints about the world; %the  Horn expressions, 
we  call $\formula{}$ {\em a common ground}.
 %\nb{changed paragraph above and below}

\medskip

We represent  behaviour rules 
%a
 %or recommendations,  
%represented 
with   definite Horn expressions, denoted
%.her own Horn expression 
$\formula{i}$ for each  stakeholder $i$, and the background knowledge with a set of non-definite Horn clauses.
This way of representing behaviour rules and background knowledge   simplifies the presentation of the technical results while 
capturing a large class of scenarios. 
Also, as discussed earlier, 
%even though simple, 
Horn logic is a convenient formalism because there are several tools for performing automated reasoning.
 %The stakeholders share {\em background knowledge}. Their background knowledge %of the stakeholders
 % contains basic constraints about the world, such as ``a person cannot be an adult and a child at the same time''. 
  %It is complete in the sense that it 
  %relates to all the necessary constraints related to the propositions used for representing %(predicates, functions, terms and constants) 
 %the %behavior %moral 
  %recommendations. 
  %Formally, 
  
  The background knowledge $\BK$ for $\formula{1},\ldots,\formula{n}$
  %of the stakeholders
  is defined as a set of non-definite Horn clauses, %$(\atom_1 \wedge \ldots\wedge\atom_n) \rightarrow \bot$,
  %where $\atom_i$   are 
  built from atoms occurring in $\formula{1},\ldots,\formula{n}$,
  expressing pairwise disjointness constraints
  (e.g., a person cannot be a child and an adult, or a child and a teenager). %\nb{propositional, first-order}
 %We introduce a special notation to relate ``excludent'' atoms in $\BK$ w.r.t. a definite Horn expression $\formula{}$. %\nb{M: this paragraph we may need to rephrase at some point.} 
 For a given atom $\atom$ ocurring in $\BK$, % and a definite Horn expression $\formula{}$,  
 we  define 
 \[\overline{\atom}_{\BK} = \{ q \mid   \BK\models (\atom\wedge q) \rightarrow \bot) \}.\]
 We may omit writing `for $\formula{1},\ldots,\formula{n}$' and the subscript $\cdot_{\BK}$ when this is clear from the context.  
Elements of $\overline{\atom}$ are called \emph{excludents} of $\atom$.
% E.g., 
 %  and assume that \alert{$\overline{\overline{a}} = a$}.
 We assume that %$\BK$ is closed under this $overline$ operator, namely
 for every $\atom$ occurring in $\formula{}$ the set $\overline{\atom}$ is not empty. %\nb{changed a bit here}  
  %In our examples and proofs, given an atom $\atom$, 
   %This is a restricted form 
  %of negation in which reasoning can still be performed in polynomial time,
  %as opposed to full negation where . 
  Whenever we write $\overline{\atom}$ in a rule, we assume it refers 
%as a conjunct of a clause 
to a %fixed but arbitrary %to refer to 
%¤a 
representative of an atom in $\overline{\atom}$, which means ``not $p$''. % assumed to be a singleton set.
Having a symbol that represents the exclusion of an atom can be seen as a weak form 
of negating it which is computationally efficient, since we remain in the Horn logic.
%, where
%  reasoning can be performed in polynomial time.
%\nb{added}

%Our approach, in fact, only requires that the atoms involved in incoherences have 
%such representative. 
% need 
%This is useful for resolving incoherences, where  excluding an atom.

\begin{example}\label{ex:model}\upshape
%, one can model 
The background knowledge 
of the stakeholders in  Example~\ref{ex:consensus} can be modeled as:
\begin{equation*}
\begin{array}{l}
 ({\sf parentsAlert}\wedge {\sf policeCall})\rightarrow \bot, \quad %\
 ({\sf child}\wedge {\sf adult})\rightarrow \bot, \\
 ({\sf child}\wedge {\sf teen})\rightarrow \bot, \quad %\
 ({\sf supervised}\wedge {\sf unsupervised})\rightarrow \bot. 
 \end{array}
\end{equation*} 
In our notation, $\overline{{\sf child}}=\{{\sf adult}, {\sf teen}\}$.\hfill {\mbox{$\triangleleft$}}
\end{example}  
%   $\BK = \{ \alpha \wedge \beta \rightarrow \bot \mid \alpha, \beta \textrm{ are atoms}\}$. 

Given a Horn clause $\phi$ with $p\in \ant{\phi}$, we denote by $\phi^{q\setminus p}$
the result of replacing $p\in\ant{\phi}$ by    $q$. % (if $p=\overline{q}$ then $\overline{p}=\overline{\overline{q}}=q$). 
Also, 
we denote by $\phi^{-p}$ the result of %Horn clause that results from
 removing $p$ from $\ant{\phi}$; and we denote by $\phi^{+p}$ the result of adding $p$ to $\ant{\phi}$.  
%\nb{Ana: added and changed the definition of conflict below}
%Given a Horn expression $\formula{}$ and a set of atoms $\sigma$, we define  $\sigma^*_{\formula{}}:=\{p\mid \formula{}\cup  \sigma \models p\}$.

%~ \begin{proposition}\label{prop:technical}
%~ Let $\formula{}$ be a definite Horn expression and let $\sigma$ be a conjunction of propositional symbols.
%~ The interpretation $\Imc$ with ${\sf true}(\Imc)=\sigma^*_{\formula{}}$ satisfies 
%~ $\formula{}$.
%~ \end{proposition}
 
\subsection{Coherence and Conflict} 
 %\textcolor{red}{% I think we need to be careful with "complement" to not confuse with negation, constraint seems safer
% As mentioned, we consider complementary atoms in the background knowledge. An atom and its complement are a 
% logically consistent set of formulas. We need to represent in logic semantics that an entity cannot have two complementary properties at the same time. For example, 
% a human should not be a both a child and an adult at the same time, choosing an action should mean forgoing the alternative available action. 
 To capture the %effect of the constraints in 
 semantics of the background knowledge on   behaviour rules of   stakeholders, we define two concepts: \emph{coherence} and \emph{conflict}. 
 Coherence is a property of a %pair of 
 definite Horn clause % defined with respect to 
 w.r.t. a   Horn expression or of a Horn expression. 
 Intuitively, in a ``well behaved'' set of rules, one should not be able to %, under the same holding conditions, 
 infer an atom $\atom$
 and an element of $\overline{\atom}$. 
 Such Horn expressions will be called \emph{incoherent}.  
  More specifically, a clause $\phi$ is incoherent with a 
 Horn expression when adding the antecedent of this clause to the set makes it possible for both the clause's consequent, $\cons{\phi}$, 
 and an element of %its complement 
 $\overline{\cons{\phi}}$  to be inferred from this union (see Example~\ref{ex:coh}).  
% We give an example.
 \begin{example}\label{ex:coh}\upshape
  Assume that 
  \begin{equation*}
\begin{array}{l}
  ({\sf parentsAlert}\wedge {\sf policeCall})\rightarrow \bot\in\BK. 
  \end{array}
 \end{equation*}
  Consider   the  Horn expression  $\{ {\sf illegalActivity}\rightarrow {\sf policeCall} \}$ and the 
  clause $({\sf illegalActivity}\wedge {\sf child})\rightarrow {\sf parentsAlert}$. The union of the antecedent of the clause, the clause itself, 
  and the Horn expression    implies %two complementary atoms to be inferred 
  both ${\sf policeCall}$ and  ${\sf parentsAlert}$, which should not happen according to   \BK. In symbols, 
 (${\sf parentsAlert}\in\overline{{\sf policeCall}}$ and vice-versa). \hfill {\mbox{$\triangleleft$}}
  \end{example}
  Having an  incoherent set of rules means that there might be a situation in which the set would not be able to offer any guidance as to what to do. 
  %}
  For simplicity, we may omit referring explicitly to the background knowledge %\BK 
since we assume one group of stakeholders at a time with a unique background knowledge. 
%\nb{moved the above sentence}

  Let $\phi$ and $\psi$ be definite Horn clauses and let $\formula{}$
be a definite Horn expression. 
A \emph{derivation of $\phi$ w.r.t. $\psi$ and $\formula{}$} is a sequence  
$\phi_1,\ldots,\phi_n$ of clauses in $\formula{}$ such that
\begin{itemize}
\item $\phi_1=\psi$, $\phi_n=\phi$,  
\item $\ant{\phi_{i+1}}\subseteq \bigcup_{1\leq j\leq i} \ant{\phi_j}\cup\cons{\phi_j}$,
and, 
\item for all $1 < i < n$, there is $j$ such that $i<j\leq n$ and $\cons{\phi_i}\in \ant{\phi_{j}}$. 
\end{itemize}
%It is \emph{minimal} if
%The \emph{derivation length} of $\phi$ w.r.t. $\psi$ and $\formula{}$
%is the minimal $n$ with $\phi_1,\ldots,\phi_n$ being 
%number of clauses in
%any such derivation. 
%$n$ such that $\phi_1,\ldots,\phi_n$ 
%Moreover, there is no other such sequence $\phi_1,\ldots,\phi_m$ %of $\phi$ w.r.t. $\psi$ and $\formula{}$ 
%with $m < n$. 
%\end{definition}
%
%
%Given two clauses $\psi,\phi$ in a Horn expression $\formula{}$,
We write $\psi\Rightarrow_{\formula{}} \phi$ if there is a derivation  
of $\phi$ w.r.t. $\psi$ and $\formula{}$.
Assuming $\phi,\psi\in\formula{}$, we have that
\[
\psi\Rightarrow_{\formula{}} \phi \text{ iff }
\formula{}\cup\ant{\psi}\models\ant{\phi}.\]
Some examples of derivations can be seen in Figure~\ref{fig:graph}.

The following proposition is useful for proving  our results\footnote{All of our proofs are given in detail in the Appendix}.  
%Proposition~\ref{lem:derivation} is an easy consequence of the notion of derivation
%definition 
%of the notation $\psi\Rightarrow_{\formula{}} \phi$.
% follows from 
%and the well-known fact that
%entailment in propositional Horn can be decided in polynomial time.
\begin{restatable}{proposition}{derivation}
\label{lem:derivation}
Given definite Horn clauses $\phi$ and $\psi$ and a definite Horn expression $\formula{}$, 
one can decide in %\nbtodo{
linear time on the number of literals in $\formula{}$ %?}polynomial time 
whether there is a derivation of $\phi$ w.r.t. $\psi$ and $\formula{}$. 
%and, if so,  compute the length of a minimal one in polynomial time.
\end{restatable}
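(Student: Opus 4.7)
The plan is to exploit the equivalence stated in the excerpt: since $\phi,\psi\in\formula{}$, the existence of a derivation $\psi\Rightarrow_{\formula{}}\phi$ coincides with the entailment $\formula{}\cup\ant{\psi}\models\ant{\phi}$. This converts the question into a Horn entailment problem, which is well-known to be solvable in linear time by forward chaining (the Dowling--Gallier algorithm).

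Concretely, first I would treat $\ant{\psi}$ as a set of facts (unit positive clauses) and run forward chaining over $\formula{}\cup\ant{\psi}$: initialize a set $T$ of true atoms with $\ant{\psi}\cup\{\cons{\psi}\}$, maintain for every clause $\chi\in\formula{}$ a counter equal to $|\ant{\chi}\setminus T|$, and for every atom an inverted index pointing to the clauses in whose antecedent it appears. Whenever an atom $p$ is added to $T$, I walk its index and decrement the counter of each affected clause; when a counter hits zero, the clause fires, its consequent is added to $T$ (if new) and queued for propagation. After saturation, I accept iff $\ant{\phi}\subseteq T$.

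To argue correctness, I would observe two directions. If $\ant{\phi}\subseteq T$, then I can read a derivation off the propagation order: take the clauses that actually fired in the order they fired, prefixed by $\psi$ and suffixed by $\phi$, then prune any fired clause whose consequent is not used by a subsequent antecedent in the trace. This prunning yields a sequence satisfying both the antecedent-containment and the ``consequent-is-used-later'' condition in the definition. Conversely, any derivation witnesses that forward chaining eventually puts $\ant{\phi}$ into $T$, since each step only uses atoms provided by $\ant{\psi}$ or by earlier consequents.

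For the complexity, each clause is inspected a constant number of times per literal occurrence (its counter is decremented at most once per literal in its antecedent, and it fires at most once), and each atom triggers a single pass over its inverted index. The total work is therefore bounded by the sum of antecedent sizes plus the number of atoms, i.e., linear in the number of literal occurrences in $\formula{}$ (with $\ant{\psi}$ and $\ant{\phi}$ contributing only a linear amount more). The main subtlety I anticipate is not the algorithm itself but the bookkeeping in the correctness direction that recovers a \emph{derivation} in the exact sense of the paper (satisfying the third bullet requiring each intermediate consequent to be used later); the pruning argument above handles it, but it should be spelled out carefully to avoid circular references among clauses whose consequents coincide.
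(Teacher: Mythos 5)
Your proposal is correct and rests on the same underlying idea as the paper's proof---saturate forward from $\ant{\psi}$ and test whether $\ant{\phi}$ is contained in the saturation---but the two differ in how the computation is organized, and the difference matters for the stated bound. The paper builds layers $S_1, S_2, \ldots$, where $S_1$ collects the clauses of $\formula{}$ whose antecedents lie in $\ant{\psi}\cup\{\cons{\psi}\}$ and $S_i$ collects clauses whose antecedents lie in the atoms occurring in earlier layers; since there can be up to $|\formula{}|$ layers and each layer is found by rescanning the expression, the paper's own proof only concludes ``polynomial time'' (at face value quadratic), which does not by itself establish the linear-time claim of the proposition. Your counter-plus-inverted-index propagation is the Dowling--Gallier refinement of this same saturation, and it is exactly what is needed to make the linear bound true: each antecedent literal causes at most one counter decrement and each atom one traversal of its index. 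You are also more careful on a point the paper glosses over: the paper asserts that containment of $\ant{\phi}$ in the saturation yields a derivation ``by definition,'' but the definition's third bullet (every intermediate consequent must be used in a later antecedent) requires exactly the pruning of fired-but-unused clauses that you describe. Spelling that out, as you propose, closes a gap that the paper's argument leaves open rather than introducing one.
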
 

We are now ready for our definition of coherence.
%  \todo{explain $\Rightarrow_{\formula{}}$ notation}
 \begin{definition}[Coherence]\label{ex:coherence}  
%Let $\formula{}$ be a definite 
%Horn expression.
A definite Horn clause $\phi$ is  { \bf coherent  } with
a  
Horn expression ${\formula{}}$
if $\formula{}\setminus\{\phi\}\not\models \phi$ and 
\begin{itemize}
\item there is no $\psi\in\formula{}$ such that %\linebreak 
$\psi\Rightarrow_{\formula{}}\phi$ or $\phi\Rightarrow_{\formula{}}\psi$ while
%$\formula{}\cup\{\ant{\phi}\}\not\models
%p\wedge \cons{\psi}$ with $p\in\overline{\cons{\psi}}$,
%and $\formula{}\cup\{\ant{\psi}\}\not\models
%p\wedge \cons{\phi}$ with 
$\cons{\psi}\in \overline{\cons{\phi}}$ (note that $\cons{\psi}\in \overline{\cons{\phi}}$    implies 
$\cons{\phi}\in \overline{\cons{\psi}}$).
\end{itemize}
The set \formula{} is {\bf{coherent}} if all 
  $\phi\in\formula{}$ are coherent with $\formula{}$ (and \emph{incoherent} otherwise).
\end{definition}

 %\textcolor{red}{ 
 Conflict is a property of a set of definite Horn clauses. 
 The notion of  conflict is stronger than the notion of coherence. Intuitively, a
 conflict is an incoherence that cannot be easily resolved. 
% Consider . 
 The incoherence in Example~\ref{ex:coh}
 is due to the fact that ${\sf illegalActivity}\wedge {\sf child}$ implies
 both ${\sf parentsAlert}$ and ${\sf policeCall}$ while 
the background knowledge states that both cannot be true. 
 %{ex:coherence} with the background knowledge in
 %Example~\ref{ex:model}
 These rules are \emph{not} in conflict and incoherence can   be resolved as follows. 
 %Observe that the antecedent ${\sf illegalActivity}$ is implied by  the antecedent  
 %${\sf illegalActivity}\wedge {\sf child}$. 
 %If we assume that  ${\sf child}$ has a 
 %complement in the background knowledge, we can assume that 
 %Observe that 
 The rule  
 \[({\sf illegalActivity}\wedge {\sf child})\rightarrow {\sf parentsAlert}\] 
 can be considered as %a kind of 
 an exception to the more general rule  
 \[ {\sf illegalActivity}\rightarrow {\sf policeCall}.\] 
  %
% ${\sf parentsAlert}\in\overline{{\sf policeCall}}$ and vice-versa).
 Then, all we   have to do to restore coherence is to change the latter rule 
% $ {\sf illegalActivity}\rightarrow {\sf policeCall}$
  %${\sf illegalActivity}\wedge {\sf child}\rightarrow {\sf parentsAlert}$ 
  into a more specific 
  one that says: 
  %$ {\sf illegalActivity}\rightarrow {\sf policeCall}$
  \begin{itemize}
\item  unless the exceptional case (e.g., it is a child) has occurred,  take this action (e.g., call the police). 
  \end{itemize}
  A set can be in conflict 
 % for two main reasons.
 % The first is when there is a  ``double incoherence'': 
 % there are three clauses involved in the incoherence and resolving it would require choosing
 % between two or more rules. 
 %The second is 
 when we cannot find a ``suitable'' atom to add to the antecedent of an incoherent rule, % that is too general, 
  as a way to further specify it and avoid  incoherence. The ``suitable'' atoms are chosen from 
    the excludents of the atoms in the antecedent of the more specific rule involved in the incoherence.

\begin{definition}[Conflict]\label{def:conflict}  
Let $\formula{}$ be a definite 
Horn expression.
We say that $\formula{}$ is  {\bf in conflict} if
%~ when 
%~ there are $\phi_1,\phi_2,\psi\in\formula{}$
%~ s.t. $\phi_i\Rightarrow_{\formula{}}\psi$
%~ and $\cons{\phi_i}\in\overline{\cons{\psi}}$
%~ with $i\in\{1,2\}$
%$\formula{}\cup\ant{\phi}\models p\wedge \cons{\psi}$ with $p\in\overline{\cons{\psi}}$ 
%(i.e., $\formula{}$ has a double incoherence) or the following holds
\begin{itemize}
\item there are $\phi,\psi\in\formula{}$
s.t. $\phi\Rightarrow_{\formula{}}\psi$
and $\cons{\phi}\in\overline{\cons{\psi}}$
%$\formula{}\cup\ant{\phi}\models p\wedge \cons{\psi}$ with $p\in\overline{\cons{\psi}}$ 
(i.e., $\formula{}$ is incoherent); and 
\item there is no $r\in\ant{\phi}\setminus\ant{\psi}$ with $q\in \overline {r}$
s.t. $\psi^{+q}$ is coherent with $\formula{}\setminus\{\psi\}$.
\end{itemize}
\end{definition}

\begin{example}\label{ex:bk2}\upshape 
%Assume ${\sf parentsAlert}\wedge {\sf policeCall}\rightarrow \bot$
%is in the background knowledge $\BK$.
Consider \BK in Example~\ref{ex:coh} and the  rules:
\begin{equation*}
\begin{array}{ll}
(1) & {\sf illegalActivity} \rightarrow {\sf \sf parentsAlert}, \\
(2) &  {\sf illegalActivity} \rightarrow {\sf policeCall},\\
(3) & ({\sf illegalActivity} \wedge {\sf child})\rightarrow {\sf \sf parentsAlert}. \\
 \end{array}
 \end{equation*}
%\begin{equation}
%\begin{array}{l}
% \phi: { \sf emergency}\rightarrow \overline{\sf ChargeBattery}, \\
%\psi: {\sf night}\rightarrow {\sf ChargeBattery}. \\
% \end{array}
% \end{equation}
The set with the first two rules, (1) and (2), is in conflict, while the set  with the last two rules, (2) and(3),  is not. 
%In the next section, we give another (slightly more complex) example of rules in conflict in the proof of Theorem~\ref{ex:conflict}.
\hfill {\mbox{$\triangleleft$}}
% However, nothing excludes that $\ant(\phi)$ and $\and(\psi)$ happen at the same time, so this set $\{\phi, \psi\}$ is potentially in conflict?  
\end{example}
%We   assume that %the behaviour rules of 
%the stakeholders are not  in conflict with each other. 
%That is, $\bigcup^n_{i=1} \formula{i}$ is not in conflict (but may be incoherent).  
%that is if $\phi\in \formula{i}$ then there is no $\phi'\in \formula{j}$ such that 
%$\ant{\phi}=\ant{\phi'}$ and $\cons{\phi}=\overline{\cons{\phi'}}$. 
%Lastly, we assume that the stakeholders are individually coherent. 
%As usual, sets are incomparable if one is not contained in the other (and vice-versa). 
%~ Given definite Horn clauses $\phi,\psi$, we write $\phi\difference\psi$ if there is
%~ a definite Horn clause $\phi'$ such that $\ant{\phi'}$ is  incomparable with 
 %~ $\ant{\phi}$ or $\ant{\psi}$, there is no symbol $p$ 
 %~ such that $\{p,\overline{p}\}\subseteq \ant{\phi'}$ and $\ant{\phi'}\subseteq \ant{\phi}\cup\ant{\psi}$.
%~ %there is  $p \in \ant{\phi}  \oplus \ant{\psi}$ s.t. $\overline{p} \not\in  \ant{\phi}  \oplus \ant{\psi} $.
%~ If this is not the case then we write $\phi\notdifference\psi$.  \nb{one more attempt to get the right notion...}
\subsection{Common Ground} 
We are now ready to provide the notion of a common ground.
\begin{definition}[Common Ground]\label{def:consensus}
Let $\formula{1},\ldots \formula{n}$   be  definite Horn expressions, each associated with a stakeholder 
$i\in\{1,\ldots,n\}$.
%, %coherent and non-redundant, %\nb{A: maybe we dont need these cond}
%with  $\bigcup^n_{i=1} \formula{i}$  not redundant and not in conflict. 
Let  $\BK$ be a set describing  
background knowledge. A formula \formula{} is a {\bf{common ground}} for $\formula{1},\ldots, \formula{n}$ and \BK
if it %is non-redundant and 
satisfies each of the following postulates: 
\begin{itemize}[leftmargin=7.5mm]
\item[(P1)]  \formula{} is coherent; % \nb{O: changed} %consistent.
%\item[(P1)] if $\BK$ is satisfiable, then $\formula{}  \cup \BK$ is satisfiable;
\item[(P2)] if  $\bigcup^n_{i=1} \formula{i}$ is coherent, then $\formula{}  \equiv \bigcup^n_{i=1} \formula{i}$; 
\item[(P3)] for all $i \in\{1,\ldots,n\}$ and all $\phi \in \formula{i}$, we have that %\linebreak
 $\formula{}\not\models \ant{\phi} \rightarrow p$ with $p\in\overline{\cons{\phi}}$; %If $\formula{1} \cup \formula{2} \models \bot$, then $\formula{} \cup \formula{1} \not\models \bot$ iff $\formula{} \cup \formula{2}  \not\models \bot$ \nb{O: only with bk we can entail $\bot$}
\item[(P4)] for each $\phi\in\formula{}$, there is $\psi\in\bigcup^n_{i=1} \formula{i}$ with $\{\psi\}\models\phi$;
%~ \nb{M: Old P5 excluded $\{p \wedge q \rightarrow r, p \wedge \overline{q} \rightarrow r \}\subseteq \formula{}$. New P5 excludes $\{p \wedge q \rightarrow r, p \rightarrow r \}\subseteq \formula{}$.}
%\item[(P5)] for each $i\in\{1,\ldots,n\}$, there is (a non-trivial) $\phi\in\formula{}$ such that $\formula{i}\models\phi$;
\item[(P5)] {for all $i \in\{1,\ldots,n\}$ and all $\phi \in \formula{i}$,
%for each $i\in\{1,\ldots,n\}$, 
there is (a non-trivial) $\psi\in\formula{}$ such that $\{\phi\}\models\psi$; and}
%\item[(P6)] for all  $\phi\in\formula{}$, if there is $p\in \ant{\phi}$ such that, for all $q\in\overline{p}$, % we have that 
%$\formula{}\cup\{\phi^{q\setminus p}\}$ is coherent
%\blu{and
%there is $i\in\{1,\ldots,n\}$ such that $\formula{i}\models\phi^{q\setminus p}$}
%then there is $i\in\{1,\ldots,n\}$
%such that $\formula{i}\models\phi$ and $\formula{i}\not\models\phi^{-p}$.
%\blu{\item[(P6)] for all $\psi\in\formula{}$, if there exists a $\formula{i}$ and $\phi$ s.t. $\formula{i}\models \phi$, and $\phi\models \psi$, but $\formula{} \nvDash \phi$,   then  $\phi $ is incoherent with $\bigcup^n_{i=1} \formula{i}$. }  
%~ \end{itemize}
%~ A Horn expression \formula{} is a {\bf{full compromise}}
%~ for the formulas $\formula{1},\ldots \formula{n}$ and \BK if
%~ \formula{} is a  compromise    for $\formula{1},\ldots \formula{n}$ and
%~ \begin{itemize}[leftmargin=7.5mm]
\item[(P6)] for all  $\phi\in\formula{}$, if there is $p\in \ant{\phi}$ such that, for all $q\in\overline{p}$, % we have that 
$\formula{}\cup\{\phi^{q\setminus p}\}$ is coherent {and
there is $i\in\{1,\ldots,n\}$ such that $\formula{i}\models\phi^{q\setminus p}$
then $\formula{i}\not\models\phi^{-p}$.}
%\item[(P7)]
%there is no clause $\phi$ such that $\formula{}\not\models \phi$ and 
%$\formula{}\cup\{\phi\}$ is a 
%compromise that respects (P6).    
%for all possible compromise  formulas $\formula{}'$
%such that $\formula{}'\models \formula{}$ %and 
%we have that $\formula{}\models \formula{}'$. 
\end{itemize}
%We may refer to this last property as (P7).
\end{definition}

%Let us intuitively explain and justify 
We now discuss and motivate the postulates that characterize a %(full) 
 common ground. 
%(P0) and 
\begin{description}

\item[(P1)] The first postulate is intuitive: the learned set of rules should be 
coherent %and  consistent 
with the background knowledge. If they were not so, the theory 
would recommend, for example, two mutually exclusive courses of action for the same 
situation (described here in terms of rule antecedents).   

\item[(P2)] %We would like that 
The   common ground should have as much of the 
input rules supplied by the stakeholders as possible, hence (P2) ensures that if the union of  
rules provided by stakeholders is  coherent, then this should be the common ground.
% should be 
%their union. % of the individual recommendations.  

\item[(P3)] The motivation for (P3) we find in \cite{Hare1972}: ``the essence of morality is to treat the interests of others as of equal weight with ones own'',  which we here interpret as a requirement that all agent's rules are considered equally informative and 
should not be entirely overridden.  (P3) ensures that a rule that is in strict opposition with what a 
stakeholder recommends is not in  the common ground.    

\item[(P4)]  We also do not want that some rules ``sneak in'' % I think sneak up is on a person and sneak in is on some place
in the common ground, 
without being explicitly supported by a stakeholder. This is operationalized by 
(P4) that guarantees that a rule in a common ground can always be ``traced back'' to a rule from a %at least one %originating  
stakeholder. 

\item[(P5)] The fifth postulate  ensures that some 
part of a stakeholder's rule is in a common ground, though, in a  ``weaker'' form.  
In other words, a non-trival part of each stakeholder's rules   should be in a common ground. 
%
%Lastly, we have 
%<<<<<<< HEAD
%(P6)  is the most ``tricky'' postulate to explain, however, it is 
%essential for %the Ps. no need for 'the' here
%Definition~\ref{def:consensus} because it
%=======
\item[(P6)] The sixth is the most ``tricky'' postulate to explain, however, it is 
essential for %the Ps. no need for 'the' here 
Definition~\ref{def:consensus} because it
%>>>>>>> b030ddd65fc84acfa3547dbd0a4cdec0a3ced38f
%to ensure that redundancies 
avoids that unintended rules become part of the  common ground. 
%~ Note that (P5) allows for a subset of rules such as,  for example
%~ % $\{p \wedge q\rightarrow r, \overline{p} \wedge  q \rightarrow r \}$. 
  %~ $\{p_1 \wedge p_2\rightarrow q, \overline{p_1} \wedge  p_2 \rightarrow q \}$. 
 %~ These two can only be replaced by 
 %~ %$\{ q \rightarrow r\}$ 
 %~ $\{ p_2 \rightarrow q\}$ 
 %~ if there is an agent who has this rule 
 %~ %$ q \rightarrow r$, 
 %~ $p_2 \rightarrow q$, 
 %~ \textcolor{red}{but also such as for example} \todo{Not clear what "also such as" refers to here}
%~ % $p \wedge  s \rightarrow \overline{r}$.  
  %~ $p_1 \wedge  p_3 \rightarrow \overline{q}$.  
 %~ But (P6) does more than this. Assume we have  
%~ % $\formula{1}=\{p  \rightarrow r\}$ 
 %~ $\formula{1}=\{p_1  \rightarrow q_1\}$  
 %~ and 
 %~ %$\formula{2} =\{ p \wedge  q \rightarrow \overline{r}, s \rightarrow t \}$.
  %~ $\formula{2} =\{ p_1 \wedge  p_2 \rightarrow \overline{q_1}, p_3\rightarrow q_2 \}$.
  %~ The set 
  %~ %$\{ p\wedge \overline{q} \rightarrow r, p \wedge  q \rightarrow \overline{r}, s \rightarrow t \}$ 
  %~ $\{ p_1\wedge \mathbf{\overline{p_2}} \rightarrow q_1, p_1 \wedge  p_2 \rightarrow \overline{q_1}, p_3 \rightarrow q_2 \}$ 
  %~ satisfies (P6) and is a possible compromise, but not the set 
%~ %  $\{ p\wedge s\rightarrow r, p \wedge  q \rightarrow \overline{r}, s \rightarrow t \}$.  
%~ $\{ p_1\wedge \mathbf{p_3}\rightarrow q_1, p_1 \wedge  q_2 \rightarrow \overline{q_1}, p_3 \rightarrow q_2 \}$.  
We illustrate this with the following example. 
%~ \begin{example}\upshape
%~ Consider %the Horn expressions
%~ \[
%~ \begin{array}{l}
%~ \formula{1}=\{p\rightarrow q, \ s\rightarrow t\}\text{ and } \formula{2}=\{(p\wedge r)\rightarrow \overline{q}\}.
%~ \end{array} 
%~ \]
%~ %$$
%~ %$$.  
%~ %
%~ To resolve the incoherence in $\formula{1}\cup\formula{2}$,   
%~ %$p\rightarrow q$ with \formula{},
%~ one can replace $p\rightarrow q$ with $p\wedge \overline{r}\rightarrow q$.
%~ Without (P6), 
%~ %the resulting Horn expression 
%~ \[
%~ \begin{array}{l}
%~ \formula{}=\{p\wedge \overline{r}\rightarrow q, \ s\rightarrow t, \ (p\wedge r)\rightarrow \overline{q}\}
%~ \end{array} 
%~ \]
 %~ would {\bf{not}}
%~ be a compromise for $\formula{1}\cup\formula{2}$. The reason is because
%~ it would violate (P7) since %there is
%~ %$\formula{}'=
%~ \[
%~ \begin{array}{l}
%~ \formula{}\cup\{p\wedge s\rightarrow q, \ p\wedge t\rightarrow q\} 
%~ \end{array} 
%~ \]
%~ %$$
%~ %such that $\formula{}'\models \formula{}$, 
%~ %$\formula{}\not\models \formula{}'$ and
%~ %$\formula{}'$ 
%~ %that 
%~ satisfies all postulates except for (P6). 
%~ The rules $p\wedge s\rightarrow q, \ p\wedge t\rightarrow q$
%~ are unintended because $s,t$ are  unrelated with the incoherence in  $\formula{1}\cup\formula{2}$. \hfill {\mbox{$\triangleleft$}}
%~ %belong to a rule
%~ %not involved in the derivation of atom $q$.
%~ \end{example}
\begin{example}\label{ex:psix}\upshape
Consider \BK in Example~\ref{ex:coh} and %the Horn expressions
\[
\begin{array}{l}
\formula{1}=\{\phi={\sf illegalActivity} \rightarrow {\sf policeCall}, \\
 \quad\quad\quad \psi={\sf lowBattery} \rightarrow {\sf charge}\}\\
%\text{ and } 
\formula{2}=\{\varphi=({\sf illegalActivity} \wedge {\sf child})\rightarrow {\sf \sf parentsAlert}\}.
\end{array} 
\]
%$$
%$$.  
%
To resolve the incoherence in $\formula{1}\cup\formula{2}$, % (w.r.t. \BK),   
%$p\rightarrow q$ with \formula{},
one can replace $\phi$ % $p\rightarrow q$ 
with $({\sf illegalActivity} \wedge \overline{{\sf child}})\rightarrow {\sf \sf policeCall}$.
Without (P6), 
the rules
\[
\begin{array}{l}
({\sf illegalActivity}\wedge {\sf lowBattery})\rightarrow {\sf policeCall},\\
 \ ({\sf illegalActivity}\wedge {\sf charge})\rightarrow {\sf policeCall}
\end{array} 
\]
%the resulting Horn expression 
%\[
%\begin{array}{l}
%\formula{}=\{\phi', \ \psi, \ \varphi\}
%\end{array} 
%\]
% would {\bf{not}}
%be a full compromise for $\formula{1}\cup\formula{2}$. The reason is because
%it would violate (P7) since %there is
%$\formula{}'=
%~ \[
%~ \begin{array}{l}
%~ \formula{}\cup\{({\sf illegalActivity}\wedge {\sf lowBattery})\rightarrow {\sf policeCall},\\
 %~ \ ({\sf illegalActivity}\wedge {\sf charge})\rightarrow {\sf policeCall}\} 
%~ \end{array} 
%~ \]
%$$
%such that $\formula{}'\models \formula{}$, 
%$\formula{}\not\models \formula{}'$ and
%$\formula{}'$ 
%that 
could also be used to replace $\phi$ as they satisfy (P1)-(P5). %satisfies all postulates except for (P6). 
Though, these rules %$p\wedge s\rightarrow q, \ p\wedge t\rightarrow q$
are unintended since ${\sf lowBattery}$ and ${\sf charge}$ are  unrelated with the incoherence in  $\formula{1}\cup\formula{2}$. \hfill {\mbox{$\triangleleft$}}
%belong to a rule
%not involved in the derivation of atom $q$.
\end{example}

% (P5) is not as obvious as the other postulates.al We include it to ensure that redundancy \nb{A: I think we may need to specify which kind of redundancy we are talking about ... because it is not the notion of redundancy used above (which we should also specify.... ) }
% is not kept, but also when a recommendation is made more specific to accommodate another recommendation, this is done using ``meaningful" atoms. 
% Consider the incoherent set $\{ p \rightarrow \overline{r}, (p\wedge q) \rightarrow r, s \rightarrow t\}$. We can make this set coherent by making the first recommendation ``more specific"  transforming it into $\{ (p \wedge \overline{q}) \rightarrow \overline{r}, (p\wedge q) \rightarrow r, s \rightarrow t\}$, but we can accomplish the same thing with the set $\{ (p \wedge \overline{s}) \rightarrow \overline{r}, (p\wedge q) \rightarrow r, s \rightarrow t\}$. (P5) us here to enforce the first choice is taken. \nb{M: I am still not sure (P5) does this.}
%
 %~ \item[(P7)]Our postulates try to capture a principle of minimal change  but,  at the same time, we want 
%~ to build the most informative behaviour recommendation possible. This   is reflected in (P7). % in Definition~\ref{def:consensus}. 
\end{description}
%The idea to describing desirable properties of a representative set, 
%or an agreement set, as postulates we borrow from the belief revision and 
%merging literature, as for example ~\cite{KoniecznyP11}. 
%Belief merging postulates try to 
%capture the principle of minimal change when a belief base is updated or belief 
%bases are merged. We are trying to do the same,
%
 
%\section{Learning Moral Recommendations}\label{sec:learning-algorithms}

 \section{Finding Common Grounds}\label{sec:building}
 
 We investigate the problem of finding a common ground 
for  rules supplied by  stakeholders, considering 
that they have a basic background knowledge, as  described in Section~\ref{sec:definitions}.
%In our results, 
We use the notion 
of \emph{non-redundant} and \emph{acyclic} Horn expressions, defined as follows.
A Horn expression  \formula{} is  {non-redundant} 
if for all $\phi\in\formula{}$ it is not the case that  $\formula{}\setminus\{\phi\}\models \phi$. 
%We say that 
%A Horn expression $\formula{}$ 
It is acyclic
if there is no sequence of clauses $\phi_1, \ldots, \phi_n \in \formula{}$ 
such that $\cons{\phi_i}\in \ant{\phi_{i+1}}$, for all $1\leq i < n$,
and $\phi_1=\phi_n$. 

In particular, we  show that
for 
%\begin{itemize}
%\item 
\textbf{non-redundant}, 
%\item 
\textbf{not in conflict}, and 
%\item 
\textbf{acyclic 
Horn expressions}, 
%\end{itemize}
a  common ground is guaranteed to exist
and can be computed in {polynomial time }(Theorem~\ref{thm:main}). 
%We prove our result by presenting an algorithm (Algorithm~\ref{alg:consensus}) that computes a  common ground
%if the mentioned conditions are met. 
Our result is tight in the sense that if any of these three conditions is removed then
%We show that, in general, 
%full 
common grounds may not exist. 
%We leave open the question of whether full 
%compromises can be computed in polynomial time
%under the three conditions. 
We first show the negative results, stated in Theorem~\ref{thm:main-negative}.

\begin{restatable}{theorem}{mainnegative}
\label{thm:main-negative}
Consider the class of
    non-redundant, not in conflict, and acyclic  Horn expressions.  %$\formula{1}, \ldots, \formula{n}$ 
%  such that
%$\bigcup^n_{i=1}\formula{i}$ is in conflict and 
If we extend this class by removing any of the three conditions 
(while still keeping the remaining two)
%there are 
%for which no %full 
a common ground for  Horn expressions in the extended class may not exist.
%
%There are  Horn expressions $\formula{1}, \ldots, \formula{n}$  %with a background theory $\BK$ 
%such that $\bigcup^n_{i=1}\formula{i}$ is not in conflict
%and is not redundant but no compromise for $\formula{1}, \ldots, \formula{n}$ %and $\BK$ 
%exists. 
\end{restatable}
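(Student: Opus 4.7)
The plan is to exhibit three witnesses, one per dropped condition: small stakeholder expressions $\formula{1},\dots,\formula{n}$ whose union lies in the correspondingly extended class but admits no common ground. In each case I first verify the union satisfies the two retained conditions, then rule out every candidate $\formula{}$ by a case analysis on the postulates (P1)--(P6).

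For dropping ``not in conflict'': take $\formula{1}=\{p\rightarrow q\}$ and $\formula{2}=\{p\rightarrow r\}$ with $\BK$ declaring $q,r$ mutually exclusive and $\overline{p}=\{\bar p\}$ a singleton. The union is non-redundant and acyclic but in conflict, since $\ant{p\rightarrow q}\setminus\ant{p\rightarrow r}=\emptyset$. For any common ground $\formula{}$, (P5) with (P4) forces non-trivial clauses $\xi_q=\alpha_q\rightarrow q$ and $\xi_r=\alpha_r\rightarrow r$ with $p\in\alpha_q\cap\alpha_r$, and (P3) then forces each antecedent to contain an atom not derivable from $p$ in $\formula{}$. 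Enumerating the admissible atoms $\{p,\bar p,q,r\}$, every such pair admits either $\xi_q\Rightarrow_\formula{}\xi_r$ or $\xi_r\Rightarrow_\formula{}\xi_q$ with mutex consequents, violating (P1).

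For dropping ``non-redundant'': take $\formula{1}=\{p\rightarrow q,\; q\rightarrow r,\; p\rightarrow r\}$ and $\formula{2}=\{(p\wedge s)\rightarrow u\}$ with $u\in\overline{r}$. The union is acyclic and not in conflict (the only incoherence $(p\wedge s)\rightarrow u\Rightarrow p\rightarrow r$ admits the coherent specialization $(p\wedge q')\rightarrow r$ for $q'\in\overline{s}$), but it is redundant because $\{p\rightarrow q,\; q\rightarrow r\}\models p\rightarrow r$. Any candidate $\formula{}$ must satisfy (P3) for $(p\wedge s)\rightarrow u$, so $\formula{}\not\models (p\wedge s)\rightarrow r$; this rules out keeping both $p\rightarrow q$ and $q\rightarrow r$ unmodified. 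But specializing one, say replacing $p\rightarrow q$ by $\phi'=(p\wedge q'')\rightarrow q$ with $q''\in\overline{s}$, triggers (P6): for $p^*=q''$ and $q^*=s$, the replacement $\phi'^{s\setminus q''}=(p\wedge s)\rightarrow q$ is coherent with $\formula{}$ and entailed by $\formula{1}\supseteq\{p\rightarrow q\}$, while $\formula{1}\models\phi'^{-q''}=p\rightarrow q$ contradicts the required conclusion of (P6). A symmetric argument rules out specializing $q\rightarrow r$.

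For dropping ``acyclic'': the idea is to reproduce the previous trap using a cycle to manufacture the forbidden derivation rather than an explicit redundant clause. Concretely, take $\formula{1}=\{p\rightarrow q,\; q\rightarrow r,\; r\rightarrow p\}$ together with a suitable $\formula{2}$ that, jointly with the cycle, produces the same chain incoherence with some $(p\wedge s)\rightarrow u$-like rule and forces either (P3) or (P6) to fail for every candidate $\formula{}$. The main obstacle lies precisely here: because the cycle offers several potential loci of specialization (any of the three cyclic rules could be weakened), one must carefully verify that each candidate choice either leaves an incoherence-producing chain intact (violating (P3)) or introduces an ``unrelated'' refinement atom that (P6) detects; this enumeration is strictly more involved than the localized analysis used in cases (1) and (2).
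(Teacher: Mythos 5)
Your decomposition into three witnesses, one per dropped condition, is exactly the paper's strategy (it proves three separate theorems, one for each case), and your first witness works: $\formula{1}=\{p\rightarrow q\}$, $\formula{2}=\{p\rightarrow r\}$ with $q,r$ mutually exclusive is non-redundant, acyclic, and in conflict (the relevant antecedent difference is empty), and your enumeration correctly shows that the clauses forced into $\formula{}$ by (P4)/(P5)/(P3) always admit a derivation between them, violating (P1); this is even simpler than the paper's three-clause conflict example. The genuine gap is in the other two cases. Your redundancy witness fails: the expression $\formula{1}\cup\formula{2}=\{p\rightarrow q,\ q\rightarrow r,\ p\rightarrow r,\ (p\wedge s)\rightarrow u\}$ \emph{does} admit a common ground, namely
\[
\formula{}^{\dagger}=\{\,p\rightarrow q,\ (q\wedge \overline{p})\rightarrow r,\ (p\wedge \overline{s})\rightarrow r,\ (p\wedge s)\rightarrow u\,\}.
\]
Your case analysis only considered weakening $q\rightarrow r$ by an excludent of $s$, but $\overline{p}$ is also an admissible weakening atom (since $p\in\ant{(p\wedge s)\rightarrow u}\setminus\ant{q\rightarrow r}$), and it escapes both of your traps: because $(q\wedge\overline{p})\rightarrow r$ and $(p\wedge\overline{s})\rightarrow r$ use \emph{different} excludents, neither is entailed by the remaining clauses (so (P1) holds and $\formula{}^{\dagger}\not\models (p\wedge s)\rightarrow r$, so (P3) holds), and (P6) never fires: swapping $\overline{p}$ back to $p$ in $(q\wedge\overline{p})\rightarrow r$ gives a clause incoherent with $\formula{}^{\dagger}$ (derivation from $(p\wedge s)\rightarrow u$ via $p\rightarrow q$), swapping $\overline{s}$ back to $s$ in $(p\wedge\overline{s})\rightarrow r$ gives a clause incoherent with $(p\wedge s)\rightarrow u$, and swapping $q$ or $p$ for their excludents gives clauses entailed by no $\formula{i}$; in every case the hypothesis of (P6) is not met.

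The reason your example breaks while the paper's does not is structural: in the paper's witness $\formula{1}=\{p\rightarrow q\}$, $\formula{2}=\{(p\wedge s)\rightarrow q,\ (p\wedge s\wedge r)\rightarrow \overline{q}\}$, the redundant clause \emph{subsumes} $p\rightarrow q$ and shares the extra atom $s$ with the attacking clause, so every coherent repair necessarily retains $s$, and then (P6) fires against $\formula{1}$, which entails the repaired clause both with and without $s$. A chain redundancy ($\{p\rightarrow q,\ q\rightarrow r\}\models p\rightarrow r$) does not manufacture an atom that (P6) can flag as unrelated, which is precisely what your $\formula{}^{\dagger}$ exploits. Finally, your cyclic case is not a proof at all but a plan, and the plan is modeled on this same (broken) redundancy trap, so there is no reason to expect it to close; note that in the paper this is the hardest case, requiring six stakeholder expressions $\{p\rightarrow q\}$, $\{(r\wedge\overline{s})\rightarrow\overline{q}\}$, $\{r\rightarrow p\}$, $\{(q\wedge\overline{t})\rightarrow\overline{p}\}$, $\{q\rightarrow r\}$, $\{(p\wedge u)\rightarrow\overline{r}\}$ arranged so that each cyclic clause is attacked, followed by a three-way case analysis showing no candidate survives (P1)--(P6). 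Two of your three required witnesses are therefore missing.
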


The rest of this section is devoted to show that if   $\bigcup^n_{i=1}\formula{i}$
is an acylic, non-redundant, and not in conflict Horn expression then a common ground always exists (Theorem~\ref{thm:main}).
Our proof strategy consists in showing
that Algorithm~\ref{alg:consensus}  returns a common ground for Horn expressions $\formula{1}, \ldots, \formula{n}$,
 if the three mentioned conditions are satisfied.
 % $\bigcup^n_{i=1}\formula{i}$ is acylic.
We also show that Algorithm~\ref{alg:consensus} terminates in 
%\nbtodo{polynomial 
%time} 
in the size of $\formula{1}, \ldots, \formula{n}$ and the size of the background knowledge. %, given as input.

Before we explain the algorithm, we introduce some notions.
Common grounds are  found by modifying incoherent clauses, in particular,
by adding atoms to the antecedent of a clause $\phi$. As we shall see later, at most one atom is added.
The resulting clause $\phi'$ is such that  $\{\phi\}\models \phi'$ but $\{\phi'\}\not\models \phi$.
We may refer to $\phi'$ as the result of `weakening' %a `weakened' version of 
$\phi$ by adding some atom  to its antecedent, or simply say that
%. We may omit 
%the information 
%of 
%which atom was added if this is irrelevant and simply say that
$\phi'$ is a `weaker' version of $\phi$. 
%
%\begin{definition}[Safety and Dependency Graph]
%\label{def:minimal}
%~ A \emph{derivation of $\phi$ w.r.t. $\psi$ and $\formula{}$} is a sequence  
%~ $\phi_1,\ldots,\phi_n$ of clauses in $\formula{}$ such that
%~ $\phi_1=\psi$, $\phi_n=\phi$,  
%~ $\ant{\phi_{i+1}}\subseteq \bigcup_{1\leq j\leq i} \ant{\phi_j}\cup\cons{\phi_j}$,
%~ and, for all $1 < i < n$, there is $j$ such that $i<j\leq n$ and $\cons{\phi_i}\in \ant{\phi_{j}}$. 
%It is \emph{minimal} if
%The \emph{derivation length} of $\phi$ w.r.t. $\psi$ and $\formula{}$
%is the minimal $n$ with $\phi_1,\ldots,\phi_n$ being 
%number of clauses in
%any such derivation. 
%$n$ such that $\phi_1,\ldots,\phi_n$ 
%Moreover, there is no other such sequence $\phi_1,\ldots,\phi_m$ %of $\phi$ w.r.t. $\psi$ and $\formula{}$ 
%with $m < n$. 
%\end{definition}
%
%
%Given two clauses $\psi,\phi$ in a Horn expression $\formula{}$,
%~ We write $\psi\Rightarrow_{\formula{}} \phi$ if there is a derivation  
%~ of $\phi$ w.r.t. $\psi$ and $\formula{}$. 

%
%Given a Horn expression $\formula{}$, 
\begin{definition}\label{def:safe} Let $\phi$ and $\psi$ be definite Horn clauses and let $\formula{}$
be a definite Horn expression\footnote{We may omit `for $\formula{}$'
if this is clear from the context.}. The (incoherence) %{\textbf{
{\bf\dependencygraph 
of %a Horn expression
 $\mathbf{\formula{}}$} is the directed graph $(V,E)$, where
\begin{itemize}
\item $V$ is the set of all pairs $(\psi,\phi)$ such that 
$\psi\Rightarrow_{\formula{}} \phi$ and $\cons{\psi}\in \overline{\cons{\phi}}$, and,
\item $E$ %\subseteq V\times V$ 
is the set of all  $((\psi',\phi'),(\psi,\phi))$ such that
$\phi'\neq \phi$ and 
$\phi'$ occurs in a derivation of $\phi$ w.r.t. $\psi$ and $\formula{}$.
\end{itemize}
%$\psi$ is coherent with the Horn expression that results from replacing 
%$\phi'$ in $\formula{}$ by a weaker version of   $\phi'$. 
%(by adding some atom to the antecedent 
%of $\phi'$).
We say that $v'\in V$ is a \em{parent} for $v\in V$
if % there is no $v'\in V$ such that 
$(v',v)\in E$. \upshape

We 
say that the pair $(\psi,\phi)$ is %{\textbf{
\textbf{\safe} for $\formula{}$ if 
$(\psi,\phi)$ has no parent in the dependency graph
of $\formula{}$.
%\begin{itemize}
%\item 
%~ there  
%~ is a derivation $\phi_1,\ldots,\phi_n$
%~ of $\phi$ w.r.t. $\psi$ and $\formula{}$
%~ such that %, , we have that
%~ $\phi_i$ is coherent with $\formula{}$, for all $1< i <n$, and
%~ %\nb{the next bit may change}
%~ $\cons{\psi}\in \overline{\cons{\phi}}$.
%\end{itemize}
\end{definition} 

Figure~\ref{fig:graph} illustrates an example where there is no pair of \safe clauses.%\footnote{This is the dependency graph of the clauses in the proof of Theorem~\ref{thm:non-existence}.}% Marija: theorem 18 does not show up  until the Apendix so it is confusing to mention it here
For this example, one can also see that there is no common ground (cf. Definition~\ref{def:consensus})
for the Horn expressions. %in Figure~\ref{fig:graph}.

By Lemma~\ref{lem:safe-exists} stated in the following, if the theory is incoherent 
and the Horn expression is acyclic (which also means avoiding cycles
in the dependency graph)
then there is a safe pair of clauses in it.  We use this property in 
Algorithm~\ref{alg:consensus}. 

%\end{definition}
\begin{figure}[t]
    \includegraphics[width=\linewidth]{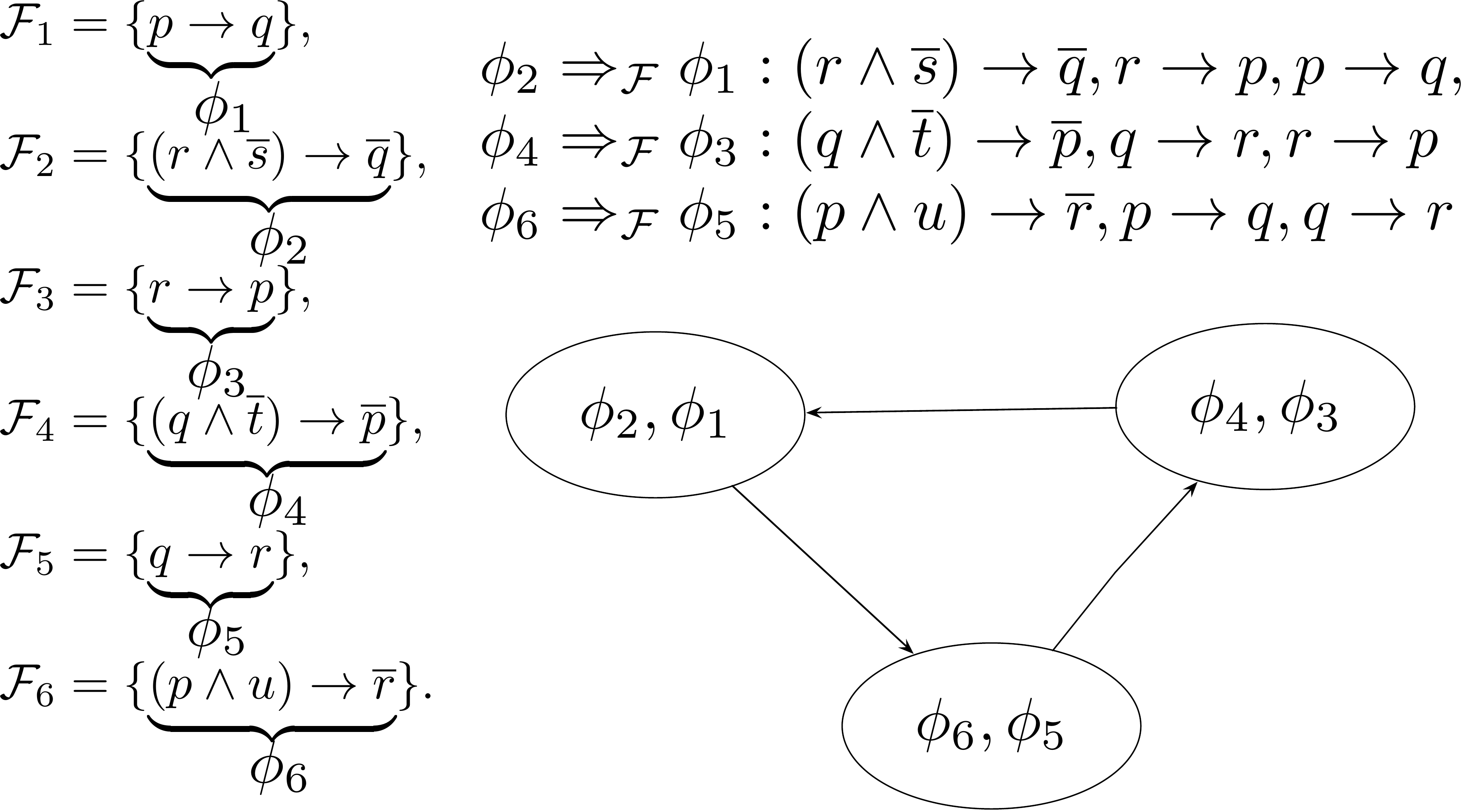}
    \caption{A dependency graph.}
    \label{fig:graph}
\end{figure}

\begin{restatable}{lemma}{safeexists}
\label{lem:safe-exists}
Let $\formula{}$ be an acyclic Horn expression. 
If $\formula{}$ is  incoherent then there are 
$\psi,\phi\in \formula{}$ such that $(\psi,\phi)$ is \safe (Definition~\ref{def:safe}).
Moreover, one can find $\psi,\phi\in \formula{}$ such that $(\psi,\phi)$ is \safe
in quadratic time in the size of $\formula{}$ (and \BK). 
\end{restatable}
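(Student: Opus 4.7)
My plan is to prove that the \dependencygraph of $\formula{}$ is itself a DAG, so, being finite and non-empty, it must contain a source vertex, which is by definition \safe. First I would use incoherence of $\formula{}$ together with the second bullet of Definition~\ref{ex:coherence} to produce some $\psi,\phi \in \formula{}$ with $\psi \Rightarrow_{\formula{}} \phi$ and $\cons{\psi} \in \overline{\cons{\phi}}$, guaranteeing that the vertex set of the dependency graph is non-empty.

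The heart of the argument is the following claim: whenever $((\psi',\phi'),(\psi,\phi)) \in E$, there is a directed path of positive length from $\phi'$ to $\phi$ in the auxiliary \emph{rule graph} $G_{\formula{}}$ on $\formula{}$ whose edges are the pairs $(\chi,\chi')$ with $\cons{\chi}\in\ant{\chi'}$. Observe that $G_{\formula{}}$ is acyclic if and only if $\formula{}$ is acyclic as a Horn expression, so the acyclicity hypothesis bites here. To prove the claim, I would take a derivation $\phi_1 = \psi, \phi_2, \ldots, \phi_n = \phi$ witnessing that $\phi' = \phi_i$ occurs in a derivation of $\phi$ from $\psi$, noting that $\phi' \neq \phi = \phi_n$ forces $i < n$ and that ``occurs'' refers to non-initial positions, so $i \geq 2$. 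Condition~(3) of the derivation definition then furnishes some $j$ with $i < j \leq n$ and $\cons{\phi_i} \in \ant{\phi_j}$, giving an edge $\phi_i \to \phi_j$ in $G_{\formula{}}$. If $j = n$ we are done; otherwise condition~(3) applies again at $\phi_j$ and supplies a strictly larger index, and iterating finitely many times we must reach $\phi_n = \phi$ and obtain the desired path.

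Armed with the claim, suppose for contradiction that the dependency graph contains a directed cycle $(\psi_1,\phi_1) \to \ldots \to (\psi_k,\phi_k) \to (\psi_1,\phi_1)$. Applying the claim to each edge yields a $G_{\formula{}}$-path from $\phi_l$ to $\phi_{l+1}$ (indices mod $k$), and concatenating these paths closes up to a directed cycle in $G_{\formula{}}$, contradicting the acyclicity of $\formula{}$. Hence the dependency graph is a DAG with a non-empty vertex set, and any source of it is a \safe pair. For the complexity, by Proposition~\ref{lem:derivation} each test $\psi \Rightarrow_{\formula{}} \phi$ runs in linear time, so the vertex set $V$ and the parent relation can be assembled in quadratic time in the size of $\formula{}$ and $\BK$, after which a source is extracted by a final scan.

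The principal obstacle is the chain-extraction step: converting the semantic edge condition (``$\phi'$ occurs in a derivation of $\phi$ from $\psi$'') into a syntactic $G_{\formula{}}$-path. The delicate point is the boundary position $i = 1$, where condition~(3) of a derivation does not apply; the whole argument hinges on reading ``occurs'' as referring to non-initial positions in the derivation, after which condition~(3) fires uniformly and the induction on the position gap $n - i$ goes through cleanly.
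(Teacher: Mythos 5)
Your proof is correct and takes essentially the same approach as the paper's: both argue that a directed cycle in the \dependencygraph would contradict the acyclicity of \formula{}, and then extract a parentless (hence \safe) vertex from the resulting finite, non-empty DAG. The main difference is rigor: where the paper simply asserts that a cycle among pairs ``occurring in derivations'' contradicts acyclicity, your chain-extraction claim (iterating the third derivation condition to turn each dependency edge into a path in the auxiliary rule graph, then concatenating) supplies exactly the missing justification, and your reading of ``occurs'' as referring to non-initial positions is indeed the one required for the lemma to hold, since a derivation's first element need not satisfy that third condition.
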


\upshape
Algorithm~\ref{alg:consensus} receives as input the background knowledge \BK and a finite list of definite Horn expressions $\formula{1}, \ldots \formula{n}$. It verifies that $\formula{} = \ \bigcup^n_{i=1}\formula{i}$
%\nb{add here that the horn theory is restricted, actions and preconditions are disjoint}
 is not in conflict,  not redundant, and not acyclic.
%If $\formula{}$ is acyclic then 
At each iteration of the ``while'' loop (Line~5),
Algorithm~\ref{alg:consensus} 
first selects clauses $\psi,\phi\in\formula{}$ such that $(\psi,\phi)$ is safe (Line~6).
%$\formula{}\cup \ant{\psi}\models
%\atom\wedge \cons{\phi}$ with $\atom\in\overline{\cons{\phi}}$ 
%
By Lemma~\ref{lem:safe-exists}, at least one safe pair is guaranteed to exist.
 Then, in Line~7, it resolves  incoherences
%
%the algorithm selects a pair $(\psi,\phi)$
%of safe clauses (guaranteed to exist by Lemma~\ref{lem:safe-exists})
by replacing $\phi$ with
all weaker versions of this clause that are coherent with the Horn expression being constructed.
As we formally state later on in Theorem~\ref{thm:main}, Algorithm~\ref{alg:consensus}  outputs a common ground for $\formula{}$ (and \BK).

\begin{figure}[t]
    \includegraphics[width=\linewidth]{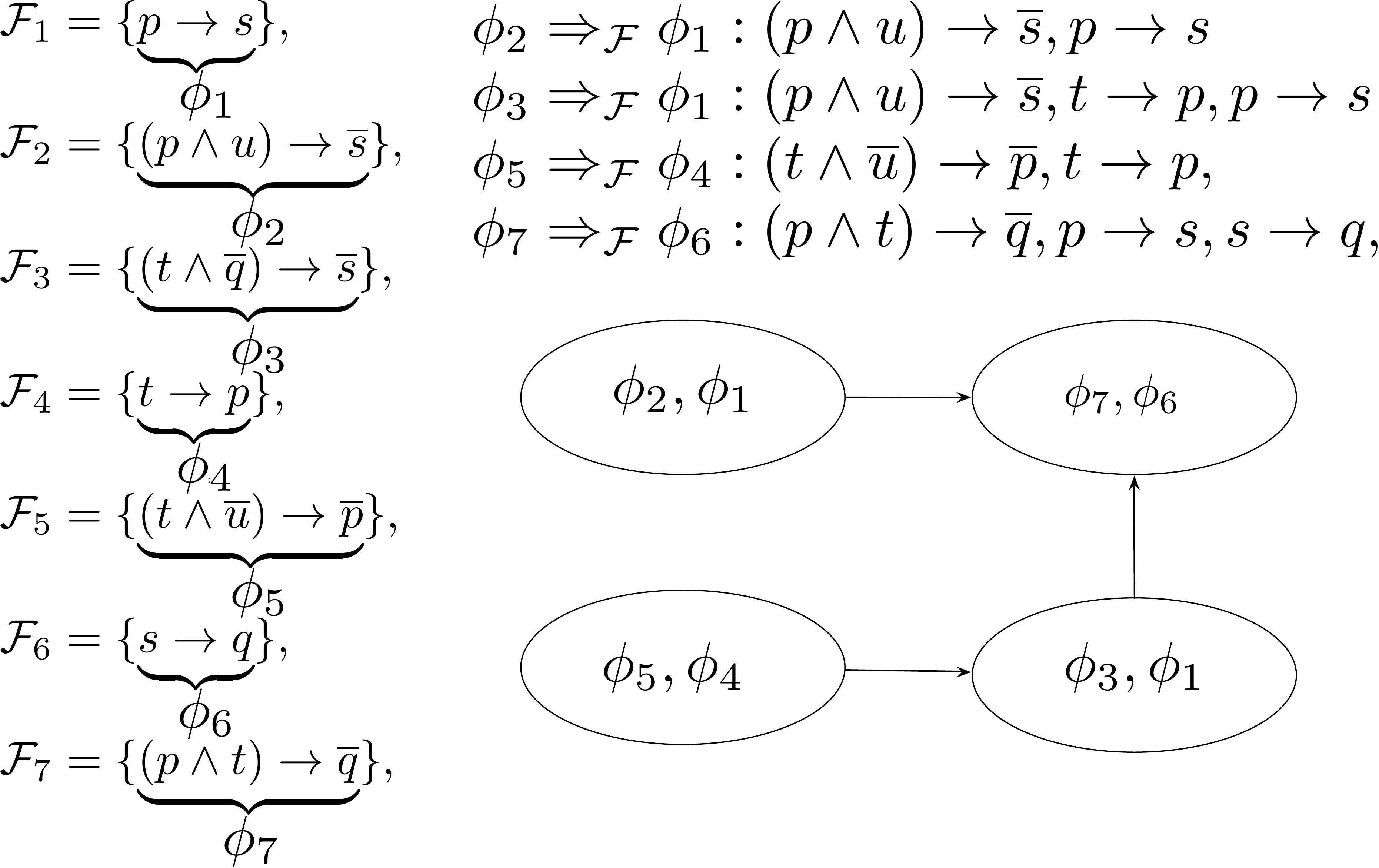}
    \caption{A dependency graph with \safe clauses.}
    \label{fig:graph2}
\end{figure}

\begin{example}\upshape 
Consider $\formula{1}-\formula{7}$ as in Figure~\ref{fig:graph2}. 
We have that $\formula{}=\bigcup^7_{i=1}\formula{i}$ is not in conflict, 
 not redundant, acyclic, but incoherent. The nodes $(\phi_2,\phi_1)$ and $(\phi_5,\phi_4)$ have no parent. 
 Algorithm~\ref{alg:consensus} iterates twice, each time selecting one of these pairs (the order does not change the result).
 It then returns a common ground $\formula{}^\ast$ for $\formula{1}, \ldots, \formula{7}$,  which is the union of: 
\[
\begin{array}{ll}
\formula{1}^*=\{(p \wedge \overline{u})\rightarrow s \}, & \formula{2}= \{ (p\wedge u)\rightarrow \overline{s} \},\\
%\\
\formula{3}=\{ (t\wedge \overline{q})\rightarrow \overline{s}\}, & \formula{4}^*=\{ (t \wedge u)\rightarrow p\},\\
%\\ 
\formula{5}=\{ (t\wedge \overline{u})\rightarrow \overline{p}\}, & \formula{6}=\{ s\rightarrow q\}, \\
%\\
\formula{7}=\{(t\wedge u)\rightarrow \overline{q}\}. &\\
\end{array}
\]
\hfill {\mbox{$\triangleleft$}}
\end{example}

We assume that Algorithm~\ref{alg:consensus}  selects  clauses following
some fixed but arbitrary order (e.g. lexicographic) if there are multiple \safe pairs of clauses.

\begin{algorithm}[tb]
\caption{Building coherent \formula{} }
\label{alg:consensus}
\textbf{Input}: Horn expression sets $\formula{1}, \ldots \formula{n}$ and $\BK$. \\%with $\formula{}$ being acyclic, non-redundant, and  %incoherent but 
%		not in conflict\\
%\textbf{Parameter}: Optional list of parameters\\
\textbf{Output}: A common ground  for $\formula{1}, \ldots \formula{n}$and $\BK$ or $\emptyset$

\begin{algorithmic}[1] %[1] enables line numbers
\STATE $\formula{}:= \formula{1} \cup \cdots \cup \formula{n}$
\IF {$\formula{}$ is  cyclic \textbf{or} redundant \textbf{or}    in conflict} \STATE  \textbf{return}  $\emptyset$ (A common ground may not exist by Th.~\ref{thm:main-negative}) \ENDIF
\WHILE{\formula{} is incoherent
%there are $\psi,\phi\in \formula{}$		with $\psi\Rightarrow_{\formula{}}\phi$
%$p\in\overline{\cons{\phi}}$ and 
%$\formula{}\cup \ant{\psi} \models
%p\wedge \cons{\phi}$
%and $\cons{\psi}\in\overline{\cons{\phi}}$
}
\STATE Find $\psi,\phi\in \formula{}$ such that $(\psi,\phi)$ is \safe
%and $\cons{\psi}\in \overline{\cons{\phi}}$ 
\label{l:choice}
\STATE 	Replace $\phi$ by all
$\phi'\in\{ \phi^{+p}\mid p\in \overline{\literal},
				 \literal\in\ant{\psi}\setminus\ant{\phi}\}$
				 coherent with 
				 $\formula{}\setminus\{\phi\}$ \label{ln:replace}
\ENDWHILE
\STATE \textbf{return} \formula{}
\end{algorithmic}
\end{algorithm}

The condition that 
$\formula{}\setminus\{\phi\}\not\models\phi$ in Definition~\ref{ex:coherence}
is important to ensure that if  the input of Algorithm~\ref{alg:consensus} is non-redundant then the output 
is also non-redundant. %avoid introducing redundancies while resolving incoherences.
We illustrate this with the following example.
\begin{example}\upshape
Consider %the formulas
\[
\begin{array}{l}
\formula{1}=\{(q \wedge r)\rightarrow s, \ (p\wedge q\wedge \overline{r})\rightarrow \overline{s}\}, \\
\formula{2}= \{ (p\wedge q)\rightarrow s \}.\\
%\formula{3}=\{ \}.\\
%\formula{4}=\{ (t \wedge u)\rightarrow p\},\\ 
%\formula{5}=\{ (t\wedge \overline{u})\rightarrow \overline{p}\},  \\
%\formula{6}=\{ s\rightarrow q\},\\
%\formula{7}=\{(t\wedge u)\rightarrow \overline{q}\}.\\
\end{array}
\]
Algorithm~\ref{alg:consensus} would resolve the incoherence in this case by 
replacing the clause  in  $\formula{2}$ by  $(p\wedge q\wedge {r})\rightarrow {s}$.
The latter rule is redundant because it is implied by $\formula{1}$.
 \end{example}

We point out that conflict, redundancy, and acyclicity %conflict 
are all
conditions that can be determined in polynomial time, %since it is may not be so clear that the cublic comes from quadradic times linear it may be better to just leave poly here
since the number of iterations is linear on the size of the input,  it can be determined in 
polynomial time whether the input of Algorithm~\ref{alg:consensus} is 
as expected.
We are now ready to state our main theorem. % (our upper bound).

\begin{restatable}{theorem}{main}
%\begin{theorem}
\label{thm:main}
The output of Algorithm~\ref{alg:consensus} with
an acyclic, non-redundant, not in conflict Horn expression $\bigcup^n_{i=1}\formula{i}$ and \BK as input
is a  common ground for $\bigcup^n_{i=1}\formula{i}$ and \BK. Algorithm~\ref{alg:consensus} terminates in 
$O((|\bigcup^n_{i=1}\formula{i}|+|\Bmc|)^4)$.
%\nbtodo{polynomial time} w.r.t.
%$|\bigcup^n_{i=1}\formula{i}|+|\BK|$.
%Then, \formula{} is a compromise
%for $\bigcup^n_{i=1}\formula{i}$. % (and \BK).
% if
%\formula{} is a possible compromise    for $\formula{1},\ldots \formula{n}$ and  for all possible compromise  formulas $\formula{}'$
%such that $\formula{}'\models \formula{}$ %and 
%we have that $\formula{}\models \formula{}'$. 
\end{restatable}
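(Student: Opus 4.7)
The plan is to establish termination together with an iteration invariant that $\formula{}$ stays acyclic, non-redundant, and not in conflict, then verify the six postulates in turn, and finally extract the complexity bound. Observe first that Algorithm~\ref{alg:consensus} only ever adds atoms to antecedents and never touches consequents, so acyclicity is preserved for free, and that every replacement $\phi'=\phi^{+p}$ satisfies $\{\phi\}\models\phi'$. For termination I would use Lemma~\ref{lem:safe-exists} to guarantee a \safe pair $(\psi,\phi)$ exists in every incoherent acyclic state, then argue via a well-founded measure: because $(\psi,\phi)$ has no parent in the \dependencygraph, resolving the incoherence between $\psi$ and $\phi$ cannot break or depend on any other incoherent pair, so the number of nodes of the \dependencygraph strictly decreases each iteration, while the coherence check on Line~7 prevents redundant clauses from being reintroduced.

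Next, I would verify the postulates. P1 is immediate from the loop exit condition, and P2 is immediate because a coherent input skips the loop entirely. P4 and P5 both reduce to the observation $\{\phi\}\models\phi^{+p}$: every output clause descends from some input clause by successive weakenings, and conversely every input clause implies each of its descendants. For P3 I would use the assumption that the input is not in conflict: for the \safe pair $(\psi,\phi)$ selected on Line~6, Definition~\ref{def:conflict} guarantees the existence of at least one atom $p\in\overline{\literal}$ with $\literal\in\ant{\psi}\setminus\ant{\phi}$ such that $\phi^{+p}$ is coherent with $\formula{}\setminus\{\phi\}$, so Line~7 does not delete $\phi$ without replacement and never admits a rule that entails a direct negation of some stakeholder clause.

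The main obstacle, and the one that warrants the most care, is P6: one must rule out that the algorithm introduces rules whose antecedents mention atoms extraneous to the original incoherence. The key is that Line~7 only adds atoms $p\in\overline{\literal}$ where $\literal\in\ant{\psi}\setminus\ant{\phi}$, so any new atom is semantically tied to the more specific rule $\psi$ participating in the incoherence; I would argue that removing such a $p$ from $\phi^{+p}$ would reinstate exactly the incoherence just resolved, matching the ``$\formula{i}\not\models\phi^{-p}$'' clause of P6. This requires a careful case analysis using the \safe property to exclude interference from other, unrelated incoherent pairs. Finally, for complexity: by Lemma~\ref{lem:safe-exists} each safe-pair search runs in $O((|\formula{}|+|\BK|)^2)$ time, each coherence check on a candidate runs in linear time by Proposition~\ref{lem:derivation}, and the number of iterations is at most $O((|\formula{}|+|\BK|)^2)$ because each iteration removes a node of the \dependencygraph and the set of possible antecedent atoms is bounded by $|\formula{}|+|\BK|$; multiplying these bounds yields the stated $O((|\bigcup^n_{i=1}\formula{i}|+|\Bmc|)^4)$ running time.
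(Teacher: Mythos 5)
Your overall architecture (preserve invariants across iterations, verify the postulates one by one, then do the complexity accounting) matches the paper's, and your treatment of P1, P2, P4, and the main case of P6 is essentially the paper's. But there are genuine gaps. First, you are missing the paper's central structural lemma (Lemma~\ref{lem:tech-loop}): in every iteration, \emph{both} members of any \safe pair belong to the original input $\formula{}^1$, never to a weakened clause added in Line~\ref{ln:replace}. This is what bounds the number of iterations \emph{linearly} in $|\bigcup^n_{i=1}\formula{i}|$, since each input clause is replaced at most once. Your substitute---that the number of nodes of the \dependencygraph strictly decreases---is plausible but unproven (you must show that the clauses added in Line~\ref{ln:replace} create no new incoherent pairs, which is exactly the delicate content of Lemmas~\ref{lem:tech-loop} and~\ref{lem:not-empty}, proved there using that all added clauses share their consequent with the replaced clause), and even if granted it only yields a \emph{quadratic} iteration bound. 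This breaks your complexity accounting: a coherence check of one candidate $\phi^{+p}$ against $\formula{}$ is not linear---it requires a derivation test against each clause of $\formula{}$ (Proposition~\ref{lem:derivation} is linear for \emph{one} pair), i.e.\ quadratic time, which is why the paper allots cubic time to Line~\ref{ln:replace}. Quadratically many iterations times cubic work per iteration gives a fifth-power bound, not the claimed $O((|\bigcup^n_{i=1}\formula{i}|+|\Bmc|)^4)$; the stated bound genuinely needs the linear iteration count.

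Second, two of your postulate arguments lean on an invariant you never establish: that $\formula{}$ \emph{remains} not in conflict after each replacement. The hypothesis gives this only for the input; the paper proves preservation by induction (Lemma~\ref{lem:not-empty}). Without it, your claim that Line~\ref{ln:replace} ``does not delete $\phi$ without replacement''---which is the ingredient actually needed for P5---is unsupported at iterations beyond the first. Relatedly, your P3 argument is misdirected: non-emptiness of the replacement set does not by itself rule out that the \emph{output as a whole} entails $\ant{\phi}\rightarrow p$ for some input clause $\phi$ and $p\in\overline{\cons{\phi}}$. The paper's Lemma~\ref{lem:pthree} proves P3 by converting such an entailment into a derivation $\phi'\Rightarrow_{\formula{}}\phi^\ast$ with $\cons{\phi^\ast}=p$ from a surviving weakening $\phi'$ of $\phi$, contradicting coherence of the output (the loop exit condition); some argument of this shape is required. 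Finally, in P6 you treat only clauses introduced by Line~\ref{ln:replace}; for input clauses that are never replaced, P6 follows from non-redundancy of the input (Lemma~\ref{lem:psix}), a case your sketch omits.
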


%~ It would be possible to modify Algorithm~\ref{alg:consensus} so as to ensure
%~ (P6) in
%~ %the last condition of 
%~ Definition~\ref{def:consensus} by removing clauses that violate this property
%~ before returning the output of the algorithm. 
%~ All the steps needed to check this property can be computed in polynomial time. 
%~ However, ensuring this postulate could potentially violate (P5).
%~ The main difficutly in building full compromises is the balance between
%~ these two properties.

%Even though we choose to ensure (P5) over (P6) in the algorithm,  
The \safe condition in Line~\ref{l:choice} of Algorithm~\ref{alg:consensus}
avoids that the algorithm introduces rules that
%in a derivation (Definition~\ref{def:minimal})
%could result in a Horn expression that 
violate (P6). We illustrate this in the following example. %~\ref{ex:psix} a case showing that %in which
%removing    %of minimality 
%, (P7).
\begin{example}\label{ex:psixtwo}\upshape
Assume that the \safe condition is not %present 
 in Line~\ref{l:choice} of Algorithm~\ref{alg:consensus} and that 
 the algorithm can choose any $\psi,\phi\in\formula{}$ such that  
 $\psi\Rightarrow_{\formula{}} \phi$ and $\cons{\psi}\in\overline{\cons{\phi}}$.
Let 
\[
\begin{array}{l}
\formula{}:=\{p\rightarrow s, \
  (t\wedge\overline{q})\rightarrow \overline{s},\
  t\rightarrow p,\
  (t\wedge \overline{u}) \rightarrow\overline{p} 
\}.
\end{array} 
\]
%~ Consider $\formula{}$ with 
%~ \begin{equation}
%~ \begin{array}{l}
%~ c_1: p\rightarrow s,\\ 
 %~ c_2: t\wedge\overline{q}\rightarrow \overline{s},\\ 
%~ c_3: t\rightarrow p,\\
%~ c_4:  t\wedge \overline{u} \rightarrow\overline{p} \\
%~ %  c_5: s\rightarrow q.
 %~ \end{array}
%~ \end{equation}
%where  clause $c_i$ is in  some $F_i$.
%Our algorithm could select $c_1$ and $c_2$ because the antecedent of
%$c_2$ implies the antecedent of $c_1$ and this clause has minimal derivation length such that
%the consequent of $c_1$ is in the complement of the consequent of $c_2$.
%Without the \safe condition,
Then, our algorithm could select the first two clauses
and replace $p\rightarrow s$ by some clauses, $(p\wedge q)\rightarrow s$ being one of them.
%Then it replaces $c_1$ by some clauses, $p\wedge q\rightarrow s$ being one of them.
 %There is still an incoherence between $c_3$ and $c_4$. Then, $c_3$ is replaced by
The last two clauses are still incoherent. Selecting them means 
%If the algorithm selects
  that $t \rightarrow p$ is replaced by
$(t\wedge u) \rightarrow p$. 
Now $(p\wedge q)\rightarrow s$
violates (P6) because $(p\wedge \overline{q})\rightarrow s$
is coherent with the other clauses and $\formula{}\models p\rightarrow s$ (in other words, 
$\formula{}\models ((p\wedge q)\rightarrow s)^{-q}$). \hfill {\mbox{$\triangleleft$}}
%but $p\wedge q \rightarrow s$
%does not satisfy P6.  
\end{example}

% Ensuring (P7) without (P6) could introduce 
%unintended clauses, such as the ones in Example~\ref{ex:psix}.
%the input of Algorithm~\ref{alg:consensus} is non-redundant
%then the output is also non-redundant. 

So far we have  referred to `a common ground'.
Example~\ref{ex:notunique} illustrates that common grounds may not be unique.
%thus avoiding non-determinism. 

\begin{example}\label{ex:notunique}\upshape
Consider %the formulas
\[
\begin{array}{l}
\formula{1}=\{(t \wedge \overline{p})\rightarrow r, \ (\overline{p}\wedge s)\rightarrow {r} \}, \  %\\
\formula{2}= \{ \overline{p}\rightarrow \overline{r} \}. %,  %\\
%\formula{3}=\{ \}.\\
%\formula{4}=\{ (t \wedge u)\rightarrow p\},\\ 
%\formula{5}=\{ (t\wedge \overline{u})\rightarrow \overline{p}\},  \\
%\formula{6}=\{ s\rightarrow q\},\\
%\formula{7}=\{(t\wedge u)\rightarrow \overline{q}\}.\\
\end{array}
\]
Replacing the clause in $\formula{2}$ by either $(\overline{p}\wedge\overline{t})\rightarrow \overline{r}$
or $(\overline{p}\wedge\overline{s})\rightarrow \overline{r}$ would yield a common ground.
\end{example}
 
A way to deal with non-uniqueness could be by adding all possible weakening options in the common ground. 
In the next sections, we relate our  work
with the literature and discuss our results. 
\section{Related Work} \label{sec:relwork}
%Within AI, the aggregation of opinions from multiple stakeholders has been considered as a problem of computational social choice, 
%specifically voting \citep{Zwicker16,NoothigattuGADR18},  or judgment aggregation \citep{Endriss16}. We can also consider 
%formulating the aggregation of opinions as a belief merging problem \citep{SchwindM18}. Belief merging (BM) 
%is particularly interesting because it explicitly considers the requirement for the aggregated result to be consistent with each of the individual  information being merged. This is the so called  ``consensus property'' \citep{SchwindM18}.

Our common ground  postulates   resemble the integrity constraints (IC) postulates in belief merging (BM)~\citep{KoniecznyP02,SchwindM18}.   However it is not trivial to liken the two sets of postulates,  even if we liken background knowledge to the integrity constraints in BM. 
The  BM postulates are constructed to capture the idea of minimal change in belief merging and to align merging with belief revision (with the  consensus postulate  from \cite{SchwindM18} considered in addition).  Our (P3-P5) pose requirements that are more exacting than the consensus postulate of \cite{SchwindM18}, while  (P6) %and (P7) 
does not have an intuitive counterpart in BM. 
% It is shown in \cite{SchwindM18} that no BM operator can satisfy IC0-IC8 and the consensus postulate.  What we do can perhaps best be described as iterated belief revision \citep{Gauwin2005} rather than merging.  More work is needed to pinpoint the exact relation between our work and belief merging/revision as it is not obvious.

%This is problematic if we are to use this reasoning to guide the actions of an agent, since we are again left with no way to choose between the extensions.    

 Resolving deontological conflicts has  a long tradition \citep{Santos2018}.  In deontic logic our example becomes:  
you ought to call the police when you witness illegal behavior.  
A moral dilemma occurs when two excludent propositions are obligatory in the same situation:
 in our case it is both obligatory to call the police and it is obligatory to not call  the police. Standard deontic logic
 %, 
 %which is a modal logic, 
 interprets obligation as a necessity operator - $O{\sf policeCall} $ represents 
 the police ought to be called. 
Deontic conflicts occur when two or more norms cannot be enacted at the same time.  Norms are typically expressed as formulas of modal (deontic)  logic\footnote{This is why we avoid calling our formulas norms.} and the problem of  conflict resolution is considered as part of deontic logic reasoning. 
An overview of the state of the art is given in \citep{Santos2018}. \cite{Lellmann2020} consider sequent rules 
for conflict resolution in dyadic deontic logics using the principle of specialization we consider here, but  they 
 propose setting priorities between norms rather than changing them as we do. 
%To the best of our knowledge, the approach of revising norms from more general to more specific, in order to reach a compromise, has not been considered in the deontic conflict literature. 

%The problem we consider in this paper has a long standing tradition in   deontic reasoning. 
%The problem of reasoning about what one should do is modeled 
 %As a consequence,   deontic logic semantics in effect 
 %does not allow for moral conflicts to exist.  
 %
 Our example can also be given as a  
 of non-monotonic statement: normally you call the police when you witness illegal behavior, 
 but when the perpetrator is a child you make an exception and call the parents. Conflicts among rules in non-monotonic reasoning  have been considered in for example \cite{Reiter1981} and \cite{DelgrandeS97}. 
 
 \cite{Horty94} proposed 
 a non-monotonic approach to deontic logic by introducing the concept of  
 conditional obligations and defining when one deontic rule overrules another. Our example of  parents that should be called when a child is involved in illegal behaviour becomes a conditional obligation in Horty's system, that overrides the obligation to call the police when illegal activity is detected.  Further, Horty introduces the concept of conditional extension, that is the set of conclusions of the not overridden rules. When there are conflicting obligations, these are placed in separate extensions.

\cite{Hare1952}  argues that we experience a process of learning to act properly by getting 
right the qualifications of the rules we use. As our abilities increase we modify the initial rules in a way that
 yields a more complex set of rules which handle exceptions.  When discussing his own approach to 
 dealing with underspecified rules (see also in Section~\ref{sec:relwork}),  
 \cite{Horty94} brings up the views of Hare to  argue that Hare's strategy of 
 encoding the exceptions explicitly into rules is problematic. He points to the conclusions of \cite{Touretzky84} 
 that since any %working 
 knowledge based system must be able to accommodate updates in a simple way, 
 if rules were to be continually modified in order to reflect new exceptions that are being introduced, 
 this would make the update operation too difficult and the resulting default unwieldy. 
 One way to look at our work is that we show  that ``unwieldy''  is not the case for our Horn rules, under the circumstances we specify.

%The focus of work in deontic logics is typically not to derive a compromise or consensus with particular properties.
\section{Discussion}\label{sec:conclusion}
%Incoherences can be solved by  introducing new symbols expressing exceptional conditions in the antecedents of some rules. 
%It could happen that a conflict cannot be overcome and at least two stakeholders 
%cannot think of exceptions to their rules that for the same situation prescribe mutually excludent actions. 
%We consider this then to be a clear sign that we are dealing with a situation in which no rule should be passed on to the autonomous system. 
%

Our algorithm is resolves incoherences following the {\em lex specialis derogat legi generali}  legal principle. %I think it is better to not emphasize efficiency because cubic complexity may not be so efficient
We show that for non-redundant, acyclic, and not in conflict Horn expressions,
it produces a  
common ground in polynomial time (we 
%focused 
%on showing a polynomial time upper bound and 
leave optimisations 
of the algorithm as future work). %I think it is better to not emphasize the complexity because cublic is not a very good one
If any of the three conditions is removed then we prove that there is 
no algorithm that is guaranteed to produce a common ground.

%It is clear that 
Our approach does not solve the larger problem of how to reason with underspecified rules in general. 
This problem cannot be handled by explicitly adding new exceptions as their number is unlimited. %tre are potentially infinitely many exceptions.
% However, we consider 
%that common grounds are built 
%for sets of rules that govern the behaviour of an autonomous system in a given context, 
%while performing certain operations,  which is in line with the capabilities of autonomous systems and robots today. 
Furthermore, people do not think of exceptions ahead of time, 
they dynamically change their rules of thumb when they become aware of an exception. %Our approach can handle this situation. 
Nevertheless, the algorithm we propose can be used to effectively 
%``update'' or ``revise''
compute a common ground for 
%the
%update a set 
%of 
incoherent rules.
% with new rules that the existing stakeholders may want to add. %, or when there is a change in stakeholders. 
%The algorithm we propose cannot resolve conflicts. 

We interpret conflicts as an indication that the stakeholders are not yet ready to be engaged in finding a common ground. 
The pre-step then would be to signal to the stakeholders that they should consider whether there are exceptions or 
qualifiers to some of the suggested rules, which  they may not have considered. 
The requirement of only considering non-redundant clauses is not particularly limiting as the clauses can be pre-processed to 
remove redundancies. 
Recommendation rules tend to be such that the actions (which appear in the right side of the rule)
are not among the preconditions (which appear in the left side of the rule), as in  Example~\ref{ex:consensus}.
Thus, they can still be represented in the acyclic fragment of the Horn logic. 
%There are many scenarios where the premisse of the  acyclicity requirement can also be seen as not a very strong one 
%in . % I think that saying there is not algorithm which can avoid the restrictions
%is a better argument than saying that the restrictions are not severe, because some may find them severe

%~ We study the problem of reaching a common ground for 
%~ behaviour rules from stakeholders, represented as Horn expressions, and propose
%~ %propose 
%~ an algorithm that builds in polynomial time a   common ground for them.
%among different opinions 
%describing desirable behavior (of a machine) as Horn rules. 
%We leave open the question  of whether, under the three conditions, \emph{full compromises} (which includes (P6) and (P7)) 
%can be built in polynomial time. 
%\nb{limitations}.
%
%There are many directions of future work that one could explore. \nb{to complete}.
As future work, it would be interesting to investigate 
%how our notion of common ground
%could be adapted to the case in which stakeholders can express 
the case of conflicting rules but allowing the stakeholders to express preferences
over the %ir 
rules they propose. %recommendations.
%We also
% plan 
% to apply our algorithm
%for finding a common ground for behavioural rules. % represented as Horn expressions.
%implement an algorithm for building compromises among different regions (or even people). 
%We have developed a compromise building algorithm that works on behavioural norms represented as Horn clauses. 
%An interesting study case could be based on %the one created with 
The moral machine experiment~\cite{moral} provides an interesting starting point for 
applying our algorithm: the choice 
of each user to a scene, e.g., ``if the passenger is a dog and a cat and the pedestrian is an elderly woman then crash into a barrier'' 
can be naturally represented as a Horn rule $(p\wedge q) \rightarrow r$. 
It would be interesting to analyse, e.g., what would be a common ground for %the  rules of 
a country or a group of countries.

%~ One can see the problem of reaching compromises as an active learning problem, where each choice of the user
%~ in the moral machine experiment
%~ corresponds to the answer to a membership query. % within the exact learning model.
%~ In particular, we would like to investigate whether ~\cite{DBLP:journals/ml/AngluinFP92}'s algorithm for 
%~ learning Horn expressions (and extensions of it, as the ones by~\cite{DBLP:journals/toct/HermoO20,DBLP:conf/alt/HermoO15}) 
%~ could be adapted to our  setting with multiple incoherent oracles. 

%Assume that each choice example from the moral machines experiment is behavioural norm represented as a Horn clause. 
%The compromise algorithm is applied to these choices obtained from different people during the moral machines experiment.
%An interesting aspect of our algorithm is that of manipulability: can the stakeholders be strategic in specifying their Horn rules with the goal of imposing their requirements over others. 

\bibliographystyle{named}
\bibliography{ourbib.bib}

\begin{thebibliography}{}

\bibitem[\protect\citeauthoryear{Awad \bgroup \em et al.\egroup }{2018}]{moral}
E.~Awad, S.~Dsouza, R.~Kim, J.~Schulz, J.~Henrich, A.~Shariff, J.~Bonnefon, and
  I.~Rahwan.
\newblock The moral machine experiment.
\newblock {\em Nature}, 563, 11 2018.

\bibitem[\protect\citeauthoryear{Baum}{2020}]{Baum20}
S.~D. Baum.
\newblock Social choice ethics in artificial intelligence.
\newblock {\em {AI} Soc.}, 35(1):165--176, 2020.

\bibitem[\protect\citeauthoryear{Bj{\o}rgen \bgroup \em et al.\egroup
  }{2018}]{BjorgenMBHHLLDS18}
E.~P. Bj{\o}rgen, S.~Madsen, T.S. Bj{\o}rknes, F.~V. Heims{\ae}ter,
  R.~H{\aa}vik, M.~Linderud, P.~Longberg, L.~A. Dennis, and M.~Slavkovik.
\newblock Cake, death, and trolleys: Dilemmas as benchmarks of ethical
  decision-making.
\newblock In {\em Conference on AI, Ethics, and Society, {AIES}}, pages 23--29,
  2018.

\bibitem[\protect\citeauthoryear{Botan \bgroup \em et al.\egroup
  }{2021}]{Botan2021}
S.~Botan, R.~{de Haan}, M.~Slavkovik, and Z.~Terzopoulou.
\newblock Egalitarian judgment aggregation.
\newblock {\em CoRR}, abs/2102.02785, 2021.
\newblock Accepted for AAMAS'2021.

\bibitem[\protect\citeauthoryear{Bremner \bgroup \em et al.\egroup
  }{2019}]{bremner2018}
P.~Bremner, L.~A. Dennis, M.~Fisher, and A.~F. Winfield.
\newblock On proactive, transparent, and verifiable ethical reasoning for
  robots.
\newblock {\em Proceedings of the IEEE}, 107(3):541--561, 2019.

\bibitem[\protect\citeauthoryear{Delgrande and Schaub}{1997}]{DelgrandeS97}
James~P. Delgrande and Torsten Schaub.
\newblock Compiling reasoning with and about preferences into default logic.
\newblock In {\em Proceedings of the Fifteenth International Joint Conference
  on Artificial Intelligence, {IJCAI} 97, Nagoya, Japan, August 23-29, 1997, 2
  Volumes}, pages 168--175. Morgan Kaufmann, 1997.

\bibitem[\protect\citeauthoryear{Dignum}{2019}]{Dignum2019ch}
V.~Dignum.
\newblock {\em Ethical Decision-Making}, pages 35--46.
\newblock Springer International Publishing, Cham, 2019.

\bibitem[\protect\citeauthoryear{Donagan}{1984}]{Donagan84}
A.~Donagan.
\newblock Consistency in rationalist moral systems.
\newblock {\em The Journal of Philosophy}, 81(6):291--309, 1984.

\bibitem[\protect\citeauthoryear{Hare}{1952}]{Hare1952}
R.~M. Hare.
\newblock {\em The language of morals}.
\newblock Oxford University Press, 1952.

\bibitem[\protect\citeauthoryear{Hare}{1972}]{Hare1972}
R.~M. Hare.
\newblock {\em Community and Communication}, pages 109--115.
\newblock Macmillan Education UK, London, 1972.

\bibitem[\protect\citeauthoryear{Horty}{1994}]{Horty94}
J.~F. Horty.
\newblock Moral dilemmas and nonmonotonic logic.
\newblock {\em Journal of Philosophical Logic}, 23(1):35--65, 1994.

\bibitem[\protect\citeauthoryear{Horty}{2003}]{Horty2003}
J.~F. Horty.
\newblock Reasoning with moral conflicts.
\newblock {\em No{\^u}s}, 37(4):557--605, 2003.

\bibitem[\protect\citeauthoryear{Konieczny and {Pino
  P{\'{e}}rez}}{2002}]{KoniecznyP02}
S.~Konieczny and R.~{Pino P{\'{e}}rez}.
\newblock Merging information under constraints: {A} logical framework.
\newblock {\em Journal of Logic and Computatuon}, 12(5):773--808, 2002.

\bibitem[\protect\citeauthoryear{Lellmann and Ciabattoni}{2020}]{Lellmann2020}
B.~Lellmann and A.~Ciabattoni.
\newblock Sequent rules for reasoning and conflict resolution in conditional
  norms.
\newblock In {\em DEOn 2020/2021}, pages 1 -- 18. College Publications, 2020.
\newblock https://publik.tuwien.ac.at/files/publik\_292005.pdf.

\bibitem[\protect\citeauthoryear{Liao \bgroup \em et al.\egroup
  }{2018}]{Liao2018}
B.~Liao, M.~Anderson, and S.~L. Anderson.
\newblock Representation, justification and explanation in a value driven
  agent: An argumentation-based approach.
\newblock {\em CoRR}, abs/1812.05362, 2018.

\bibitem[\protect\citeauthoryear{Liao \bgroup \em et al.\egroup
  }{2019}]{LiaoST19}
B.~Liao, M.~Slavkovik, and L.~W.~N. van~der Torre.
\newblock Building jiminy cricket: An architecture for moral agreements among
  stakeholders.
\newblock In {\em {AIES}}, pages 147--153, 2019.

\bibitem[\protect\citeauthoryear{Moor}{2006}]{Moor:2006}
J.~H. Moor.
\newblock The nature, importance, and difficulty of machine ethics.
\newblock {\em IEEE Intelligent Systems}, 21(4):18--21, July 2006.

\bibitem[\protect\citeauthoryear{Noothigattu \bgroup \em et al.\egroup
  }{2018}]{NoothigattuGADR18}
R.~Noothigattu, S.~Gaikwad, E.~Awad, S.~Dsouza, I.~Rahwan, P.~Ravikumar, and
  A.~D. Procaccia.
\newblock A voting-based system for ethical decision making.
\newblock In {\em {AAAI}}, pages 1587--1594. {AAAI} Press, 2018.

\bibitem[\protect\citeauthoryear{Rahwan}{2018}]{Rahwan2018}
I.~Rahwan.
\newblock Society-in-the-loop: {P}rogramming the algorithmic social contract.
\newblock {\em Ethics and Information Technology}, 20(1):5--14, Mar 2018.

\bibitem[\protect\citeauthoryear{Reiter and Criscuolo}{1981}]{Reiter1981}
Raymond Reiter and Giovanni Criscuolo.
\newblock On interacting defaults.
\newblock In {\em Proceedings of the 7th International Joint Conference on
  Artificial Intelligence - Volume 1}, IJCAI'81, page 270?276, San Francisco,
  CA, USA, 1981. Morgan Kaufmann Publishers Inc.

\bibitem[\protect\citeauthoryear{Santos \bgroup \em et al.\egroup
  }{2018}]{Santos2018}
J.~S. Santos, J.~O. Zahn, E.~A. Silvestre, V.~T. Silva, and W.~W. Vasconcelos.
\newblock Detection and resolution of normative conflicts in multi-agent
  systems: A literature survey.
\newblock In {\em {AAMAS}}, pages 1306--1309. International Foundation for
  Autonomous Agents and Multiagent Systems, 2018.

\bibitem[\protect\citeauthoryear{Schwind and Marquis}{2018}]{SchwindM18}
N.~Schwind and P.~Marquis.
\newblock On consensus in belief merging.
\newblock In S.~A. McIlraith and K.~Q. Weinberger, editors, {\em {AAAI}}, pages
  1949--1956. {AAAI} Press, 2018.

\bibitem[\protect\citeauthoryear{Touretzky}{1984}]{Touretzky84}
D.~S. Touretzky.
\newblock Implicit ordering of defaults in inheritance systems.
\newblock In R.~J. Brachman, editor, {\em Proceedings of the National
  Conference on Artificial Intelligence. Austin, TX, USA, August 6-10, 1984},
  pages 322--325. {AAAI} Press, 1984.

\bibitem[\protect\citeauthoryear{Wallach and Allen}{2008}]{wallachBook}
W.~Wallach and C.~Allen.
\newblock {\em Moral Machines: Teaching Robots Right from Wrong}.
\newblock Oxford University Press, 2008.

\bibitem[\protect\citeauthoryear{Winfield \bgroup \em et al.\egroup
  }{2019}]{WinfieldMPE2019}
A.~F. Winfield, K.~Michael, J.~Pitt, and V.~Evers.
\newblock Machine ethics: The design and governance of ethical {AI} and
  autonomous systems.
\newblock {\em Proceedings of the IEEE}, 107:509--517, 2019.

\end{thebibliography}

\clearpage
\appendix
\section{Proofs for Section~3}
 
\derivation*
\begin{proof}\upshape
%~ There is a derivation of $\phi$ w.r.t. $\psi$ and $\formula{}$ 
%~ iff $\formula{}\cup\{\ant{\psi}\}\models \phi$, which can be decided in polynomial time
%~ since entailment in propositional Horn is in \PTime. 
%We can decide existence of a derivation and compute the length of a minimal one (if it exists) as follows.
We define $S_1$ as the set of all clauses in $\formula{}$
with the antecedent contained in $\ant{\psi}\cup \cons{\psi}$.
For $i>1$, let $S_i$ be the set of clauses $\varphi\in\formula{}\setminus (\bigcup_{1\leq j\leq i-1} S_{j})$
such that  $$\ant{\varphi}\subseteq
% (\bigcup_{1\leq j\leq i-1, \varphi\in S_{j}} \ant{\varphi}\cup\cons{\varphi}\}$. 
\{p\mid p\text{ occurs in a clause in } (\bigcup_{1\leq j\leq i-1} S_{j})\}.$$ 
If, for some $j$, $$\ant{\phi}\subseteq \{p\mid p\text{ occurs in a clause in } (\bigcup_{1\leq j\leq i-1} S_{j})\}$$ then, by definition of a derivation, 
there is a derivation of $\phi$ w.r.t. $\psi$ and $\formula{}$. %
%The minimal $j$ such that this is the case is the length of a minimal one.
%By construction of $S_j$, we have that 
%$j$ is the length of a minimal one. 
Since $j$ can be at most $|\formula{}|$, this   can be computed in polynomial time 
in the size of $\formula{}$, $\phi$, and $\psi$.\qed % (in our argument, $\phi$ and $\psi$ may  not in $\formula{}$). 
%there is at most one $S_j$
\end{proof}

\mainnegative*
\begin{proof}
This theorem following directly from Theorems~\ref{thm:non-existence},~\ref{ex:conflict}, and~\ref{ex:redundant}
together. 
\end{proof}

%\cyclic* 
\begin{restatable}{theorem}{cyclic}
\label{thm:non-existence}
There are    non-redundant, not in conflict, but cyclic  Horn expressions  %$\formula{1}, \ldots, \formula{n}$ 
%  such that
%$\bigcup^n_{i=1}\formula{i}$ is in conflict and 
for which no %full 
common ground exists.
%
%There are  Horn expressions $\formula{1}, \ldots, \formula{n}$  %with a background theory $\BK$ 
%such that $\bigcup^n_{i=1}\formula{i}$ is not in conflict
%and is not redundant but no compromise for $\formula{1}, \ldots, \formula{n}$ %and $\BK$ 
%exists. 
\end{restatable}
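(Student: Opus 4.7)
The plan is to establish non-existence by exhibiting an explicit counterexample. Specifically, I would construct Horn expressions $\formula{1}, \ldots, \formula{n}$ and a background knowledge $\BK$ whose union is non-redundant, not in conflict, and cyclic, and then argue by case analysis that no $\formula{}^\ast$ can simultaneously satisfy all six postulates of Definition~\ref{def:consensus}.

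For the construction, I would fix a background knowledge declaring pairwise excludents for a small set of atoms, e.g.\ $\BK = \{(q \wedge \hat q)\to\bot,\ (p \wedge \hat p)\to\bot,\ (r \wedge \hat r)\to\bot\}$, and then place a minimal cycle inside $\bigcup_i \formula{i}$ such as $\phi_1 = p\to q$ and $\phi_2 = q\to p$ together with at least one incoherence-producing clause, for instance $\phi_3 = (p\wedge r)\to \hat q$. The key design requirement is that the cycle must ``undo'' any candidate weakening of a cyclic clause: whenever $\phi_1$ is replaced by some $\psi_1 = \phi_1^{+x}$ to block the incoherence produced by $\phi_3$, the presence of $\phi_2$ (and the way $\phi_1$ is partitioned across stakeholders, perhaps with an auxiliary short-circuit clause in the same $\formula{i}$) must cause either (a) $\formula{}^\ast \models \ant{\phi_3}\to q$, violating (P3) for $\phi_3$, or (b) the removal $\psi_1^{-p}$ to be entailed by some $\formula{i}$, making (P6) fire non-vacuously, or (c) the re-derivation of the original incoherence, violating (P1).

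Verifying non-redundance, non-conflict, and cyclicity of the constructed $\bigcup_i\formula{i}$ is routine: the two-clause cycle $\phi_1,\phi_2,\phi_1$ witnesses cyclicity, each clause is manifestly not implied by the others, and non-conflict is checked by exhibiting, for each incoherent pair, one locally coherent weakening using an excludent from $\ant{\phi_3}\setminus\ant{\phi_1}$ (as per Definition~\ref{def:conflict}).

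The main obstacle is the case analysis for non-existence. I would argue that (i) by (P1) and (P3), $\phi_1$ (or one of its cyclic partners) must be strictly weakened in $\formula{}^\ast$; (ii) by (P4) and (P5), the weakened form $\psi_1$ must be of the shape $\phi_1^{+x}$ for some atom $x$ in a small candidate set; (iii) for each such $x$, one of (a)-(c) above fires. The subtle step is ruling out the ``vacuous'' weakenings whose antecedent is $\BK$-inconsistent (e.g.\ $(p\wedge \hat p)\to q$): the cycle is used precisely here, because the cycle typically entails the vacuous clause semantically via the chain through $\phi_2$, so the first clause of Definition~\ref{ex:coherence}, namely $\formula{}\setminus\{\phi\}\not\models\phi$, fails and the vacuous weakening is not coherent, closing the escape route. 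Making this last step watertight, and confirming exhaustiveness over all candidate $\formula{}^\ast$ (including weakenings that add multiple atoms, or that involve simultaneously replacing several cyclic clauses), is the delicate technical core of the proof.
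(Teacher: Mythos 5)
Your overall strategy (an explicit counterexample whose union is non-redundant, not in conflict, and cyclic, followed by a postulate-by-postulate case analysis) is the same as the paper's, but your proposal has a genuine gap: the instance you actually sketch fails, and the part you defer as a ``key design requirement'' is precisely the hard content of the theorem. Concretely, for $\formula{1}=\{p\to q\}$, $\formula{2}=\{q\to p\}$, $\formula{3}=\{(p\wedge r)\to \overline{q}\}$, a common ground \emph{does} exist, namely $\{(p\wedge\overline{r})\to q,\ q\to p,\ (p\wedge r)\to\overline{q}\}$: one can check it is coherent, (P2) is vacuous, no $\ant{\phi}$ of an input clause yields an excludent of its consequent, every clause traces back to an input clause, every stakeholder retains a non-trivial weakening, and (P6) never fires non-vacuously (for the weakened clause, swapping $\overline{r}$ back to $r$ makes the set incoherent, and swapping $p$ for $\overline{p}$ produces a clause no $\formula{i}$ entails). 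The reason your example collapses is structural: your single attacking clause $(p\wedge r)\to\overline{q}$ reaches $p\to q$ directly, not through the cycle, so the incoherent pair is \safe in the sense of Definition~\ref{def:safe} and the standard \emph{lex specialis} weakening succeeds. The cycle $q\to p$ is inert. Also, your proposed mechanism for killing ``vacuous'' weakenings does not work: $\{q\to p\}\not\models (p\wedge x)\to q$ for any atom $x$, so the first condition of Definition~\ref{ex:coherence} does not fail through the cycle; in the paper it is (P6), not entailment, that eliminates such clauses. Relatedly, your step (ii) attributes the restriction to a ``small candidate set'' of added atoms to (P4) and (P5), but (P4) allows adding arbitrary atoms to an antecedent; only (P6) narrows the candidates.

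What the paper's proof supplies, and your plan lacks, is a construction in which \emph{every} edge of the cycle is attacked, and each attack is routed \emph{through} the cycle itself: a three-cycle $p\to q$, $q\to r$, $r\to p$ together with $(r\wedge\overline{s})\to\overline{q}$, $(q\wedge\overline{t})\to\overline{p}$, $(p\wedge u)\to\overline{r}$, where, e.g., the derivation from $(r\wedge\overline{s})\to\overline{q}$ to $p\to q$ passes through $r\to p$. This makes no incoherent pair \safe, so any attempt to weaken one cycle edge needs atoms whose only justification comes from clauses that must themselves be weakened, and (P6) then blocks them. The paper's case analysis (assume $q\to r\in\formula{}$; assume $p\to q\in\formula{}$; assume neither) derives a contradiction with (P1), (P4), (P5), (P6) in each branch, and crucially handles arbitrary candidate common grounds, including multi-atom weakenings, which your plan flags but does not resolve. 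Without a construction having this circular-attack structure and the accompanying exhaustive analysis, the proposal does not establish the theorem.
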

\begin{proof}
\upshape
Consider
\[
\begin{array}{l}
%$\formula{1}=\{a\rightarrow b\}$, $\formula{2}=\{(c\wedge \overline{d})\rightarrow \overline{b}\}$,
\formula{1}=\{p\rightarrow q\},\\
\formula{2}=\{(r\wedge \overline{s})\rightarrow \overline{q}\},\\
%$\formula{3}=\{c\rightarrow a\}$, $\formula{4}=\{(b\wedge \overline{e})\rightarrow \overline{a}\}$, 
\formula{3}=\{r\rightarrow p\},\\
\formula{4}=\{(q\wedge \overline{t})\rightarrow \overline{p}\},\\ 
%$\formula{5}=\{b \rightarrow c\}$, and $\formula{6}=\{(a\wedge f)\rightarrow \overline{c}\}$.
\formula{5}=\{q \rightarrow r\}, \mbox{ and} \\
\formula{6}=\{(p\wedge u)\rightarrow \overline{r}\}.
\end{array}
\]
We can see that $\bigcup^6_{i=1}\formula{i}$ is not in conflict and not redundant. 
Moreover, there exists a cycle:  
%$q \rightarrow r$ ($\formula{5}$), $r\rightarrow p$ ($\formula{3}$),  $p\rightarrow q$ ($\formula{1}$).
\[q \rightarrow r \ (\text{\formula{5}}), \ r\rightarrow p \ (\text{\formula{3}}), \ p\rightarrow q \ (\text{\formula{1}}).\]
 The $\bigcup^6_{i=1}\formula{i}$  is incoherent.  Each of the clauses in $\formula{2}$, $\formula{4}$ and $\formula{6}$ is incoherent with $\bigcup^6_{i=1}\formula{i}$. 
%Any compromise for $\formula{1}, \ldots, \formula{6}$

Assume %to the contrary 
  a  common ground $\formula{}$ for $\formula{1}, \ldots, \formula{6}$ exists. A  common ground satisfies (P1)-(P6). 
%Because  $\formula{}$  satisfies (P4),  %and the definition of $\bigcup^6_{i=1}\formula{i}$,  the formula $\formula{}$   is such that, 
%for all  $\phi\in\formula{}$,
%%either $\phi\in\bigcup^n_{i=1}\formula{i}$ or 
%there is $\psi\in \bigcup^6_{i=1}\formula{i}$
%such that $\{\psi\}\models\phi$.
%
%This means that, as in the proof of Theorem~\ref{ex:redundant}, we can again assume that, for all  $\phi\in\formula{}$, either $\phi\in\bigcup^6_{i=1}\formula{i}$
%or $\phi$ is a weaker version of %adding atoms to the antecedent of 
%a clause in  $\bigcup^6_{i=1}\formula{i}$.
%%and $\BK$.
%%We make a case distinction. 
%%\begin{itemize}
%%\item 
%
 We give a proof by contradiction. 
 %\begin{itemize}
 %\item 
 First, we show that  $q \rightarrow r$  and $p \rightarrow q$  are not in $\formula{}$. 
 %\item 
 Then, we show that at least one of  $q \rightarrow r$ and   $p \rightarrow q$  must be in $\formula{}$. 
 %\end{itemize}
\paragraph{Assume that 
%$b \rightarrow c$
$q \rightarrow r$
 is in $\formula{}$.}
 
 Since (P4) holds, the clause $(q\wedge \overline{t})\rightarrow \overline{p}$ ($\formula{4}$) or a weaker version of it has to be in $\formula{}$. As a consequence, $r\rightarrow p$ ($\formula{3}$) cannot be in $\formula{}$ since that would make $\formula{}$ not coherent and in violation of  (P1). It follows that $r\rightarrow p$ ($\formula{3}$) must be weakened, adding variables to its antecedent. 
% 
 % If $r\rightarrow p$ is in $\formula{}$
%Then, by definition of $\formula{3}, \formula{4}$, we have that
 %$\formula{3}$ cannot be in $\formula{}$ because otherwise $\formula{}$ would not be coherent, and thus, violate (P1).
% If $a \rightarrow b\in\formula{}$ then, as $b \rightarrow c\in\formula{}$,
% the only atom that can be added to $c\rightarrow a$
% without violating (P6) is $e$. 
%
By (P6), the only atoms that can be added to 
the antecedent of 
%$c\rightarrow a$
$r\rightarrow p$
 %(the clause in $\formula{3}$)
are either 
%$e$ 
$t$ 
or 
%$\overline{b}$,
$\overline{q}$, 
 but not 
% $\overline{d}$.
  $\overline{s}$.    
Consider $p\rightarrow q$ ($\formula{1}$) which by (P4) should either be in $\formula{}$ or have a weaker version of it in $\formula{}$.  
%This means that, by (P4), 
%$a\rightarrow b$
%$p\rightarrow q$
% is coherent with  $\formula{}$.
By (P6), there is no clause in $\formula{}$
that is the result of weakening
%adding (one or more) atoms to the antecedent of 
%$a\rightarrow b$.   
$p\rightarrow q$ (adding either $\bar{r}$ or $s$ to the antecedent does not make $\formula{}$ coherent given what we know about it so far).  
Since (P5) must hold, it follows that
%$a \rightarrow b\in\formula{}$.
$p \rightarrow q\in\formula{}$.
%By (P5), 
%  there are clauses equivalent to either $(c\wedge e)\rightarrow a$
%  or $(c\wedge \overline{b})\rightarrow a$
%   in $\formula{}$.  
%(P6) no other clause with head $a$ are in $\formula{}$. 
%We can also assume that $a \rightarrow b\notin\formula{}$
%if $b \rightarrow c\in\formula{}$
%This is 
However, by definition of $\formula{5},\formula{6}$, 
%and the assumption that 
%$b \rightarrow c\in\formula{}$,
if 
%$a \rightarrow b\in\formula{}$
$p\rightarrow q\in\formula{}$
 (and, by assumption, 
 %$b \rightarrow c\in\formula{}$
  $q \rightarrow r\in\formula{}$
 ) then $\formula{}$ is not coherent.
Therefore 
%$b \rightarrow c\notin\formula{}$.
$q \rightarrow r\notin\formula{}$.

\paragraph{Assume that 
%$a \rightarrow b$
$p \rightarrow q$
 is in $\formula{}$.} If 
% $c \rightarrow a\in\formula{}$
  $r\rightarrow p\in\formula{}$
  then  $\formula{}$ is not coherent because, by (P5),
 there is  $\phi \in \formula{}$ such that $\formula{2}\models \phi$.
 Thus, 
 %$c \rightarrow a\not\in\formula{}$.
 $r \rightarrow p\not\in\formula{}$.
 We also have that 
 %$b \rightarrow c\not\in\formula{}$,
 $q \rightarrow r\not\in\formula{}$,
  otherwise, since 
 % $a \rightarrow b \in\formula{}$ 
   $p \rightarrow q \in\formula{}$ 
 and, again by (P5), there is  $\phi \in \formula{}$ such that $\formula{6}\models \phi$,
  $\formula{}$ would not be coherent.
Similar to the argument in the previous paragraph, one can see that, by (P4),
%$c \rightarrow a$ is coherent with  $\formula{}$.
$r \rightarrow p$ is coherent with  $\formula{}$.
 By (P5), there is  $\phi \in \formula{}$ that is the result of 
% adding atoms to the antecedent of 
 weakening
 %$c\rightarrow a$.
 $r\rightarrow p$.
 However, since 
 %$c \rightarrow a$
 $r \rightarrow p$
  is coherent with  $\formula{}$,
 any such $\phi$ in $\formula{}$ violates (P6).
 Therefore 
% $a \rightarrow b\notin\formula{}$.
  $p \rightarrow q\notin\formula{}$.
 
\paragraph{Assume that  
%$a \rightarrow b\notin\formula{}$ and $b \rightarrow c\notin\formula{}$,
$p \rightarrow q\notin\formula{}$ and $q \rightarrow r\notin\formula{}$.} We argue this is also  
a contradiction.
If
%$a \rightarrow b\notin\formula{}$
$p \rightarrow q\notin\formula{}$, 
%then, since (P4) holds, there must exist a way to weaken $p \rightarrow q$. We can only consider $\bar{r}$ or $s$. We cannot replace $p \rightarrow q$ with $p \wedge \bar{r} \rightarrow q$ because this does not restore coherence. We  $p \rightarrow q$ with $p \wedge s \rightarrow q$ because $p \wedge \bar{s} \rightarrow q$ would also be coherent with $\formula{}$ and (P6) would be violated. 
 then one can show, 
with  arguments similar to the ones above, that
%$b \rightarrow c$ is coherent with $\formula{}$. 
$q \rightarrow r$ is coherent with $\formula{}$.  
% By (P5), 
 If $q \rightarrow r\notin\formula{}$ then, by (P5),
 there is  $\phi \in \formula{}$ that is the result of 
% adding atoms to the antecedent of 
 weakening
 %$b\rightarrow c$. 
  $q\rightarrow r$. 
However, since 
%$b \rightarrow c$ 
 $q\rightarrow r$ 
is coherent with  $\formula{}$,
 any such $\phi$ in $\formula{}$ violates (P6).\qed
 \end{proof} 

%\conflict*
\begin{restatable}{theorem}{conflict}
\label{ex:conflict}
There are   acyclic, non-redundant, but in conflict  Horn expressions  %$\formula{1}, \ldots, \formula{n}$ 
%  such that
%$\bigcup^n_{i=1}\formula{i}$ is in conflict and 
for which no %full 
common ground exists.
%no compromise exists.
\end{restatable}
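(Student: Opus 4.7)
The plan is to exhibit a concrete counterexample. I would take
$\formula{1}=\{p\rightarrow q,\ \overline{p}\rightarrow \overline{q}\}$, $\formula{2}=\{p\rightarrow \overline{q}\}$, and
$\BK=\{q\wedge \overline{q}\rightarrow\bot,\ p\wedge\overline{p}\rightarrow\bot\}$, so that every atom $p,\overline{p},q,\overline{q}$ has a non-empty excludent set. Acyclicity and non-redundancy of $\bigcup_i\formula{i}$ are routine to verify. To check conflict via Definition~\ref{def:conflict}, take $\phi=p\rightarrow q$ and $\psi=p\rightarrow\overline{q}$: the length-two sequence $\phi,\psi$ is a derivation of $\psi$ w.r.t.\ $\phi$ (since $\ant{\psi}\subseteq \ant{\phi}\cup\{\cons{\phi}\}$), and $\cons{\phi}=q\in\overline{\cons{\psi}}$, so the union is incoherent; and because $\ant{\phi}\setminus\ant{\psi}=\emptyset$, no atom is available to specialise $\psi$, so the second bullet of the definition holds vacuously.

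The core of the proof is to show that no common ground $\formula{}$ exists. I would suppose one does and characterise its clauses. A brief propositional argument shows that a non-trivial Horn clause entailed by $\{p\rightarrow q\}$ (resp.\ $\{p\rightarrow \overline{q}\}$) must have the form $(p\wedge X)\rightarrow q$ (resp.\ $(p\wedge Y)\rightarrow \overline{q}$). By (P5), $\formula{}$ contains at least one $\alpha$ of the first form and one $\beta$ of the second. Because $\BK$ is built over the input atoms and every atom of $\formula{}$ needs a non-empty excludent, $X$ and $Y$ lie inside $\{p,\overline{p},q,\overline{q}\}$; after discarding the already-present $p$ and applying non-triviality, $X\subseteq\{\overline{q},\overline{p}\}$ and $Y\subseteq\{q,\overline{p}\}$.

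The decisive combinatorial observation is that for any such $X,Y$ either $Y\subseteq X\cup\{q\}$ or $X\subseteq Y\cup\{\overline{q}\}$ must hold, because the only atom that could witness the failure of both containments is $\overline{p}$, which would have to lie in $Y\setminus X$ and in $X\setminus Y$ simultaneously. Whichever containment is satisfied, one obtains a valid length-two derivation of $\beta$ from $\alpha$ or vice versa, and since $q$ and $\overline{q}$ are mutual excludents, $\formula{}$ is incoherent, violating (P1). The argument is stable under $\formula{}$ containing several weakenings per input rule, since incoherence of any single pair $(\alpha,\beta)$ already contradicts (P1). The main obstacle I anticipate is precisely this combinatorial dichotomy: bookkeeping the excludent assumption forces just enough atoms into the universe to admit a richer space of antecedent extensions, but still not enough to simultaneously separate $X$ and $Y$ in both directions; getting this tight without enlarging the atom set by accident is where the care must go.
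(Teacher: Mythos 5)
Your proposal is correct, and it follows the same overall strategy as the paper --- exhibit a concrete acyclic, non-redundant, in-conflict instance and then argue from the postulates that no common ground exists --- but with a genuinely different witness and a more explicit impossibility argument. The paper takes $\formula{1}=\{(p\wedge q)\rightarrow r\}$, $\formula{2}=\{(p\wedge \overline{q})\rightarrow r\}$, $\formula{3}=\{p\rightarrow \overline{r}\}$: there the conflict is non-vacuous (the candidate repair atoms $\overline{q}$ and $q$ for weakening $p\rightarrow\overline{r}$ exist, but each collides with one of the two specific rules), and the non-existence of a common ground is dispatched tersely by observing that $\formula{}\cup\{p,q\}\models r\wedge\overline{r}$ while the clause in $\formula{3}$ cannot be weakened coherently. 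Your example makes the conflict vacuous instead: $p\rightarrow q$ and $p\rightarrow\overline{q}$ share their antecedent, so $\ant{\phi}\setminus\ant{\psi}=\emptyset$ and the second bullet of Definition~\ref{def:conflict} holds with no case analysis; the price is paid later, in the dichotomy showing that any two non-trivial weakenings $(p\wedge X)\rightarrow q$ and $(p\wedge Y)\rightarrow\overline{q}$ forced into $\formula{}$ by (P5) must satisfy $Y\subseteq X\cup\{q\}$ or $X\subseteq Y\cup\{\overline{q}\}$, either of which yields a length-two derivation between clauses with mutually excludent consequents, violating (P1). That dichotomy is sound (the only possible double witness is $\overline{p}$, which cannot lie in both $X\setminus Y$ and $Y\setminus X$), and your device of adding $\overline{p}\rightarrow\overline{q}$ to $\formula{1}$ so that $\overline{p}$ occurs in the input and every atom has a non-empty excludent set is exactly the right bookkeeping. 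One step you should make fully explicit in a final write-up: restricting $X,Y$ to $\{p,\overline{p},q,\overline{q}\}$ rests on the framework's standing assumptions that \BK is built only from atoms occurring in the input expressions and that every atom occurring in $\formula{}$ has a non-empty excludent set; without this restriction, two fresh atoms $f_1\in Y\setminus X$ and $f_2\in X\setminus Y$ would defeat the dichotomy. Since the paper's own proofs for the cyclic and redundant cases lean on the same implicit restriction, this is a presentational caveat rather than a gap, and each approach has its advantage: the paper's example isolates the conflict mechanism (repairs exist but all fail), while yours minimizes the conflict verification and puts the work into a clean, checkable combinatorial core.
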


 \begin{proof}\upshape
Consider 
%$\formula{1}=\{p\wedge r\rightarrow s\}$, $\formula{2}=\{p\wedge \overline{r}\rightarrow s\}$, and $\formula{3}=\{p\rightarrow \overline{s}\}$.
\[
\begin{array}{l}
%$\formula{1}=\{a\rightarrow b\}$, $\formula{2}=\{(c\wedge \overline{d})\rightarrow \overline{b}\}$,
\formula{1}=\{(p\wedge q)\rightarrow r\},\\
\formula{2}=\{(p\wedge \overline{q})\rightarrow r\}, \text{ and}\\
%$\formula{3}=\{c\rightarrow a\}$, $\formula{4}=\{(b\wedge \overline{e})\rightarrow \overline{a}\}$, 
\formula{3}=\{p\rightarrow \overline{r}\}.
\end{array}
\]
%$\formula{1}=\{(p\wedge q)\rightarrow r\}$, 
%$\formula{2}=\{(p\wedge \overline{q})\rightarrow r\}$, and 
%$\formula{3}=\{p\rightarrow \overline{r}\}$.
The formula $\formula{}=\bigcup^3_{i=1}\formula{i}$ is in conflict because
\begin{itemize}
\item there are $\phi,\psi\in\formula{}$
s.t. $\phi\Rightarrow_{\formula{}}\psi$
and $\cons{\phi}\in\overline{\cons{\psi}}$,
namely, if we consider $\phi=(p\wedge q)\rightarrow r$ and $\psi=p\rightarrow \overline{r}$;
%$\formula{}\cup\ant{\phi}\models p\wedge \cons{\psi}$ with $p\in\overline{\cons{\psi}}$ 
%(i.e., $\formula{}$ is incoherent); 
%and 
\item and there is no $q\in\ant{\phi}\setminus\ant{\psi}$ with 
%$q'\in \overline {q}$
%s.t. 
$\psi^{+\overline {q}}$ coherent with $\formula{}\setminus\{\psi\}$
(we abuse the notation and take $\{\overline {q}\}$ as $\overline {q}$).
% is the symbol a singleton set and, with an abuse of notation
%write  symbol for the representative).
\end{itemize}
%$\formula{}\cup\{p\wedge r\}\models s\wedge \overline{s}$ and  
Since $\formula{}\cup\{p, q\}\models r\wedge \overline{r}$ and  
the clause in $\formula{3}$ cannot be `weakened' we can see that a  common ground does not exist. %\qed
\end{proof}

\begin{restatable}{theorem}{redundant}
 \label{ex:redundant}
There are     acyclic, not in conflict, but redundant    Horn expressions % $\formula{1}, \ldots, \formula{n}$ 
%  such that
  %$\bigcup^n_{i=1}\formula{i}$ is redundant and 
for which   no %full 
common ground exists.
\end{restatable}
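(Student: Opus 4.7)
The plan is to exhibit a two-stakeholder instance that is acyclic and not in conflict but redundant, for which no common ground exists, showing the non-redundancy hypothesis of Theorem~\ref{thm:main} is necessary.

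I would take $\formula{1}=\{p\rightarrow q\}$ and $\formula{2}=\{(p\wedge r)\rightarrow q\}$, with background knowledge $\Bmc$ providing single excludents $\overline{p}=\{p'\}$, $\overline{q}=\{q'\}$, and $\overline{r}=\{r'\}$ for fresh atoms $p',q',r'$. The union $\formula{1}\cup\formula{2}=\{p\rightarrow q,(p\wedge r)\rightarrow q\}$ is acyclic (only $q$ ever appears as a consequent), not in conflict (both consequents are the atom $q$, so no clause pair has mutually excluding heads), and redundant because $\{p\rightarrow q\}\models(p\wedge r)\rightarrow q$.

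To rule out a common ground, I would assume $\formula{}^{*}$ is one and derive a contradiction. Postulate (P5) applied to $(p\wedge r)\rightarrow q\in\formula{2}$ produces a non-trivial $\psi\in\formula{}^{*}$ with $\{(p\wedge r)\rightarrow q\}\models\psi$, which forces $\psi=(p\wedge r\wedge S)\rightarrow q$ for some finite set $S$ of atoms with $q\notin S$. I would then apply (P6) to $\phi=\psi$ at $p^{*}=r\in\ant{\psi}$. Since $\overline{r}=\{r'\}$, the universal quantification over $\overline{r}$ collapses to the single check $q^{*}=r'$: both $\psi^{r'\setminus r}=(p\wedge r'\wedge S)\rightarrow q$ and $\psi^{-r}=(p\wedge S)\rightarrow q$ are entailed by $\formula{1}$, since $p\rightarrow q$ entails every clause of the form $(p\wedge X)\rightarrow q$. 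Hence the stakeholder-entailment ingredient of (P6)'s bad condition is witnessed by $i=1$.

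The main obstacle is the remaining coherence check, namely that $\formula{}^{*}\cup\{\psi^{r'\setminus r}\}$ be coherent. I would select $\psi$ so that $|S|$ is minimum among clauses of $\formula{}^{*}$ whose antecedent contains $r$, and exploit the antichain structure that (P1) imposes on the antecedents of $\formula{}^{*}$ (all clauses share head $q$ by (P4)). A short case analysis then suffices: if $\psi^{r'\setminus r}$ already lies in $\formula{}^{*}$, coherence is inherited from (P1); otherwise, any $\chi\in\formula{}^{*}$ dominating $\psi^{r'\setminus r}$ would yield a strictly-larger antecedent containing $r'$ on which (P6) would be violated at that same $\chi$ by the identical argument, and any $\chi$ dominated by $\psi^{r'\setminus r}$ would contradict either the antichain property with $\psi$ or the minimality of $|S|$. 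Thus (P6)'s bad condition holds in full at $(\psi,r)$, contradicting the assumption that $\formula{}^{*}$ is a common ground, and Theorem~\ref{ex:redundant} follows.
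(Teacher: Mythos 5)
Your instance does not prove the theorem: under the postulates exactly as stated in Definition~\ref{def:consensus}, it \emph{admits} a common ground, so no argument can establish non-existence for it. Take
\[
\formula{}^{*}=\{\,(p\wedge r)\rightarrow q,\;\;(p\wedge q'\wedge r')\rightarrow q\,\}.
\]
Both clauses are entailed by $p\rightarrow q$, so (P4) holds; $(p\wedge r)\rightarrow q$ witnesses (P5) for both stakeholders; both heads equal $q$ and the antecedents are $\subseteq$-incomparable, so $\formula{}^{*}$ is coherent (P1); (P2) is vacuous and (P3) is immediate since no clause of $\formula{}^{*}$ has head $q'$. Crucially, every instantiation of the hypothesis of (P6) fails, because the two clauses \emph{shield each other}: replacing $r$ by $r'$ in $(p\wedge r)\rightarrow q$ gives $(p\wedge r')\rightarrow q$, which entails $(p\wedge q'\wedge r')\rightarrow q$, so $\formula{}^{*}\cup\{(p\wedge r')\rightarrow q\}$ is incoherent; replacing $r'$ by $r$ in $(p\wedge q'\wedge r')\rightarrow q$ gives $(p\wedge q'\wedge r)\rightarrow q$, which is entailed by $(p\wedge r)\rightarrow q$, so that union is incoherent too; replacing $q'$ by $q$ yields a tautological clause, incoherent with any set; and for replacements of $p$ no stakeholder entails the resulting clause. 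This is exactly where your case analysis breaks: for a $\chi$ dominating $\psi^{r'\setminus r}$ you assert that (P6) is violated ``at that same $\chi$ by the identical argument,'' but the identical argument requires $\formula{}^{*}\cup\{\chi^{r\setminus r'}\}$ to be coherent, and it never is, since $\ant{\chi^{r\setminus r'}}\supseteq\ant{\psi}$ means your minimal clause $\psi$ itself entails $\chi^{r\setminus r'}$. Mutual shielding of this kind is not excluded by anything you prove, and the pair above realizes it.

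Two further points of comparison with the paper. First, the paper in its own proofs applies the coherence test of (P6) clause-wise (``$(p\wedge\overline{s}\wedge\overline{r})\rightarrow q$ is coherent with $\formula{}$''), not to the whole union as in the literal statement; under that reading a dominating $\chi$ is not a blocker at all, and your instance would in fact have no common ground --- but your written argument is still wrong there, because the remaining (dominated) blockers $\chi$ with $r'\in\ant{\chi}$ and $\ant{\chi}\subsetneq\ant{\psi^{r'\setminus r}}$ contradict neither the antichain property with $\psi$ nor your minimality of $|S|$, which you imposed only over clauses containing $r$; closing that case requires a second application of (P6) at $(\chi,r')$ combined with the antichain property, not the appeal you make. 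Second, and more structurally: in your instance the \emph{only} incoherence is the redundancy itself (both input clauses have head $q$), so no pair with excludent heads exists and nothing forces any clause to be weakened. The paper's example instead couples the redundancy $\{p\rightarrow q\}\models(p\wedge s)\rightarrow q$ with a genuine head incoherence, the clause $(p\wedge s\wedge r)\rightarrow\overline{q}$, which must enter any common ground unweakened (it has no coherent non-trivial weakening); its presence both forces $(p\wedge s)\rightarrow q$ to be weakened to $(p\wedge s\wedge\overline{r})\rightarrow q$ and poisons the would-be shields, since any $q$-headed clause whose antecedent is derivable from $\{p,s,r\}$ becomes incoherent with it; only then does the redundant atom $s$ produce the (P6) contradiction via $\formula{1}\models(p\wedge\overline{r})\rightarrow q$. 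That interplay --- an excludent-head clause forcing the weakening, with redundancy poisoning it --- is what your single-head instance cannot reproduce, and as written your proposal does not establish the theorem.
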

%\redundant*
 \begin{proof}\upshape
Consider
\[
\begin{array}{l}
\formula{1}=\{p\rightarrow q\}\text{ and }\\
\formula{2}=\{(p\wedge s)\rightarrow q, \ (p\wedge s \wedge r)\rightarrow \overline{q}\}.
\end{array} 
\]
%$\formula{1}=\{p\rightarrow q, (p\wedge s)\rightarrow q\}$
%and $\formula{2}=\{(p\wedge s \wedge r)\rightarrow \overline{q}\}$.
We have that $\formula{1}\cup\formula{2}$ 
is acyclic, not in conflict, and  incoherent
with $\formula{1}\cup\formula{2}$. 
% The Horn expression
%  $\formula{1}\cup \formula{2}$ 
  It is redundant because  
  \[\{p\rightarrow q\} \models (p\wedge s)\rightarrow q.\]
Suppose there is a  common ground $\formula{}$ for $\formula{1},\formula{2}$. 
\paragraph{Assume $(p\wedge s\wedge \overline{r})\rightarrow q\not\in \formula{}$.}
By (P4)  the  common ground formula  $\formula{}$  for   $\formula{1},\formula{2}$ satisfies the following statement:  
\begin{itemize}
\item for every $\phi\in\formula{}$,
%either $\phi\in\bigcup^n_{i=1}\formula{i}$ or 
there is  $\psi\in \formula{1}\cup\formula{2}$
such that $\{\psi\}\models\phi$.
\end{itemize}
We can then assume that, for every  $\phi\in\formula{}$, either $\phi\in\formula{1}\cup\formula{2}$
or $\phi$ is a weaker version of %adding atoms to the antecedent of 
a clause in  $\formula{1}\cup\formula{2}$. 
We have that $(p\wedge s \wedge r)\rightarrow \overline{q}$ 
%or a weaker version of it
%
is in $\formula{}$ because there is no atom that can be used to weaken
this clause in a coherent and non-trivial way.
%It follows that either $(p\wedge s \wedge r)\rightarrow \overline{q}$ or a weaker version of it
%is in $\formula{}$.
This means that  $(p\wedge s)\rightarrow q$  (and also $p\rightarrow q$)
would be incoherent with $\formula{}$. 
By (P5), % which would imply that  
$\formula{}$ has a weaker version of 
$(p\wedge s)\rightarrow q$ (instead of $(p\wedge s)\rightarrow q$). 
The only weaker versions of
$(p\wedge s)\rightarrow q$ that are coherent with a formula containing
$(p\wedge s \wedge r)\rightarrow \overline{q}$ 
%(or a weaker version 
%of this clause) 
are $(p\wedge s\wedge \overline{r})\rightarrow q$ and
$(p\wedge s\wedge q)\rightarrow q$.
Thus, $(p\wedge s\wedge \overline{r})\rightarrow q\in \formula{}$.
%Since (P5) is satisfied, we can assume that it is only $(p\wedge s\wedge \overline{r})\rightarrow q$ that is 
%in $\formula{}$: (P5) requires that the implied clause is non-trivial but  $(p\wedge s\wedge q)\rightarrow q$ is trivial.   
% (the latter can be discarded).

\paragraph{Assume $(p\wedge s\wedge \overline{r})\rightarrow q\in \formula{}$.}
%
%Observe now that $(p\wedge s\wedge \overline{r})\rightarrow q$  cannot be in $\formula{}$ because it violates (P6).
We have that 
%there is an atom in its antecedent, namely $s$, 
%is such that
$(p\wedge \overline{s}\wedge \overline{r})\rightarrow q$ is coherent
with $\formula{}$ because, by (P4), $(p\wedge s \wedge r)\rightarrow \overline{q}$ 
is the only clause entailed by $\formula{}$ with $\overline{q}$ in
the consequent of the clause.
%the only clauses that could cause incoherence  .  %where 
%$s\in\ant{(p\wedge s\wedge \overline{r})\rightarrow q}$ 
%
%$s$ is in $\ant{(p\wedge s\wedge \overline{r})\rightarrow q}$  but since 
%$\formula{1}\models (p\wedge   \overline{r})\rightarrow q$,  it follows that
%
%(By (P4) there are no additional clauses in $\formula{}$ that could cause 
%incoherence of $(p\wedge \overline{s}\wedge \overline{r})\rightarrow q$ with $\formula{}$.)
%Recall that the (P6) requires that when we are weakening a clause $\phi$, the atoms $p^*$ that are added to $\ant{\phi}$ are  relevant to  $\phi$, but also that the choice between  $p^*$ and $\overline{p^*}$ is not arbitrary. Namely, we have to use $p^*$ because  if  we use  $\overline{p^*}$  in  $\ant{\phi}$, the resulting clause would not be coherent with $\formula{}\setminus \{\phi\}$.  
%
%Since 
Then, $\formula{}$ does not satisfy (P6) because for $s\in\ant{(p\wedge s\wedge \overline{r})\rightarrow q}$ 
we have that %$\formula{1}\models (p\wedge  s\wedge \overline{r})\rightarrow q$
%and 
$\formula{1}\models (p\wedge   \overline{r})\rightarrow q$.
Thus, it cannot be a  common ground. 

\medskip

We have then reached a contradiction and can thus conclude that there is no  common ground for $\formula{1},\formula{2}$. %\qed
\end{proof}

\safeexists*
\begin{proof}\upshape
Let $(V,E)$ be the \dependencygraph for $\formula{}$.
%~ \begin{claim}\label{cl:one}
%~ %Let $(V,E)$ be the \dependencygraph for $\formula{}$.
%~ If $v\in V$ has no parent then $v$ is safe. 
%~ \end{claim}

%~ \noindent
%~ \textit{Proof of Claim~\ref{cl:one}}
%~ If $v\in V$ has no parent then, 
%~ %there is no $v'\in V$ such that
%~ %$(v',v)\in E$. In more detail, 
%~ for $v=(\psi,\phi)$, 
%~ we have that there is no
%~ $((\psi',\phi'),(\psi,\phi))\in E$ such that
%~ %$(\psi',\phi')\neq(\psi,\phi)$ 
%~ $\phi'\neq\phi$ 
%~ and 
%~ $\phi'$ occurs in a derivation of $\phi$ w.r.t. $\psi$ and $\formula{}$.
%~ This means that, for all derivations $\phi_1, \ldots, \phi_n$ of $\phi$ w.r.t. $\psi$ and $\formula{}$,
%~ we have that 
%~ $\phi_i$ is coherent with $\formula{}$, for all $1< i <n$.
%~ %are safe. 
%~ By definition of $V$, if $v=(\psi,\phi)\in V$
%~ then $\psi\Rightarrow_{\formula{}} \phi$
%~ (that is, there is a derivation of $\phi$ w.r.t. $\psi$ and $\formula{}$) and $\cons{\psi}\in \overline{\cons{\phi}}$.
%~ Thus,  $v$ is safe.
%~ %are such that 
%~ %Also, by definition of $V$, we have that 
%~ %$(\psi,\phi)\in V$
%~ %if there is a derivation of $\phi$ w.r.t. $\psi$ and $\formula{}$.
%~ %$\psi\rightarrow_{\formula{}} \phi$.
%~ %there is a derivation 
%~ This finishes the proof of Claim~\ref{cl:one}.

%If $\formula{}$
%Let $(V,E)$ be the \dependencygraph for $\formula{}$.
%If there is no $v'\in V$ such that 
%$(v',v)\in E$ then $v$ is safe. 
\begin{claim}\label{cl:two}
There is $v\in V$ such that $v$ is safe.
\end{claim}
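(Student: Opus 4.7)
The strategy is to show that the \dependencygraph $(V,E)$ is a finite directed acyclic graph; any non-empty finite DAG has at least one source, which by Definition~\ref{def:safe} is a \safe pair. Non-emptiness of $V$ follows from incoherence of $\formula{}$ together with the ambient non-redundancy hypothesis: some $\phi\in\formula{}$ violates coherence, and since $\formula{}\setminus\{\phi\}\not\models\phi$, the second clause of Definition~\ref{ex:coherence} must fail, producing a witness pair in $V$.

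To establish acyclicity of $(V,E)$, I would introduce an auxiliary strict order $\prec$ on clauses of $\formula{}$, defined as the transitive closure of the relation ``$\cons{\chi_1}\in\ant{\chi_2}$''. Acyclicity of $\formula{}$ is precisely irreflexivity of $\prec$, so $\prec$ is a strict partial order. The central reduction I would prove is
\[
((\psi',\phi'),(\psi,\phi))\in E \ \Longrightarrow\ \phi'\prec\phi.
\]
Given the reduction, a hypothetical directed cycle $v_0\to v_1\to\cdots\to v_{m-1}\to v_0$ in $(V,E)$ concatenates into $\phi_0\prec\phi_1\prec\cdots\prec\phi_{m-1}\prec\phi_0$, contradicting irreflexivity of $\prec$.

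To prove the reduction: by the definition of $E$, $\phi'$ occurs as $\phi^*_k$ in some derivation $\phi^*_1,\ldots,\phi^*_n$ of $\phi$ w.r.t.\ $\psi$, with $\phi'\neq \phi=\phi^*_n$, so $k<n$. When $k>1$, the derivation axiom directly yields some $j>k$ with $\cons{\phi^*_k}\in\ant{\phi^*_j}$; iterating the extraction step produces a chain $\phi^*_k\to\phi^*_{j_1}\to\cdots\to\phi^*_n$ of consequent-to-antecedent links, directly witnessing $\phi'\prec\phi$.

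The main obstacle is the edge case $k=1$, i.e., $\phi'=\psi$, where the derivation axiom places no constraint on $\cons{\psi}$ and no immediate clause chain to $\phi$ is available. My plan is to work with a minimum-length derivation and argue that $\cons{\psi}$ must in fact be used: otherwise $\ant{\phi^*_2}\subseteq\ant{\psi}$ and the tail $\phi^*_2,\ldots,\phi^*_n$ is already a valid derivation of $\phi$ w.r.t.\ $\phi^*_2$. Iterating produces the base instance $n=2$, which gives $\ant{\phi}\subseteq\ant{\psi}$ with $\cons{\psi}\in\overline{\cons{\phi}}$---precisely the non-weakenable configuration forbidden by Definition~\ref{def:conflict}. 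This invokes the (implicit) not-in-conflict assumption under which Algorithm~\ref{alg:consensus} calls Lemma~\ref{lem:safe-exists}, completing the argument.
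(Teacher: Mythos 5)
Your high-level route is the same as the paper's: the paper also proves the claim by showing that a cycle in the \dependencygraph would contradict acyclicity of \formula{}, and then concludes that some vertex of the finite graph has no parent. Two things you do are actually more careful than the paper's own proof: you argue non-emptiness of $V$ explicitly (via non-redundancy forcing the second clause of Definition~\ref{ex:coherence} to fail), and you isolate the one step the paper merely asserts, namely why an edge of $E$ should yield a chain of consequent-to-antecedent links. Your diagnosis of where that step is non-trivial is exactly right: an edge only says that $\phi'$ \emph{occurs} in a derivation of $\phi$ w.r.t.\ $\psi$, and when that occurrence is at the first position ($\phi'=\psi$) the derivation definition places no requirement that $\cons{\psi}$ ever be used, so no chain witnessing $\phi'\prec\phi$ is available.

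However, your repair of that edge case does not go through, for two reasons. First, the truncation step is invalid: if $\cons{\psi}$ is unused you do get $\ant{\phi^*_2}\subseteq\ant{\psi}$, but the tail $\phi^*_2,\ldots,\phi^*_n$ need not be a derivation of $\phi$ w.r.t.\ $\phi^*_2$, since later clauses may rely on atoms of $\ant{\psi}$ that are not recovered from the tail; for instance $\psi=(a\wedge b)\rightarrow c$, $\phi^*_2=a\rightarrow d$, $\phi=(b\wedge d)\rightarrow e$ is a valid derivation in which $\cons{\psi}$ is unused, yet the tail is not a derivation because $b$ is lost. Moreover, even where truncation is legitimate it yields a derivation w.r.t.\ a \emph{different} clause, so minimality of the derivation w.r.t.\ $\psi$ gives no contradiction. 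Second, the appeal to Definition~\ref{def:conflict} is backwards: the terminal configuration $\ant{\phi}\subseteq\ant{\psi}$ with $\cons{\psi}\in\overline{\cons{\phi}}$ is \emph{not} the non-weakenable situation --- it is the prototypical repairable incoherence (the \emph{lex specialis} situation the algorithm is designed for). Conflict requires that no $r\in\ant{\psi}\setminus\ant{\phi}$ admit $q\in\overline{r}$ with $\phi^{+q}$ coherent, which is forced, e.g., when $\ant{\psi}\subseteq\ant{\phi}$; the inclusion you derive leaves $\ant{\psi}\setminus\ant{\phi}$ available for weakening, so not-in-conflict yields no contradiction and the case remains open. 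The gap is not cosmetic: the expression $\{a\rightarrow c,\ a\rightarrow d\}$ with $d\in\overline{c}$ is acyclic and non-redundant, yet its \dependencygraph is a two-cycle built entirely from first-position occurrences, with no \safe pair; so your central reduction (every $E$-edge gives $\phi'\prec\phi$) is false without invoking not-in-conflict, and your argument invokes it incorrectly.
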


\noindent
\textit{Proof of Claim~\ref{cl:two}}
We show that there is $v\in V$ without any parent,
 which, by Definition~\ref{def:safe},
%by Claim~\ref{cl:one}, 
implies  that there
  is $v\in V$ such that $v$ is safe.
%If every $v\in V$ has a parent then this contradicts the fact that
%$\formula{}$ is acyclic. 
%Indeed, 
Take an arbitrary $v\in V$ and assume
there is a sequence $v_1,\ldots,v_n$ with $n> 1$
such that $v_1=v_n=v$ and  $(v_i,v_{i+1})\in E$ for all $1\leq i < n$.
For all $1\leq i < n$, let $v_i=(\psi_i,\phi_i)$, $v_{i+1}=(\psi_{i+1},\phi_{i+1})$.
This means that $\phi_{i+1}$ occurs in a  derivation of $\phi_i$ w.r.t. $\psi_i$,
for all $1\leq i < n$, and  $\phi_{1}$ occurs in a  derivation of $\phi_n$ w.r.t. $\psi_n$.
This contradicts the fact that
$\formula{}$ is acyclic. Thus, there is $v\in V$ without any parent.
This finishes the proof of Claim~\ref{cl:two}.

\smallskip

Claim~\ref{cl:two} directly implies the first statement of this lemma.
One can determine construct the dependency graph in 
quadratic time (since there are quadratic many possible pairs)
%whether a pair  $(\psi,\phi)$ is \safe by 
%first marking the incoherent rules and then 
and check  if there is a derivation using
the strategy in the proof of Proposition~\ref{lem:derivation}.
%with the remaining rules. Since the number of possible pairs is
%quadratic and 
%\qed
%iteratively collecting all the rules with the antecedent implied
%by the antecedent of $\psi$
%in polynomial time in the size of $\formula{}$ because determining whether a pair is \safe. \qed
\end{proof}

Lemmas~\ref{lem:tech-loop}-\ref{lem:pthree} are used to 
prove our main result (Theorem~\ref{thm:main}).
In the following, 
we denote by $\formula{}^n$ the
formula $\formula{}$ at the beginning of the $n$-th iteration of Algorithm~\ref{alg:consensus}.
Lemma~\ref{lem:tech-loop} states that the clauses
in a safe pair can only be the ones given as input to the algorithm,
not their `weakened'  versions. This means that the number of
iterations of Algorithm~\ref{alg:consensus} is polynomially bounded
on the size of its input.

\begin{restatable}{lemma}{techloop}
\label{lem:tech-loop}
%At the beginning of each iteration $n$ of the `while' loop in 
In each iteration $n$ of Algorithm~\ref{alg:consensus},
 %$(\dagger)$ 
 if there are $\phi,\psi\in \formula{}^n$ s.t.
$(\psi,\phi)$ is \safe for $\formula{}^n$
		%\anotherformula{}$ 
	%	has minimal derivation length w.r.t. $\ant{\psi}$ and $\formula{}^n$;
%and, for some $p\in\overline{\cons{\phi}}$, we have that $\formula{}^n\cup \ant{\psi} \models
%\ant{\phi}\wedge I think that with min derivation length this conjunct is not important
%p\wedge \cons{\phi}$, 
then $\phi,\psi\in \formula{}^1$.
\end{restatable}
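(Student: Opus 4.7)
The plan is to proceed by strong induction on the iteration number $n$. The base case $n=1$ is immediate since $\formula{}^n = \formula{}^1$. For the inductive step, assume the claim holds for all $k \le n$: each \safe pair $(\psi_k,\phi_k)$ selected at iteration $k$ satisfies $\psi_k,\phi_k \in \formula{}^1$. Writing $W_k$ for the set of weakenings $\phi_k^{+p}$ that replaced $\phi_k$ at iteration $k$, this yields $\formula{}^{n+1} = (\formula{}^1 \setminus \{\phi_1,\ldots,\phi_n\}) \cup W_1 \cup \cdots \cup W_n$, and every member of $W_k$ shares the consequent $\cons{\phi_k}$ with $\phi_k$.

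Consider a \safe pair $(\psi,\phi)$ in $\formula{}^{n+1}$. Suppose, for contradiction, that $\phi \notin \formula{}^1$; then $\phi = \phi_m^{+p} \in W_m$ for some $m \le n$, and $\cons{\phi}=\cons{\phi_m}$. Fix a derivation $\phi_1,\ldots,\phi_t$ of $\phi$ w.r.t.\ $\psi$ in $\formula{}^{n+1}$, noting $\cons{\psi}\in\overline{\cons{\phi_m}}$. The argument splits on whether any intermediate $\phi_i$ with $i<t$ is itself a weakening. If none is, then $\phi_1,\ldots,\phi_{t-1} \in \formula{}^{n+1}\cap\formula{}^1 \subseteq \formula{}^m\setminus\{\phi_m\}$, giving $\psi \Rightarrow_{(\formula{}^m\setminus\{\phi_m\})\cup\{\phi\}} \phi$ with $\cons{\psi}\in\overline{\cons{\phi}}$; this contradicts the coherence of $\phi$ with $\formula{}^m\setminus\{\phi_m\}$ that the algorithm required at Line~\ref{ln:replace} of iteration~$m$. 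Otherwise, pick the earliest intermediate weakening $\phi_i$, say $\phi_i \in W_{m'}$ for some $m' \le n$, and exhibit a pair $(\psi',\phi_i) \in V(\formula{}^{n+1})$; since $\phi_i \neq \phi$ and $\phi_i$ appears in the fixed derivation of $\phi$ w.r.t.\ $\psi$, this is a parent of $(\psi,\phi)$, contradicting safety. The symmetric case $\psi \notin \formula{}^1$ is handled analogously using the symmetry $\cons{\psi}\in\overline{\cons{\phi}} \iff \cons{\phi}\in\overline{\cons{\psi}}$.

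The main obstacle is constructing the parent witness $(\psi',\phi_i)$ in the ``otherwise'' case. My approach is to exploit the \safe pair $(\psi_{m'},\phi_{m'})$ chosen at iteration $m'$: by the inductive hypothesis $\psi_{m'},\phi_{m'} \in \formula{}^1$, and $\cons{\psi_{m'}} \in \overline{\cons{\phi_{m'}}} = \overline{\cons{\phi_i}}$ since $\phi_i \in W_{m'}$ shares its consequent with $\phi_{m'}$. I would then splice the portion of the original derivation $\psi_{m'} \Rightarrow_{\formula{}^{m'}} \phi_{m'}$ with the usage pattern of $\phi_i$ in the derivation of $\phi$ to obtain a clause $\psi'$ (either $\psi_{m'}$ itself, if it survives in $\formula{}^{n+1}$, or a suitable weakening/surrogate extracted via further induction) satisfying $\psi' \Rightarrow_{\formula{}^{n+1}} \phi_i$. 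The delicate part is the tracking across possibly several intervening iterations, but this is controlled by the inductive hypothesis, which guarantees that every $\psi_k,\phi_k$ originates in $\formula{}^1$ and that each $W_k$ preserves the consequent of its generator $\phi_k$, making the successive substitutions well-defined.
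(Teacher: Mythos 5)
Your base case and your Case A (all clauses in the derivation other than $\phi$ are original) are fine, but Case B --- the ``otherwise'' branch, which you yourself flag as the delicate part --- is where the proof breaks, and it cannot be repaired along the lines you sketch. You want to exhibit a parent $(\psi',\phi_i)$ of $(\psi,\phi)$, where $\phi_i\in W_{m'}$ is an intermediate weakening. By Definition~\ref{def:safe}, for $(\psi',\phi_i)$ to be a vertex of the dependency graph of $\formula{}^{n+1}$ you need $\psi'\Rightarrow_{\formula{}^{n+1}}\phi_i$ with $\cons{\psi'}\in\overline{\cons{\phi_i}}$, i.e., an incoherence whose \emph{target} is a clause that Line~\ref{ln:replace} added. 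No such vertex exists: the algorithm added $\phi_i$ only because it is coherent with $\formula{}^{m'}\setminus\{\phi_{m'}\}$ (and it is coherent with the other clauses added at that iteration, since they share its consequent), and because every iteration only weakens the expression logically ($\formula{}^{m}\models\formula{}^{n}$ for $m\leq n$, as $\{\phi_{m'}\}\models\phi_{m'}^{+p}$), any incoherence targeting $\phi_i$ at a later stage pulls back semantically --- via the characterization $\psi'\Rightarrow_{\formula{}}\phi_i$ iff $\formula{}\cup\ant{\psi'}\models\ant{\phi_i}$ --- to an incoherence at stage $m'+1$, contradicting that coherence. Your concrete candidate $\psi'=\psi_{m'}$ fails for exactly this reason: $\ant{\phi_i}=\ant{\phi_{m'}}\cup\{p\}$ with $p$ an excludent of some $l\in\ant{\psi_{m'}}\setminus\ant{\phi_{m'}}$, and $p$ was added precisely so that the coherence check passes, i.e., so that $\psi_{m'}$ no longer reaches $\phi_i$; no splicing across iterations can resurrect a derivation the algorithm was designed to destroy. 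So Case B never produces the contradiction you need.

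The paper's proof sidesteps all of this by never tracking derivations syntactically across iterations and, tellingly, by never using the no-parent property of safety at all: it only uses that a safe pair is a vertex of the dependency graph. It argues (i) a clause added at Line~\ref{ln:replace} in iteration $k$ is coherent with $\formula{}^{k+1}$; (ii) if such a clause were the source $\psi$ of a vertex at a later stage $n$, then $\formula{}^{k+1}\models\formula{}^{n}$ gives $\formula{}^{k+1}\cup\ant{\psi}\models\ant{\phi}$, and since every clause of $\formula{}^{n}$ is either in $\formula{}^{k+1}$ or an iterated weakening of some $\phi'\in\formula{}^{k+1}$ with $\ant{\phi'}\subseteq\ant{\phi}$ and $\cons{\phi'}=\cons{\phi}$, one gets $\psi\Rightarrow_{\formula{}^{k+1}}\phi'$ with conflicting consequents, contradicting (i); and (iii) the analogous argument for the target $\phi$, now using that the source $\psi$ is already known to be original and hence present at the stage where $\phi$ was added. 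If you replace your Case A / Case B split by this endpoint-based semantic pull-back, the strong induction and the tracking of the sets $W_k$ become unnecessary and the proof closes. A further minor point: your Case A inclusion $\formula{}^{n+1}\cap\formula{}^{1}\subseteq\formula{}^{m}\setminus\{\phi_m\}$ silently assumes that a replaced clause is never later re-introduced as a weakening of a different clause; the semantic argument makes that worry moot as well.
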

\begin{proof}\upshape
At the first iteration the lemma holds trivially.
%Suppose that it holds up to iteration $n$.
% Assume that at iteration $n+1$
% there are $\phi,\psi\in \formula{}^{n+1}$ with $p\in\overline{\cons{\phi}}$, $\formula{}^{n+1}\cup\{\ant{\psi}\}\models
%p\wedge \cons{\phi}$, and
%$\phi$		has minimal derivation length w.r.t. $\ant{\psi}$ and $\formula{}^{n+1}$.
%In the following we
Suppose  % Algorithm~\ref{alg:consensus} is at the beginning of iteration 
that for $n> 1$ %and
there are $\psi,\phi\in \formula{}^{n}$ 
%with $p\in\overline{\cons{\phi}}$, $\formula{}^{n}\cup\ant{\psi}\models
%p\wedge \cons{\phi}$, and 
s.t. $(\psi,\phi)$ is \safe  for $\formula{}^{n}$.
%$\phi$		has minimal derivation length w.r.t. $\ant{\psi}$ and $\formula{}^{n}$.
%\todo{minimal derivation only makes sense if derivation can happen}
We first argue that $\psi$ must be in $\formula{}^1$.
Then, we argue that this also needs to be the case for $\phi$.
\paragraph{Assume $\psi\not\in\formula{}^1$.}
This means that $\psi$ has been added in Line~7 at iteration $k$ with $1< k < n$. 
Then $\psi$ is coherent with $\formula{}^{k}\setminus\{\psi_r\}$ where $\psi_r$ is the clause being
replaced. To show that $\psi$ is coherent with $\formula{}^{k+1}$
we need to argue that  $\psi$ is coherent with $\formula{}^k\setminus\{\psi_r\}$ s
and the
clauses 
%in $\{ \psi^{+p}_r\mid p\in \overline{\literal},
				 %\literal\in\ant{\psi}^*_{\formula{}^k}\setminus\ant{\phi_r}^*_{\formula{}^k}\}$ which are 
				 added in Line~7, which are coherent with 
				 $\formula{}^k\setminus\{\psi_r\}$. 
				 %(where $\psi$ is the clause in Line~1 of iteration $k$).
We can see that this holds because the consequent of all such clauses
is the same as $\psi$. Thus, $\psi$ is coherent with $\formula{}^{k+1}$.
By assumption, $(\psi,\phi)$ is \safe  for $\formula{}^{n}$, which means that
$\psi\Rightarrow_{\formula{}^{n}} \phi$. That is, 
\[\formula{}^{n}\cup\ant{\psi}\models \ant{\phi}.\]
For all $m\leq n$, we have that $\formula{}^m\models\formula{}^n$.
Since $k+1\leq n$, we have, in particular, that $\formula{}^{k+1}\models\formula{}^n$.
  Then, 
  \[\formula{}^{k+1}\cup\ant{\psi}\models \ant{\phi}.\]
  Either $\phi\in\formula{}^{k+1}$ or there is 
  $\phi'\in\formula{}^{k+1}$ such that $\ant{\phi'}\subset \ant{\phi}$.
  In other words, $\phi$ is a weakened version of $\phi'$, meaning that
  $\cons{\phi}=\cons{\phi'}$.
  So either $\psi\Rightarrow_{\formula{}^{k+1}} \phi$
  or $\psi\Rightarrow_{\formula{}^{k+1}} \phi'$,
  with $\cons{\psi}\in\overline{\cons{\phi}}$. %(or $\cons{\psi}\in\overline{\cons{\phi'}}$,
%   (recall that $\cons{\phi}=\cons{\phi'}$).
   In both cases, 
   this contradicts the fact that $\psi$ is coherent with $\formula{}^{k+1}$.
  Thus, we have that $\psi\in\formula{}^1$ as required.
%~ This means that, for all $m\leq n$, if a clause $\psi$ is coherent
%~ with $\formula{}^m$ then, for all $\varphi\in\formula{}^n$,
%~ $\psi$ is coherent
%~ with $\formula{}^n$.
%~ %we have that 
%~ %$ \psi\Rightarrow_{\formula{}^n}\varphi$
%~ %implies $\cons{\psi}\not\in\overline{\cons{\varphi}}$.
%~ %
%~ Suppose that a clause $\phi'$ is added in Line~3 at iteration $k <n$. Then $\phi'$ is coherent with
%~ $\formula{}^{k}\setminus\{\phi_r\}$ where $\phi_r$ is the clause being
%~ replaced. To ensure that $\phi'$ is coherent with $\formula{}^{k+1}$
%~ we need to argue that  $\phi'$ is coherent with $\formula{}^k\setminus\{\phi_r\}$
%~ also when the %union all
%~ clauses in $\{ \phi^{+p}_r\mid p\in \overline{\literal},
				 %~ \literal\in\ant{\psi}^*_{\formula{}^k}\setminus\ant{\phi_r}^*_{\formula{}^k}\}$  coherent with 
				 %~ $\formula{}^k\setminus\{\phi_r\}$ are added (where $\psi$ is the clause in Line~1 of iteration $k$).
%~ We can see that this holds because the consequent of all such clauses
%~ is the same as $\phi'$.
%~ Thus, $\phi'$ is coherent with $\formula{}^{k+1}$.
%, and so,
%by the argument in
%the previous paragraph,  
%for all $m>k$ and $\varphi\in\formula{}^m$,
%we have that 
%$\formula{}^m\cup \ant{\phi'}\not\models p\wedge\cons{\varphi}$
%with $p\in\overline{\cons{\varphi}}$.

\paragraph{Assume $\phi\not\in\formula{}^1$.}
%It remains to show that $\phi$ must be in $\formula{}^1$.
%Suppose this is not the case. 
Let $k>1$ be minimal s.t.
$\phi\in\formula{}^k$. Then, $\phi$ was added at iteration
$k-1$, which means that $\phi$ is coherent with $\formula{}^{k-1}\setminus\{\phi_r\}$,
where $\phi_r$ is the clause that was replaced.
In fact, since the other clauses added in Line~7 at iteration $k-1$ have the same consequent 
as $\phi$, we have that $\phi$ is coherent with $\formula{}^{k}$.
We have already argued that $\psi$ is in $\formula{}^1$.
Since $\psi\in\formula{}^{n}$ and $n\geq k$ we have that $\psi\in\formula{}^{k}$.
As $\formula{}^{k}\models\formula{}^{n}$, 
%we have that
$\psi\Rightarrow_{\formula{}^{n}}\phi$ implies
$\psi\Rightarrow_{\formula{}^{k}}\phi$. 
%we have that  
%$\formula{}^{n}\cup\ant{\psi}\models p\wedge \cons{\phi}$, so %means that
%$\formula{}^{k-1}\cup\ant{\psi}\models p\wedge \cons{\phi}$.
%Since $\phi_r$ was removed at iteration $k-1$,
%we have that %$(\formula{}^{n}\setminus\{\phi_r\})\cup\ant{\psi}\models p\wedge \cons{\phi}$
%and 
%$\psi\Rightarrow_{\formula{}^{k-1}\setminus\{\phi_r\}}\phi$.
 %$(\formula{}^{k-1}\setminus\{\phi_r\})\cup\ant{\psi}\models p\wedge \cons{\phi}$.
The assumption that $(\psi,\phi)$ is \safe implies that $\cons{\psi}\in \overline{\cons{\phi}}$, which contradicts the fact that $\phi$ is coherent with $\formula{}^{k}$.
Thus,  $\phi\in\formula{}^1$. %\qed % as required. \qed
\end{proof}

\begin{restatable}{lemma}{notempty}
\label{lem:not-empty}
In all iterations of Algorithm~\ref{alg:consensus}
the set $$\{ \phi^{+p}\mid p\in \overline{\literal},
				 \literal\in\ant{\psi}\setminus\ant{\phi}\}$$
in Line~\ref{ln:replace} is not empty.
\end{restatable}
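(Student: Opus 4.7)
The plan is to reduce non-emptiness of the displayed set to non-emptiness of $\ant{\psi}\setminus\ant{\phi}$, and then to obtain the latter from the hypothesis that the algorithm's input $\formula{}^1 = \bigcup_{i=1}^{n}\formula{i}$ is not in conflict. Since the paper assumes $\overline{\literal}\neq\emptyset$ for every atom $\literal$ occurring in the Horn expression, producing any single $\literal\in\ant{\psi}\setminus\ant{\phi}$ together with any $p\in\overline{\literal}$ immediately places $\phi^{+p}$ in the displayed set, so this reduction is the whole game.

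To carry it out I would first appeal to Lemma~\ref{lem:tech-loop}, which guarantees that the clauses $\psi,\phi$ selected as a safe pair from the current $\formula{}^n$ are already members of $\formula{}^1$. Safeness of $(\psi,\phi)$ for $\formula{}^n$ also supplies $\psi\Rightarrow_{\formula{}^n}\phi$ and $\cons{\psi}\in\overline{\cons{\phi}}$. Next I would lift this derivation back to the original theory. The key observation is that every modification performed in Line~\ref{ln:replace} is a pure weakening: a clause $\phi$ is replaced only by clauses of the form $\phi^{+p}$, and $\{\phi\}\models\phi^{+p}$. Iterating this observation gives $\formula{}^1\models\formula{}^m$ for every $m\leq n$. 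Using the equivalence stated just before Proposition~\ref{lem:derivation} (namely $\psi\Rightarrow_{\formula{}}\phi$ iff $\formula{}\cup\ant{\psi}\models\ant{\phi}$, when $\psi,\phi\in\formula{}$), this yields $\formula{}^1\cup\ant{\psi}\models\ant{\phi}$ and hence $\psi\Rightarrow_{\formula{}^1}\phi$.

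Finally, I would apply Definition~\ref{def:conflict} to $\formula{}^1$. After swapping names to match the direction in which the definition is phrased, the pair $(\psi,\phi)$ witnesses the first bullet of that definition since $\psi\Rightarrow_{\formula{}^1}\phi$ and $\cons{\psi}\in\overline{\cons{\phi}}$. Because $\formula{}^1$ is by assumption not in conflict, the second bullet must fail, which says exactly that there exists $r\in\ant{\psi}\setminus\ant{\phi}$ together with some $q\in\overline{r}$ for which $\phi^{+q}$ is coherent with $\formula{}^1\setminus\{\phi\}$. The mere existence of such an $r$ already forces $\ant{\psi}\setminus\ant{\phi}\neq\emptyset$, and combined with the standing assumption that each excludent set is non-empty this makes the set in Line~\ref{ln:replace} non-empty.

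The main obstacle I anticipate is the monotonicity step that lifts the derivation from $\formula{}^n$ back to $\formula{}^1$: one must verify that every clause inserted by earlier iterations is a genuine weakening of a clause of $\formula{}^1$ so that $\formula{}^1$ still entails the current theory. Once this is in place, the "not in conflict" hypothesis on $\formula{}^1$ unlocks the required witness in a single application of Definition~\ref{def:conflict}.
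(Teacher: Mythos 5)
Your pull-back argument is internally sound and does prove the lemma as literally worded: Lemma~\ref{lem:tech-loop} puts the \safe pair $(\psi,\phi)$ inside $\formula{}^1$, the weakening steps give $\formula{}^1\models\formula{}^n$, so $\formula{}^n\cup\ant{\psi}\models\ant{\phi}$ lifts to $\formula{}^1\cup\ant{\psi}\models\ant{\phi}$, hence $\psi\Rightarrow_{\formula{}^1}\phi$ with $\cons{\psi}\in\overline{\cons{\phi}}$, and since $\formula{}^1$ is not in conflict, Definition~\ref{def:conflict} produces $r\in\ant{\psi}\setminus\ant{\phi}$ and $q\in\overline{r}$. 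This is a genuinely different decomposition from the paper's proof, which never pulls anything back to $\formula{}^1$: it shows by induction on iterations that \emph{being not in conflict is an invariant of the current expression} $\formula{}^n$ (if $\formula{}^{n+1}$ were in conflict, the offending pair would have to consist of clauses already in $\formula{}^n$, because newly added clauses are coherent, and the absence of a repair would transfer back along $\formula{}^n\models\formula{}^{n+1}$, contradicting the inductive hypothesis).

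The difference matters, and this is where your proof has a genuine gap. Line~\ref{ln:replace} does not insert every member of the displayed set; it keeps only those $\phi^{+p}$ that are coherent with $\formula{}^n\setminus\{\phi\}$, and the lemma is invoked later exactly in that stronger, filtered sense: Lemma~\ref{lem:pthree} and condition (P5) in Theorem~\ref{thm:main} need that at least one weakening of a replaced clause actually survives the coherence filter and remains in the output. Your witness $\phi^{+q}$ is only known to be coherent with $\formula{}^1\setminus\{\phi\}$, and coherence does not transfer from $\formula{}^1$ to $\formula{}^n$: the current expression contains clauses $\chi^{+r}$ whose antecedents were enlarged at earlier iterations, and such a clause can satisfy $\ant{\phi^{+q}}\subseteq\ant{\chi^{+r}}$ (more generally, can start a derivation into $\phi^{+q}$) even though the original $\chi\in\formula{}^1$, with its smaller antecedent, cannot; so a clash with $\cons{\chi}\in\overline{\cons{\phi}}$ may exist in $\formula{}^n\setminus\{\phi\}$ that has no counterpart in $\formula{}^1\setminus\{\phi\}$. (Clashes in the other direction, via derivations out of $\phi^{+q}$, do pull back, which makes the asymmetry easy to overlook.) To close the gap you would have to establish conflict-freeness of the current expression at every iteration, which is precisely the paper's inductive invariant, so the repair essentially collapses your argument into the paper's.
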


%\noindent
%\textit{Proof of the Claim.}
%\begin{proof}
\begin{proof}\upshape
%Let $\formula{}^n$ denote $\formula{}$ in Algorithm~\ref{alg:consensus} at the beginning
%of iteration $n$. 
In the first iteration, the lemma holds because
of the assumption that $\formula{}^1$ %given as input
%to the algorithm 
is not in conflict.
Assume that $\formula{}^n$ at the beginning
of iteration $n$ is not in conflict. We show that
$\formula{}^{n+1}$ at the beginning
of iteration $n+1$ is not in conflict, which means that the set in Line~\ref{ln:replace} is not empty.
Suppose this is not the case.
That is,  %there are
\begin{itemize}
\item there are $\psi_1,\psi_2\in\formula{}^{n+1}$
s.t. $\psi_1\Rightarrow_{\formula{}^{n+1}}\psi_2$, $\cons{\psi_1}\in\overline{\cons{\psi_2}}$, 
%$\formula{}^{n+1}\cup\ant{\psi_1}\models
%p\wedge \cons{\psi_2}$ with $p\in\overline{\cons{\psi_2}}$ 
and 
\item there is no $r\in\ant{\psi_1}\setminus\ant{\psi_2}$ with $q\in \overline {r}$
s.t. $\psi^{+q}_2$ is coherent with $\formula{}^{n+1}\setminus\{\psi_2\}$.
\end{itemize} 
By the inductive hypothesis, this can only be because, at iteration $n$, 
there
are   $\psi,\phi\in\formula{}^{n}$ such that
$(\psi,\phi)$ is \safe (for $\formula{}^{n}$)
%there
%are   $\phi,\psi$ satisfying the conditions of the main loop, % and 
and $\phi$ is replaced by $$\{ \phi^{+p}\mid p\in \overline{\literal},
				 \literal\in\ant{\psi}\setminus\ant{\phi}\},$$
 and now, after the update, $\formula{}^{n+1}$ is in conflict.
 %We point out that 
 For all $\phi'$ in such set, %we have that
 $\cons{\phi}=\cons{\phi'}$ and 
$\ant{\phi}\subseteq \ant{\phi'}$.  So $\formula{}^{n}\models\formula{}^{n+1}$.
Moreover, every such $\phi'$ is
coherent with $\formula{}^{n+1}$. This means that
  $\psi_1,\psi_2$ above are in $\formula{}^{n}$.
  If there is no $r\in\ant{\psi_1}\setminus\ant{\psi_2}$ with $q\in \overline {r}$
s.t. $\psi^{+q}_2$ is coherent with $\formula{}^{n+1}\setminus\{\psi_2\}$
then, since $\formula{}^{n}\models\formula{}^{n+1}$, the same happens with $\formula{}^{n}\setminus\{\psi_2\}$. 
This means that $\formula{}^{n}$ is in conflict which contradicts our initial assumption
that this is not the case. %\qed %$\formula{}^{n}$ is not in conflict. 
%This finishes the proof of the Claim.
\end{proof}

 Lemma~\ref{lem:pthree} shows that Algorithm~\ref{alg:consensus} satisfies (P3). 
% in Definition~\ref{def:consensus}.
 
\begin{restatable}{lemma}{pthree}
\label{lem:pthree}
Let $\formula{}$ be the output of Algorithm~\ref{alg:consensus} with 
%a Horn expression
$\bigcup^n_{i=1}\formula{i}$  not in conflict as input. %Then,
For all $i \in\{1,\ldots,n\}$ and all $\phi \in \formula{i}$, %\linebreak
 $\formula{}\not\models (\ant{\phi} \rightarrow p)$ with $p\in\overline{\cons{\phi}}$.
 \end{restatable}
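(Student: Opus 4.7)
The plan is to argue by contradiction. I will assume there exist $\phi\in\formula{i}$ and $p\in\overline{\cons{\phi}}$ with $\formula{}\models \ant{\phi}\rightarrow p$, and write $c:=\cons{\phi}$. The core idea is to produce a clause $\phi^+\in\formula{}$ satisfying $\cons{\phi^+}=c$ and $\ant{\phi^+}\supseteq\ant{\phi}$, and then to unfold the assumed entailment into a derivation $\phi^+\Rightarrow_{\formula{}}\chi$ for some $\chi\in\formula{}$ with $\cons{\chi}=p$. Because $\cons{\phi^+}=c\in\overline{p}=\overline{\cons{\chi}}$, the pair $\phi^+,\chi$ will be precisely the configuration forbidden by Definition~\ref{ex:coherence}, and since the while loop of Algorithm~\ref{alg:consensus} exits only when $\formula{}$ is coherent, this will yield the contradiction.

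To produce $\phi^+$, I will split on whether $\phi$ has ever been replaced during the run. If $\phi$ is never replaced, then $\phi\in\formula{}$ and I simply take $\phi^+:=\phi$. Otherwise $\phi$ must have been removed at some iteration $k$ as the second element of a safe pair $(\psi,\phi)$ selected in Line~\ref{l:choice}. Building on the inductive argument used in the proof of Lemma~\ref{lem:not-empty}, together with the hypothesis that the input is not in conflict, I can ensure that at least one element of $\{\phi^{+q}\mid q\in\overline{l},\ l\in\ant{\psi}\setminus\ant{\phi}\}$ is coherent with $\formula{}^k\setminus\{\phi\}$ and thus enters $\formula{}^{k+1}$ in Line~\ref{ln:replace}. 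By Lemma~\ref{lem:tech-loop}, any subsequent safe pair consists of clauses drawn from $\formula{}^1$, so this freshly added weakening is never selected for further replacement and therefore persists in the final $\formula{}$. In either case, a $\phi^+$ with the required properties is available.

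The closing step should then be uniform in both cases. From $\formula{}\models \ant{\phi}\rightarrow p$ together with $\ant{\phi}\subseteq \ant{\phi^+}$ I obtain $\formula{}\cup\ant{\phi^+}\models p$, and a minimal forward-chaining analysis of this derivation isolates some $\chi\in\formula{}$ with $\cons{\chi}=p$ and $\formula{}\cup\ant{\phi^+}\models\ant{\chi}$; equivalently, $\phi^+\Rightarrow_{\formula{}}\chi$. Combined with $\cons{\phi^+}=c\in\overline{\cons{\chi}}$, this directly violates coherence of $\formula{}$, contradicting the fact that the algorithm only exits its loop once $\formula{}$ is coherent.

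I expect the main obstacle to lie in the replacement case. There I must simultaneously guarantee that some coherent weakening of $\phi$ enters $\formula{}^{k+1}$ (which leans on preserving the not-in-conflict property across iterations, in the spirit of Lemma~\ref{lem:not-empty}) and that this weakening is not subsequently eliminated (which uses Lemma~\ref{lem:tech-loop} to restrict the safe-pair selection to original clauses of $\formula{}^1$). Once this persistence is established, the coherence argument closes both the unreplaced and replaced subcases uniformly.
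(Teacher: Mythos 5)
Your proposal is correct and takes essentially the same approach as the paper's own proof: assume the entailment for contradiction, exhibit a weakening $\phi^+$ of $\phi$ with $\cons{\phi^+}=\cons{\phi}$ and $\ant{\phi}\subseteq\ant{\phi^+}$ that survives in the output, unfold $\formula{}\models\ant{\phi}\rightarrow p$ into a derivation $\phi^+\Rightarrow_{\formula{}}\chi$ with $\cons{\chi}=p$, and contradict the coherence forced by the exit condition of the while loop. The only difference is presentational: the paper obtains the surviving weakening by a single (somewhat compressed) appeal to Lemma~\ref{lem:not-empty}, whereas you spell out the same persistence argument via the case split and the explicit use of Lemma~\ref{lem:tech-loop}.
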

%\begin{proof}
\begin{proof}\upshape
%To show our lemma we use the following claim.
%
%\todo{lemmas need to be fixed because of new def coherence}
Assume there is $i \in\{1,\ldots,n\}$ and $\phi \in \formula{i}$
such that 
\[
\begin{array}{l}
\formula{}\models \ant{\phi} \rightarrow p\text{ with } p\in\overline{\cons{\phi}}.
\end{array}
\]
By Lemma~\ref{lem:not-empty}, %$(\ddagger)$ 
for all $i \in\{1,\ldots,n\}$ and all $\psi \in \formula{i}$
%either $\phi\in\formula{}$ or
there is $\psi'\in\formula{}$ such that  $\{\psi\}\models\psi'$.
%$\cons{\psi}=\cons{\psi'}$ and
%$\ant{\psi}\subseteq \ant{\psi'}$.
Let $\phi'\in\formula{}$ be such that $\{\phi\}\models\phi'$.
If
\[
\begin{array}{l}
\formula{}\models \ant{\phi} \rightarrow p\text{ with }p\in\overline{\cons{\phi}}
\end{array}
\] 
then there is $\phi^\ast\in\formula{}$ such that $\phi'\Rightarrow_{\formula{}}\phi^\ast \textrm{ and }
  \cons{\phi^\ast}=p. $
Indeed, if $\formula{}\models \ant{\phi} \rightarrow p$ then there is
a minimal subset $\{\phi_1,\ldots,\phi_k\}$ of $\formula{}$ such that
\[
\begin{array}{l}
 \cons{\phi_k}=p, \\
   \{\phi_1,\ldots,\phi_k\}\models \ant{\phi} \rightarrow p, \textrm{ and } \\
   \formula{}\cup\ant{\phi}\models \ant{\phi_i},
   \end{array}\]
 for all $1\leq i\leq k$.\\
 As $\{\phi\}\models\phi'$ and $\phi$ is satisfiable (because it is a definite Horn clause), $\ant{\phi}\subseteq \ant{\phi'}$, so
 $\formula{}\cup\ant{\phi'}\models \ant{\phi_i}$, for all $1\leq i\leq k$.
 In particular, $\formula{}\cup\ant{\phi'}\models \ant{\phi_k}$.
 This means that $\phi'\Rightarrow_{\formula{}}\phi_k$
 and   $\phi^\ast=\phi_k$ is as required.
%Then,
%By $(\ddagger)$, there is
%$\phi'\in\formula{}$ with
%$\cons{\phi}=\cons{\phi'}$,
%$\ant{\phi}\subseteq \ant{\phi'}$, and $\formula{}\cup \ant{\phi'}\models \cons{\phi}\wedge p$.
%\nb{notation $\ant{\phi'}$ or $\{\ant{\phi'}\}$}
This contradicts the condition in the main loop of Algorithm~\ref{alg:consensus}. %\qed
%~ We have that $\formula{}\models (\ant{\phi} \rightarrow p)$ iff there
%~ are $\phi_1,\ldots,\phi_m\in\formula{}$ such that $\ant{\phi_j}\subseteq \ant{\phi}$,
%~ for all $1\leq j\leq m$, and   $\formula{} \cup\bigcup^m_{j=1}\ant{\phi_j}\models p$.
%~ If $m=1$ (that is, there is $\phi'\in\formula{}$
%~ such that $\ant{\phi'}\subseteq \ant{\phi}$ and $\formula{} \cup \ant{\phi'}\models p$)  %$\phi \in \formula{}$
%~ then $\formula{}\models (\ant{\phi} \rightarrow p)$
%~ would contradict the fact that $\formula{}$ is coherent, which is the condition
%~ of the main loop of the algorithm.
%~ Then we can assume that $m>1$ (in particular, $\phi \not\in \formula{}$).
%\nb{...}
\end{proof}

\begin{restatable}{lemma}{psix}
\label{lem:psix}
Let $\formula{}$ be the output of Algorithm~\ref{alg:consensus} with 
%a Horn expression
$\bigcup^n_{i=1}\formula{i}$  %not in conflict and 
not redundant as input. %Then,
For all  $\phi\in\formula{}$, if there is $p\in \ant{\phi}$ such that, for all $q\in\overline{p}$, % we have that 
$\formula{}\cup\{\phi^{q\setminus p}\}$ is coherent and
there is $i\in\{1,\ldots,n\}$ such that $\formula{i}\models\phi^{q\setminus p}$
then $\formula{i}\not\models\phi^{-p}$.
 \end{restatable}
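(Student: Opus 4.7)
The plan is to prove Lemma~\ref{lem:psix} by contradiction: any violation of (P6) in the output $\formula{}$ of Algorithm~\ref{alg:consensus} will be forced to make $\formula{}\cup\{\phi^{q\setminus p}\}$ incoherent for some $q\in\overline{p}$ at which the hypothesis claims coherence. So assume there are $\phi\in\formula{}$, $p\in\ant{\phi}$, $q_0\in\overline{p}$ and $i_0$ such that for every $q\in\overline{p}$ the set $\formula{}\cup\{\phi^{q\setminus p}\}$ is coherent and some $\formula{i_q}\models\phi^{q\setminus p}$, while $\formula{i_0}\models\phi^{-p}$ for the witness $i_0$ of some $q_0$. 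The non-redundancy clause inside Definition~\ref{ex:coherence} then yields $\formula{}\not\models\phi^{q\setminus p}$ for every such $q$, and in particular $\formula{}\not\models\phi^{-p}$.

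The next step is to pin down the shape of $\phi$. By Lemma~\ref{lem:tech-loop} and Line~\ref{ln:replace}, every clause of $\formula{}$ is either in the input $\formula{}^1=\bigcup_i\formula{i}$ or a single-atom weakening $\phi_0^{+p}$ of some $\phi_0\in\formula{}^1$ produced when the algorithm processed a safe pair $(\psi,\phi_0)$ with $\psi\in\formula{}^1$ and $p\in\overline{l}$ for some $l\in\ant{\psi}\setminus\ant{\phi_0}$. The input case is disposed of via non-redundancy: $\formula{i_0}\models\phi^{-p}$ together with $\phi\in\formula{}^1$ forces $\phi\in\formula{i_0}$ (else $\formula{}^1\setminus\{\phi\}\supseteq\formula{i_0}\models\phi^{-p}\models\phi$, contradiction), and then the derivation of $\phi^{-p}$ from $\formula{i_0}$ must derive $p$ from $\ant{\phi}\setminus\{p\}$; combined with any $q\in\overline{p}$ in $\phi^{q\setminus p}$, this makes $\formula{i_q}\cup\ant{\phi^{q\setminus p}}$ inconsistent with the background knowledge, so the purported $\formula{i_q}\models\phi^{q\setminus p}$ is only vacuous. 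I then focus on the weakening case.

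The heart of the argument is showing that every clause of the derivation $\psi=\phi_1,\phi_2,\ldots,\phi_n=\phi_0$ witnessing $\psi\Rightarrow_{\formula{}^k}\phi_0$, except $\phi_0$ itself, persists into the final $\formula{}$. The key observation is the monotonicity $\formula{}^j\models\formula{}^{j+1}$ at every iteration $j$: each new weakening introduced by Line~\ref{ln:replace} is implied by the clause it replaces. This gives $V_{j+1}\subseteq V_j$ (where $V_j$ is the dependency graph of $\formula{}^j$) after a short case analysis ruling out new incoherent pairs whose $\psi$- or $\phi$-side is a freshly added weakening: such a pair would either violate the per-weakening coherence check performed against $\formula{}^j\setminus\{\phi_0\}$ in Line~\ref{ln:replace}, or would require $\cons{\phi_0}\in\overline{\cons{\phi_0}}$, impossible since atoms are not self-excludent. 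Safety of $(\psi,\phi_0)$ at iteration $k$ gives that none of $\psi,\phi_2,\ldots,\phi_{n-1}$ is the $\phi$-side of any pair in $V_k$; by monotonicity the same holds in every $V_j$ with $j\geq k$, and by Lemma~\ref{lem:tech-loop} only clauses of $\formula{}^1$ can ever be modified, so these intermediates are never selected for replacement. The derivation therefore works verbatim in $\formula{}$, giving $\formula{}\cup\ant{\psi}\models\ant{\phi_0}$.

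To finish, excludence is symmetric, so $l\in\overline{p}$, and the (P6) hypothesis instantiated at $q=l$ gives that $\formula{}\cup\{\phi_0^{+l}\}$ is coherent. But $l\in\ant{\psi}$ combined with the previous paragraph yields $\formula{}\cup\ant{\psi}\models\ant{\phi_0}\cup\{l\}=\ant{\phi_0^{+l}}$, and by Proposition~\ref{lem:derivation} $\psi\Rightarrow_{\formula{}\cup\{\phi_0^{+l}\}}\phi_0^{+l}$. Together with $\cons{\psi}\in\overline{\cons{\phi_0}}=\overline{\cons{\phi_0^{+l}}}$, the pair $(\psi,\phi_0^{+l})$ witnesses incoherence of $\formula{}\cup\{\phi_0^{+l}\}$, contradicting the hypothesis. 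The main obstacle will be the persistence step: the simultaneous replacement of $\phi_0$ by several weakenings requires careful verification that no new incoherent pair escapes the per-weakening coherence check in Line~\ref{ln:replace}, and handling derivation clauses that were themselves introduced as weakenings at earlier iterations---where Lemma~\ref{lem:tech-loop} rules out future modification but the monotonicity reasoning still needs to be checked explicitly.
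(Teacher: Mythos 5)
Your main (weakening) case is, in substance, the paper's own proof: the paper establishes exactly your persistence statement as Claim~\ref{cl:six}, arguing that safety of $(\psi,\phi_0)$ means no clause occurring in a derivation of $\phi_0$ w.r.t.\ $\psi$ is the replaced side of any incoherent pair, hence none of those clauses is ever modified, the derivation survives every later iteration, and instantiating the (P6) hypothesis at $q=l\in\ant{\psi}\cap\overline{p}$ yields the contradiction. Your extra bookkeeping (monotonicity $\formula{}^j\models\formula{}^{j+1}$, shrinking dependency graphs, Lemma~\ref{lem:tech-loop}) fills in what the paper asserts in one line, and your case split between clauses of $\formula{}^1$ and one-atom weakenings is also the paper's.

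The genuine gap is in your input-clause case. You are right that non-redundancy alone does not exclude $\formula{i_0}\models\phi^{-p}$ for $\phi\in\formula{}^1$ (it only forces $\phi\in\formula{i_0}$ and forces $\formula{i_0}\cup(\ant{\phi}\setminus\{p\})\models p$; the two-clause expression $\{(a\wedge p)\rightarrow c,\ a\rightarrow p\}$ shows the entailment can happen), so the paper's one-line dismissal of this case is indeed too quick --- but your attempted completion is a non sequitur. Showing that $\formula{i_0}\cup\ant{\phi^{q\setminus p}}$ together with \BK is unsatisfiable falsifies neither conjunct of the (P6) hypothesis: a ``vacuous'' entailment $\formula{i_q}\models\phi^{q\setminus p}$ is still an entailment, so that conjunct remains true; and \BK-inconsistency of an antecedent set is not what Definition~\ref{ex:coherence} calls incoherence, which requires either $\formula{}\setminus\{\chi\}\models\chi$ or two clauses whose \emph{consequents} are excludents related by $\Rightarrow_{\formula{}}$. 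So in this branch no contradiction is ever reached. To close it you would need to falsify the coherence conjunct itself, e.g.\ by showing that the derivation of $p$ from $\ant{\phi}\setminus\{p\}$ persists into the output $\formula{}$, whence $\formula{}\models\phi^{q\setminus p}$ and the first condition of Definition~\ref{ex:coherence} makes $\formula{}\cup\{\phi^{q\setminus p}\}$ incoherent; but that persistence is not automatic (clauses of $\formula{i_0}$ used in the derivation may themselves have been weakened), so as written the case is open.
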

%\begin{proof}
\begin{proof}\upshape
To show our lemma we use the following claim.
\begin{claim}\label{cl:six}
Let $(\psi,\phi)$ be the safe pair chosen at iteration $n$
of the `while loop' by Algorithm~\ref{alg:consensus}. 
For all iterations $k\geq n$ and all $\phi^{+p}$ 
used to replace $\phi$ in Line~7 of Algorithm~\ref{alg:consensus}
(at iteration $n$),
there is $q\in\overline{p}$ such that
  $\phi^{q\setminus p}$ is incoherent with $\formula{}^k$.
\end{claim}
The proof of this claim follows from the fact that,
since $(\psi,\varphi)$ is safe, 
it has no parent node in the dependency graph
of $\formula{}^n$. This means that 
\begin{itemize}
\item $\psi\Rightarrow_{\formula{}^n} \phi$, 
%\item 
$\cons{\psi}\in \overline{\cons{\phi}}$,
and 
\item there is no 
pair $(\psi',\phi')$ (in the dependency graph of $\formula{}^n$) such that 
$\phi'\neq \phi$ and
$\phi'$ occurs in a derivation of $\phi$ w.r.t. $\psi$ and $\formula{}$.
\end{itemize}
This means that, for all $k\geq n$, 
we have that $\psi\Rightarrow_{\formula{}^k} \phi$, 
and
$\cons{\psi}\in \overline{\cons{\phi}}$ (because none of the
rules involved in the derivation are incoherent, so the algorithm
will not change them).
In other words,
for all iterations $k\geq n$ and all $\phi^{+p}$ 
used to replace $\phi$ in Line~7 of Algorithm~\ref{alg:consensus}
(at iteration $n$),
there is $q\in\overline{p}$ such that
  $\phi^{q\setminus p}$ is incoherent with $\formula{}^k$.
Indeed, we can take $q=l\in\ant{\psi}$ (see Line~7 of Algorithm~\ref{alg:consensus}).
%
%\todo{lemmas need to be fixed because of new def coherence}

\smallskip

To finish the proof of this lemma we make a case distinction.
Let \Fmc be the Horn expression returned by Algorithm~\ref{alg:consensus}.
Either a rule in \Fmc has been added at some iteration in Line~7 
(and never replaced again by Lemma~\ref{lem:tech-loop}) or 
it belongs to $\Fmc^1$.
In the former case, the claim ensures that the rule satisfies the statement of this lemma,
which corresponds to (P6) in Definition~\ref{def:consensus}.
In the latter, since $\bigcup^n_{i=1}\formula{i}$ is not redundant,
it cannot be that $\formula{i}\models\phi^{-p}$
for some $i\in\{1,\ldots,n\}$. 
\end{proof}

\main*
\begin{proof} \upshape
%~ \begin{lemma}\label{lem:complexity}
%~ Given a non-redundant, not in conflict, acyclic %\stronglyacylic 
%~ Horn expression $\formula{}$ and a background theory $\BK$ as input,
%~ Algorithm~\ref{alg:consensus} terminates in polynomial time w.r.t.
%~ $|\formula{}|+|\BK|$.
%~ \end{lemma}
%~ \begin{proof}\upshape
%We argue that  only clauses in $\formula{}^{1}$   are
%replaced in Line~\ref{ln:replace},
%. Since
%in each iteration a clause 
%, 
%not the `weakened' clauses added by the algorithm
%in Line~\ref{ln:replace}.
%Then, we argue that (2) a `weakened' clause can only be 
%replaced in Line 7 by an element of $\formula{}^1$ at most the number of 
%symbols occurring in $\formula{}^1$, which is linearly bounded by the 
%size of the input. 
 %
%We start with showing (1). 
%
%
%Property $(\dagger)$  
We first argue that Algorithm~\ref{alg:consensus} terminates in 
$O((|\bigcup^n_{i=1}\formula{i}|+|\Bmc|)^4)$.
%polynomial time w.r.t.
%$|\bigcup^n_{i=1}\formula{i}|+|\BK|$.
We first point out that
checking whether $\bigcup^n_{i=1}\formula{i}$
is cyclic, redundant, and in conflict
can all be performed in $O((|\bigcup^n_{i=1}\formula{i}|+|\Bmc|)^2)$.
Lemma~\ref{lem:tech-loop} bounds the number of iterations of the main loop
to $|\bigcup^n_{i=1}\formula{i}|$
 because it implies that 
only clauses in $\formula{}^1:=\bigcup^n_{i=1}\formula{i}$ can be replaced.
Since safe pairs can only come from clauses in $\formula{}^1$ (Lemma~\ref{lem:tech-loop}),
one does not need to compute incoherence in each iteration of the main loop
(after the first computation, it is enough to keep track of the incoherent clauses in safe pairs). 
%and
%there are only $|\formula{}^1|$ of them.
%~ We now argue about (2).
%~ By (1), for all $\phi' \in \formula{}$,  either $\phi' \in \formula{}^1$
%~ or it is of the form $\phi^{+p}$ for some $\phi\in \formula{}^1$.
%~ So in fact, for all $\phi'\in\formula{}\setminus \formula{}^1$,
 %~ there is (at least one) $\phi\in \formula{}^1$
 %~ s.t. $\phi'$ is of the form $\phi^{+p}$. 
%~ In Line 7, only clauses not in  $\formula{}^1$ can be 
%~ replaced (by a clause in $\formula{}^1$).
%~ By the `if' condition in Line 6, \todo{... on going}
%
%This bounds the number
%of the iterations of Algorithm~\ref{alg:consensus} to
%a quadratic number in the size of the input.  
%$|\formula{}^1|^2$
%(if the theory is non-redundant then once clauses have been removed  they cannot be reinserted).
By Lemma~\ref{lem:safe-exists},
Line~6 %of Algorithm~\ref{alg:consensus}
%computing the length of
%a derivation (if it exists) 
can be computed in quadratic time.
Line~7 %of Algorithm~\ref{alg:consensus}
%computing the length of
%a derivation (if it exists) 
can  be computed in cubic time.
%One can see that the other steps of the algorithm can also be performed in polynomial time
%in $|\formula{}^1|+|\BK|$. 
%

We now argue that the output \formula{} of
Algorithm~\ref{alg:consensus} is a common ground for
$\bigcup^n_{i=1}\formula{i}$ and \BK.
%Let \formula{} be the output of
%Algorithm~\ref{alg:consensus} with
%an acyclic, non-redundant, not in conflict Horn expression $\bigcup^n_{i=1}\formula{i}$ and \BK as input.
\begin{description}
\item[(P1)] It is satisfied as this is the condition of the ``while'' loop.
\item[(P2)] It is also satisfied because, if $\bigcup^n_{i=1}\formula{i}$ is coherent,
the algorithm does not enter in the ``while'' loop and simply returns $\bigcup^n_{i=1}\formula{i}$. 
\item[(P3)]  By Lemma~\ref{lem:pthree}, (P3) is satisfied.
\item[(P4)] By Lemma~\ref{lem:tech-loop}, 
only clauses in $\bigcup^n_{i=1}\formula{i}$ are weakened.
So, for all $\phi\in\formula{}$, either 
$\phi\in\bigcup^n_{i=1}\formula{i}$
or it results from adding an atom $p$ to the antecedent 
 of a clause  $\phi'$ in $\bigcup^n_{i=1}\formula{i}$. This means that 
 (P4) is satisfied. 
\item[(P5)]  We also have (P5) since, by Lemma~\ref{lem:not-empty}, 
 %there is always 
 at least one weakened version of a replaced clause
 remains in $\formula{}$. 
\item[(P6)]  By Lemma~\ref{lem:psix}, (P6) is satisfied.
 %\qed %the %that replaces
%\item[(P6)]  By Lemma~\ref{lem:psix}, we have (P6).
%\item[(P7)] 
%Finally, we argue that (P7) is satisfied.
%
%(P7)
%~ This postulate requires that there is no $\phi_\ast$ such that 
%~ $\formula{}\not\models \phi_\ast$ and
%~ $\formula{}\cup\{\phi_\ast\}$ is a possible compromise.
%~ By (P4),  all clauses $\phi$ in a possible compromise satisfy the property that
%~ %are such that
%~ there is  $\phi'\in\bigcup^n_{i=1}\formula{i}$
%~ such that $\{\phi'\}\models\phi$.
%~ %
%~ So, all we need to argue is that if a clause in $\bigcup^n_{i=1}\formula{i}$
%~ has been replaced by weaker versions then all of these clauses
%~ that satisfy (P6) are indeed added to $\formula{}$. This is what
%~ happens in Line~\ref{ln:replace} of Algorithm~\ref{alg:consensus}.\qed
\end{description} 
\end{proof}

\end{document}